%% file: main_icml.tex
\documentclass{article}

\usepackage{microtype}
\usepackage{graphicx}
\usepackage{subcaption}
\usepackage{booktabs} 

\usepackage{hyperref}

\usepackage[accepted]{icml2026}

\usepackage{amsmath}
\usepackage{amssymb}
\usepackage{mathtools}
\usepackage{amsthm}

\usepackage[capitalize,noabbrev]{cleveref}

\usepackage[textsize=tiny]{todonotes}

\usepackage{enumerate}
\usepackage{bm,bbm}
\usepackage[shortlabels]{enumitem}
\usepackage{algorithm, algorithmic}
\usepackage{dsfont}
\usepackage{xspace}
\usepackage{pifont}
\usepackage{xcolor}
\usepackage{cleveref}
\usepackage{dsfont}
\numberwithin{equation}{section}
\usepackage{apptools}
\AtAppendix{\counterwithin{thm}{section}}

\input{math_commands}
\input{def}

\newcommand{\cirone}{\text{\ding{172}}}
\newcommand{\cirtwo}{\text{\ding{173}}}
\newcommand{\cirthree}{\text{\ding{174}}}

\renewcommand{\P}{\mathbb{P}}
\renewcommand{\eqref}[1]{(\ref{#1})}
\usepackage{colortbl}
\definecolor{Ocean}{RGB}{129,194,234}

\icmltitlerunning{Multi-Objective Learning for Diffusion Models: A Statistical Theory under Semi-Supervised Learning}

\begin{document}

\twocolumn[
  \icmltitle{Multi-Objective Learning for Diffusion Models: A Statistical Theory under Semi-Supervised Learning}



  \icmlsetsymbol{equal}{*}

  \begin{icmlauthorlist}
    \icmlauthor{Ziheng Cheng}{equal,ucb}
    \icmlauthor{Yixiao Huang}{equal,ucb}
    \icmlauthor{Hanlin Zhu}{ucb}
    \icmlauthor{Haoran Geng}{ucb}
    \icmlauthor{Somayeh Sojoudi}{ucb}
    \icmlauthor{Jitendra Malik}{ucb}
    \icmlauthor{Pieter Abbeel}{ucb}
    \icmlauthor{Xin Guo}{ucb}
  \end{icmlauthorlist}

  \icmlaffiliation{ucb}{University of California, Berkeley}

  \icmlcorrespondingauthor{}{\texttt{ziheng\_cheng@berkeley.edu}}
  \icmlcorrespondingauthor{}{\texttt{yixiaoh@berkeley.edu}}

  \icmlkeywords{Machine Learning, ICML}

  \vskip 0.3in
]



\printAffiliationsAndNotice{\icmlEqualContribution}

\begin{abstract}
Diffusion models are increasingly used as powerful conditional generators, yet real deployments often involve multiple target distributions arising from different tasks, e.g., diverse prompt domains in text-to-image generation, or multiple environments in robotics with diffusion policies.
This naturally leads to a multi-objective learning (MOL) problem.
A key challenge is that achieving good Pareto trade-offs can require a generalist model class with substantially larger capacity than what suffices for solving any individual task, thereby increasing statistical cost since sample complexity typically scales with the model complexity.
To reconcile this, we develop a principled MOL framework for diffusion models with limited data: a semi-supervised regime where paired (labeled) samples are scarce, but (unlabeled) condition data are abundant. 
We propose a two-stage training procedure that first fits lightweight specialist models from limited paired data, and then distills them into a generalist model by generating pseudo-samples.
We establish generalization bounds showing that the required number of paired samples only depends on the complexity of the specialist model classes.
We further extend the theory to diffusion policies for sequential decision making to account for distribution shift in on-policy rollouts.
Extensive experiments on robotic control and image restoration tasks are conducted to verify our theoretical results.




\end{abstract}

\section{Introduction}
Diffusion models have emerged as a versatile paradigm for conditional generation, powering applications such as text-to-image synthesis \citep{ho2020denoising,song2020score,ho2022classifier,rombach2022high}, image restoration and inverse problems \citep{saharia2022palette, kawar2022denoising}, robotic control via diffusion policies \citep{janner2022planning,chi2025diffusion}, and life science \citep{song2021solving,guo2024diffusion}.
In many real-world deployments, however, one rarely faces a \emph{single} target distribution.
Instead, practitioners often face \emph{multiple} distributions of interest induced by different domains, tasks, or environments.
For instance, an image system may be expected to perform robustly across heterogeneous prompt domains (e.g., open-domain prompts vs.\ technical jargon \citep{kather2022medical}), corruption patterns (e.g., inpainting vs. blurring) and visual styles (e.g., photorealistic vs.\ anime \citep{nichol2021glide}); while a diffusion policy may need to imitate expert behaviors across multiple environments with different rewards (e.g., goal-reaching vs.\ safety) and dynamics (e.g., changing camera viewpoints \citep{ze20243d}).
This motivates a \emph{multi-objective learning} (MOL) \citep{jin2007multi} perspective: learn a single conditional diffusion model that attains Pareto-optimal performance across the target distributions and supports explicit trade-offs among objectives.

However, training a generalist diffusion model typically relies on a large amount of paired data $(x,y)$, where $y$ is the condition and $x$ is the generated variable.
In many settings, acquiring paired samples is expensive (e.g., collecting curated paired text--image data, or expert human demonstrations in robotics), whereas obtaining \emph{unpaired conditions} is substantially cheaper (e.g., abundant text prompts, states or observations).
This creates an asymmetric regime: we have limited paired samples from each objective, but abundant access to the marginal distribution of conditions, also known as semi-supervised or weak supervision regime.

\begin{figure*}[ht]
    \centering
    \includegraphics[width=\linewidth]{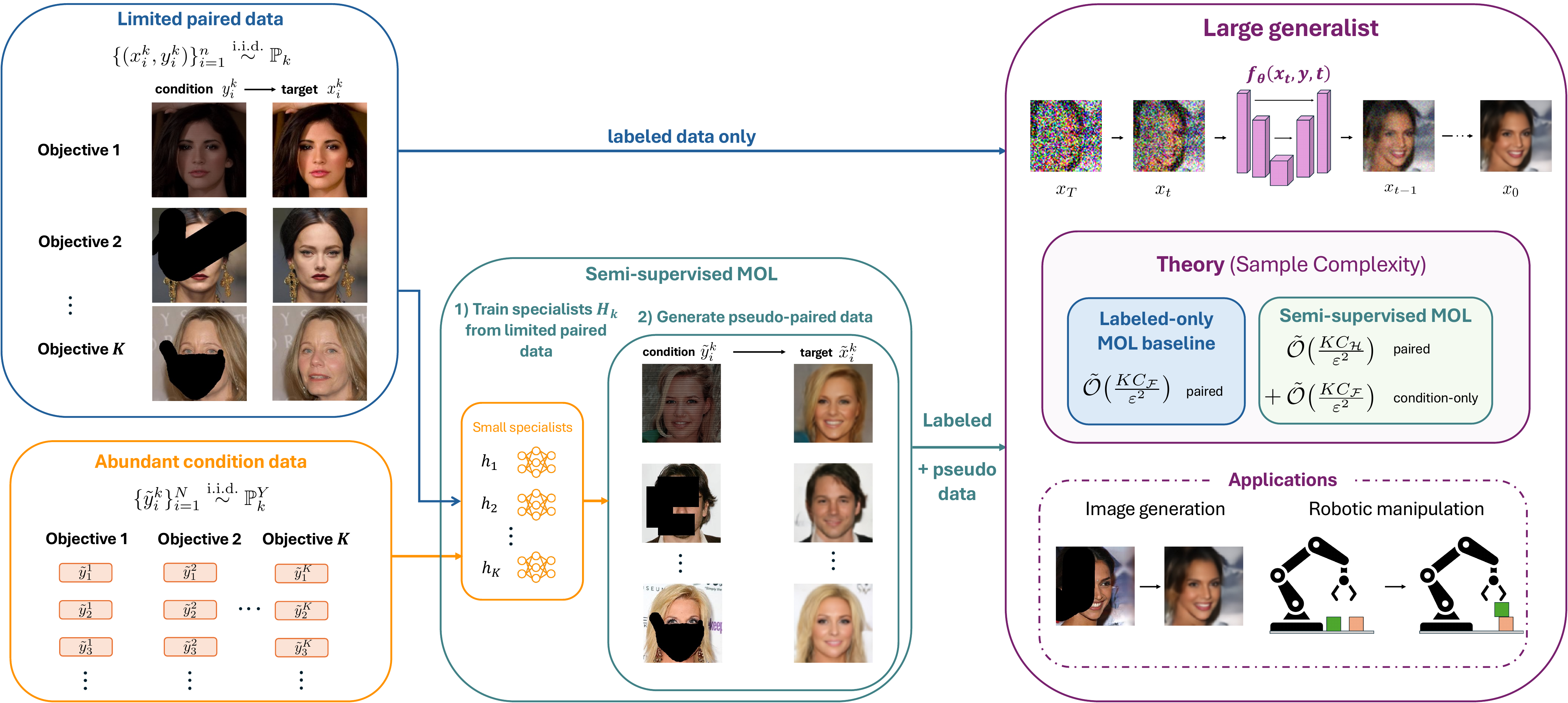}
\caption{
Semi-supervised multi-objective learning for conditional diffusion models. 
Given limited paired data $\{(x_i^k,y_i^k)\}_{i=1}^n \stackrel{\text{i.i.d.}}\sim \P_k$ and abundant condition-only data $\{\widetilde y_i^k\}_{i=1}^N \stackrel{\text{i.i.d.}}\sim \P_k^Y$ for each objective $k$, we first train lightweight specialists $\widehat h_k \in \mathcal H_k$ and use them to generate pseudo-paired data. 
A large diffusion generalist in class $\mathcal F$ is then trained on labeled and pseudo-labeled data. 
This reduces the paired-sample complexity from scaling with the generalist complexity $C_\mathcal F$ to scaling with the specialist complexity $C_\mathcal H$ where $C_\mathcal H \ll C_\mathcal F $.
}
    \label{fig:alg}
\end{figure*}
A central question of this paper is:
\begin{center}
    \textit{Can we learn a Pareto optimal diffusion model in a sample-efficient manner by leveraging weakly supervised data?}
\end{center}
One key observation is that, while training a \emph{generalist} model that performs well across many objectives is statistically expensive, learning a \emph{specialist} model for each individual objective can be far more sample-efficient: the specialist model is typically lightweight with lower capacity, and can be learned from substantially fewer paired samples.
For example, in robotics, training a specialist diffusion policy for a single manipulation task (e.g., \textsc{Push-T}) can often be done with a relatively small backbone (e.g., a $\sim$12M-parameter ResNet \citep{chi2025diffusion}), whereas learning a generalist vision--language--action policy that covers diverse tasks typically demands far more data (e.g., OpenVLA at $\sim$7B parameters \citep{kim2024openvla}).

This motivates a two-stage learning algorithm: we first fit the specialist diffusion models respectively using limited paired data, and then use them to generate pseudo-samples that facilitate learning a generalist model.
This procedure is simple, broadly applicable, and aligns with how diffusion models are commonly bootstrapped in practice \citep{you2023diffusion,li2024diffusion,xiao2025self}. 
A closely related work \citep{wegel2025sample} has also studied similar semi-supervised learning algorithms for prediction tasks. To our knowledge, however, a comparable theoretical understanding of the statistical rates for multi-objective generative models remains missing.

In this paper, we take the first step towards addressing the central question above and develop a statistical theory for MOL of diffusion models under semi-supervised learning. Our main contributions are summarized as follows:
\begin{itemize}
    \item In \Cref{sec:setup}, we formalize learning a single diffusion model across multiple target distributions through Pareto optimality, characterized by proper scalarizations.
    We propose a practical semi-supervised training pipeline that first learns lightweight specialist diffusion models for each objective from limited paired data, and then trains a generalist diffusion model using pseudo-samples generated by the specialists.

    \item In \Cref{subsec:generalization}, we provide nonasymptotic generalization bounds on the score matching objectives and derive distributional estimation guarantees (in total variation distance) under general scalarizations. Notably, our results indicate that required number of paired (labeled) samples depends only on the complexity of the specialist model classes, rather than the large generalist class. In \Cref{subsec:generalization_local}, we further sharpen the rates for
    linear (weighted-sum) scalarizations via a more refined analysis.

    \item  In \Cref{subsec:generalization_mdp}, we study diffusion policies in sequential decision making and address the practical distribution shift issue induced by on-policy rollouts. We extend our analysis and establish, to our knowledge, the first theoretical guarantee on the sub-optimality gap of diffusion policies in imitation learning.
\end{itemize}
Overall, this paper provides a principled MOL framework for diffusion models with limited data, bridging theory and practice for semi-supervised learning.
We also conduct extensive experiments on robotic manipulation and image restoration tasks in \Cref{sec:exp} to verify the theoretical results.

\section{Preliminaries and Problem Setup}\label{sec:setup}

\paragraph{Notations}
We use $x$ and $y$ to denote the data and conditions, respectively.
The blackboard bold letter $\P$ represents the joint distribution of $(x,y)$, while the lowercase $p$ denotes its density function. $\P^Y$ stands for the marginal distribution of condition $y$.
The notation $\Delta(\Omega)$ denotes the probability simplex over a sample space $\Omega$ and $\Delta^K$ is $(K-1)$-dimensional standard simplex. 
We use $|x|$ to denote the coordinate-wise absolute value.
The norm $\|\cdot\|$ refers to the $\ell_2$-norm  for vectors and the spectral norm for matrices, and $\|\cdot\|_p$ denotes general $\ell_p$-norm.
Finally, we use standard $\lesssim,\mathcal{O}(\cdot)$ to omit constant factors and $\widetilde{\mathcal{O}}(\cdot)$ to omit logarithmic factors.

\subsection{Conditional Diffusion Models}

Let $\R^{d_x}$ denote the data space and $[0,1]^{d_y}$ denote the condition space.
Let $\P$ be any joint distribution over $\R^{d_x}\times[0,1]^{d_y}$ with density $p$ and $\P(\cdot|y)$ be the conditional distribution with density $p(\cdot|y)$.
As in diffusion models, the forward process is defined as an Ornstein–Uhlenbeck (OU) process,
\begin{equation*}
    \dif X_t = -X_t\dif t + \sqrt{2}\dif W_t, X_0\sim \P(\cdot|y),
\end{equation*}
where $\{W_t\}_{t\geq 0}$ is a standard Wiener process. We denote the distribution of $X_t$ as $\P_t(\cdot|y)$.
Note that the limiting distribution $\P_\infty(\cdot|y)$ is a standard Gaussian $\mathcal{N}(0,I)$.

To generate new samples, we can reverse the forward process from any $T>0$ and $X_0^\leftarrow\sim \P_T(\cdot|y)$:
\begin{equation}\label{eq:backward_sde}
    \dif X_t^{\leftarrow} = (X_t^{\leftarrow} + 2\nabla\log p_{T-t}(X_t^{\leftarrow}|y))\dif t + \sqrt{2}\dif \overline{W}_t.
\end{equation}
Alternatively one may replace \Cref{eq:backward_sde} by probability flow ODE \citep{song2020score}
\begin{equation}\label{eq:backward_ode}
    \dif X_t^{\leftarrow} = (X_t^{\leftarrow} + \nabla\log p_{T-t}(X_t^{\leftarrow}|y))\dif t.
\end{equation}
In general, one does not have access to the exact conditional score function $\nabla\log p_{T-t}$, which needs to be estimated.
For any $(x,y)\sim\P$ and a score estimator $s$, define the individual denoising score matching objective \citep{vincent2011connection} with early stopping as
\begin{equation*}
    \ell(x,y,s):=\E_{t,x_t|x}\big[\|s(x_t,y,t)-\nabla\log \phi_t(x_t|x)\|^2\big],
\end{equation*}
where $t\sim \text{Unif}([T_0,T])$, and $\phi_t(x_t|x)=\mathcal{N}(x_t|\alpha_tx,\sigma_t^2I)$ with $\alpha_t=e^{-t}$ and $\sigma_t^2=1-e^{-2t}$ is the transition kernel of $x_t|x_0=x$. 
Note that here $T_0$ is an early-stopping time.
The population error of score matching is
\begin{equation*}
    \begin{aligned}
        L_\P(s)
        &:=\E_{(x,y)\sim\P}\E_{t,x_t}[\|s(x_t,y,t)-\nabla\log p_t(x_t|y)\|^2]\\
        &=\E_{(x,y)\sim\P}[\ell(x,y,s)-\ell(x,y,s_\P^*)].
    \end{aligned}
\end{equation*}
Here $s_\P^*$ denotes the true score function.

With a score estimator $\widehat{s}$, the generative process is to simulate from $\widehat{X}_0^\leftarrow\sim \mathcal{N}(0,I)$ and replace the score function in \Cref{eq:backward_sde} or \Cref{eq:backward_ode} with $\widehat{s}$.
Given the more desirable theoretical properties of SDE sampler over its ODE alternatives, our analysis focuses exclusively on \Cref{eq:backward_sde} and we denote the resulting conditional distribution by $\P_{\widehat{s}}(\cdot|y)$.

\subsection{Multi-Objective Learning}\label{subsec:mol}
Let $\mathbb{P}_1,\cdots,\mathbb{P}_K$ be $K$ target distributions of interest over $\R^{d_x}\times[0,1]^{d_y}$.
We denote the ground truth score function for each distribution by $s_k^*(x,y,t)$.
Given a score function class $\mathcal{F}$, we aim to learn a single estimator $f\in\mathcal{F}$ and the corresponding diffusion model $\P_{f}$ that performs well simultaneously across all $K$ conditional distributions.
Concretely, we seek for the \textit{Pareto optimal} set defined as follows.
\begin{definition}[Pareto optimality]
    Given discrepancies $\{d_k(\cdot,\cdot)\}_{k=1}^K$ between two distributions, we say a score function $f\in\mathcal{F}$ is Pareto optimal in $\mathcal{F}$ if there does not exist $f'\in\mathcal{F}$ such that, for all $k\in[K]$,
    \begin{equation*}
        \E_{y\sim\P_k^Y}[d_k(\P_{f'}(\cdot|y),\P_k(\cdot|y))]\leq\E_{y\sim\P_k^Y}[d_k(\P_f(\cdot|y),\P_k(\cdot|y))],
    \end{equation*}
    with strict inequality for at least one $k$.
\end{definition}
Here $d_k(\cdot,\cdot)$ can be chosen as a distributional distance when the goal is distribution estimation, or a performance metric such as sub-optimality gap in the context of sequential decision making. 

In fact, each Pareto optimal model corresponds to a distinct trade-off among the $K$ objectives and such trade-offs can be characterized by scalarization functions $\mathcal{S}:\R^K\to\R$,
\begin{equation}
    f_\mathcal{S}^*=\argmin_{f\in\mathcal{F}}\mathcal{S}\big(\big(\E_{y\sim\P_k^Y}[d_k(\P_f(\cdot|y),\P_k(\cdot|y))]\big)_k\big).
\end{equation}
Following \citet{wegel2025sample}, we assume $\mathcal{S}$ satisfies (i) reverse triangle inequality and (ii) positive homogeneity:
\begin{equation*}
    \big|\mathcal{S}(\vec{u})-\mathcal{S}(\vec{v})\big|\leq
    \mathcal{S}\big(|\vec{u}-\vec{v}|\big),
    \mathcal{S}(\alpha\vec{u})=\alpha\mathcal{S}(\vec{u}),
    \ \forall~\vec{u},\vec{v},\ \alpha\geq 0.
\end{equation*}
Common scalarization methods include linear combination $\mathcal{S}(\vec{u})=\sum_k\lambda_k\vec{u}_k$ for $\lambda\in\Delta^K$, Chebyshev aggregation $\mathcal{S}(\vec{u})=\max_k \vec{u}_k$ and general $\ell_p$-norm $\mathcal{S}(\vec{u})=\|u\|_p$.
More examples can be found in \citet{ehrgott2005multicriteria,miettinen1999nonlinear}.

\subsection{Two-Stage Semi-supervised Training}
As discussed earlier, in order to achieve reasonable Pareto trade-offs across multiple objectives, the \emph{generalist} function class $\mathcal{F}$ must be sufficiently expressive--often larger than what is necessary to solve any single task in isolation.
If we assume that for each $k$, there is a lightweight \emph{specialist} score function class $\mathcal{H}_k\subset\mathcal{F}$, i.e., a restricted hypothesis space tailored to objective $\P_k$, then  
learning a CDM $h_k\in\mathcal{H}_k$ that approximates $\mathbb{P}_k$ would be relatively easy and sample-efficient, as $\mathcal{H}_k$ has substantially smaller complexity than the full class $\mathcal{F}$.
This motivates a two-stage semi-supervised learning strategy: we first fit the specialist models
$\{\mathbb{P}_{h_k}\}_{k=1}^K$ respectively, which are then distilled into a generalist model by generating pseudo-samples, as illustrated in \Cref{fig:alg}.

Mathematically, for each $k\in[K]$, suppose we have limited pairs $\{(x_i^k,y_i^k)\}_{i=1}^{n}\overset{\mathrm{i.i.d.}}{\sim}\P_k$ and abundant condition samples $\{\widetilde{y}_i^k\}_{i=1}^{N}\overset{\mathrm{i.i.d.}}{\sim}\P_k^Y$ with $N\gg n$.
In the first stage, we train a specialist model for each task using paired data:
\begin{equation}
    \widehat{h}_k:=\argmin_{h_k\in\mathcal{H}_k} \frac{1}{n}\sum_{i=1}^{n}\ell(x_i^k,y_i^k,h_k).
\end{equation}
In the second stage, given a scalarization $\mathcal{S}$, we generate pseudo-samples $\widetilde{x}_{i}^k$ from $\P_{\widehat{h}_k}(\cdot|\widetilde{y}_i^k)$ and then train the generalist model:
\begin{equation}
    \begin{gathered}
        \widehat{f}_\mathcal{S}:=\argmin_{f\in\mathcal{F}}\mathcal{S}(\widetilde{L}_1(f),\cdots,\widetilde{L}_K(f)),\\
        \widetilde{L}_k(f):=\frac{1}{N}\sum_{i=1}^{N}[\ell(\widetilde{x}_{i}^k,\widetilde{y}_i^k,f)-\ell(\widetilde{x}_{i}^k,\widetilde{y}_i^k,\widehat{h}_k)].
    \end{gathered}
\end{equation}
\paragraph{Examples} Below we present two typical real-world applications of semi-supervised MOL setup in CDMs.
\begin{itemize}
    \item Image generation:
    A conditional diffusion model learns $p(x|c)$, where $c$ denotes a text prompt or an image to be edited.
    Multiple target distributions arise naturally from heterogeneous data sources, e.g., different text domains, image styles or human preferences.
    The goal is to train a single conditional model capable of fitting these heterogeneous data distributions simultaneously.
    In practice, text prompts or corrupted images are easy to collect at scale from web,
    whereas high-quality paired text/image--image annotations are scarce and costly to curate \citep{wang2023diffusiondb}.
    \item Diffusion policies:
    For robotic control, a diffusion policy $\pi(a|s)$ models the action distribution conditioned on state \citep{chi2025diffusion}.
    In multi-task or goal-conditioned robotics, each environment (with different rewards or dynamics) and corresponding optimal policy induce a distinct state--action distribution $\mathbb{P}(s,a)$.
    The multi-objective learner must capture the diverse optimal policies within a unified policy model.
    Expert demonstrations providing paired $(s,a)$ supervision are expensive to collect \citep{xiao2025self}.
\end{itemize}

\section{Main Results}
In this section, we present a statistical theory of semi-supervised learning algorithm for MOL in diffusion models.
All formal statements and detailed proofs are provided in Appendix \ref{app:sec:proof}.

Throughout this paper, we make the following standard and mild regularity assumptions on the data distribution $\P$ and score function classes.
\begin{asp}[Sub-gaussian tail]\label{asp:sub_gaussian}
    For any $k\in[K]$, $\P_k$ is supported on $\R^{d_x}\times[0,1]^{d_y}$ and admits a continuous density $p_k(x,y)\in \mathcal{C}^2(\R^{d_x}\times[0,1]^{d_y})$. 
    Moreover, the conditional distribution $p_k(x|y)\leq C_1\exp(-C_2\|x\|^2)$ for some constant $C_1,C_2$.
\end{asp}
\begin{asp}[Lipschitz score]\label{asp:lip}
    For any $h\in\cup_k\mathcal{H}_k\cup\mathcal{F}$, $h$ has linear growth in $x$, i.e., $\|h(x,y,t)\|\leq M_0+M_1\|x\|$ for some $M_0,M_1\geq 1$.
\end{asp}
In fact, our analysis can be seamlessly extended to more general polynomial growth conditions of the form $\|h(x,y,t)\|\leq \poly(\|x\|)$.
We adopt the linear-growth assumption here for ease of exposition.
Moreover, as shown in \citet[Lemma 3.1]{cheng2025provable}, this assumption holds for ground truth score $s_k^*(x,y,t)$ as long as $\nabla\log p_k(x|y)$ is Lipschitz, which is a standard and mild regularity assumption in the literature \citep{chen2022sampling,chen2023score}.
Following \citet{wegel2025sample}, we also impose the following realizability assumption for the specialist function classes.
\begin{asp}[Realizability]\label{asp:realizability}
    The ground truth score function $s_k^*\in\mathcal{H}_k$ for any $k\in[K]$.
\end{asp}
This assumption is crucial because pseudo samples are drawn from the learned specialists in the second stage; if a specialist class is misspecified, the induced distribution $\mathbb{P}_{\widehat{h}_k}$ may be systematically biased away from $\mathbb{P}_k$, and this bias would propagate to the generalist training, introducing an irreducible error that is not controlled by sample size.

To quantify the complexity of a score function class, we introduce the following notion of covering number:
\begin{definition}[Covering number]
    Let $(\Psi,\|\cdot\|)$ be a metric space of functions. We say $\Psi_0\subseteq\Psi$ is a $\rho$-net of $\Psi$ if for any $\psi\in\Psi$, there exists $\psi_0\in\Psi_0$ such that $\|\psi-\psi_0\|\leq \rho$. The covering number $\mathcal{N}(\Psi,\|\cdot\|,\rho)$ is defined as the minimal cardinality of a $\rho$-net of $\Psi$.
\end{definition}
For the score function classes $\mathcal{F},\{\mathcal{H}_k\}_k$, we consider the $L^\infty$-metric on a compact domain $\Omega_R:=[-R,R]^{d_x}\times[0,1]^{d_y}\times[T_0,T]$ and write $\|\cdot\|_{L^\infty(\Omega_R)}$ accordingly. Here the truncation radius is taken as $R=\mathcal{O}(\log(NK))$ (see formal derivation in the proof).
In the sequel we abbreviate $\log\mathcal{N}(\mathcal{F},\rho)=\log\mathcal{N}(\mathcal{F},\|\cdot\|_{L^\infty(\Omega_R)},\rho)$ and use the analogous shorthand for $\{\mathcal{H}_k\}_k$ for notational simplicity.
We remark that for commonly used neural network families $\Psi$ with $L^\infty$-metric, the log-covering number typically scales as $\log\mathcal{N}(\Psi,\rho)\leq \mathcal{C}_\Psi\log(1/\rho)$ and $C_\Psi$ is a complexity parameter that scales with the effective size of the architecture (and in particular is closely tied to the number of trainable parameters and depth/width).
For instance, for an $L$-layer MLP of width $W$, one can upper bound the complexity by
$\mathcal{C}_{\mathrm{MLP}}\lesssim L W^{2}\log(LW)$ \citep{chen2022nonparametric}, while for a Transformer with $L$ layers, $M$ attention heads, and embedding dimension $D$, a typical bound yields
$\mathcal{C}_{\mathrm{TF}}\lesssim M L^{2} D^{2}$ \citep{lin2023transformers}.

\subsection{MOL for Conditional Diffusion Models}\label{subsec:generalization}
Since the pseudo-samples are generated by the specialist model $\P_{\widehat{h}_k}$, we first need to show that $\P_{\widehat{h}_k}$ can approximate each individual target distribution $\P_k$ using paired samples in the first stage. 
\begin{lemma}[Informal version of Lemma \ref{app:lem:generalization_small}]\label{lem:generalization_small}
    It holds with probability no less than $1-\delta$ that for any $k\in[K]$,
    \begin{equation}
        L_{\P_k}(\widehat{h}_k)\lesssim \frac{\log\mathcal{N}(\mathcal{H}_k,\frac{1}{n^2})+\log(K/\delta)}{n}.
    \end{equation}
\end{lemma}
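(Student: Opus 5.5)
The plan is a standard fast-rate ERM argument for denoising score matching under realizability. For each $k$, realizability (Assumption~\ref{asp:realizability}) gives $s_k^*\in\mathcal{H}_k$, so the empirical risk minimizer $\widehat{h}_k$ satisfies
\[
\frac{1}{n}\sum_{i=1}^n\big[\ell(x_i^k,y_i^k,\widehat{h}_k)-\ell(x_i^k,y_i^k,s_k^*)\big]\le 0 .
\]
Define the excess loss $g_h(x,y):=\ell(x,y,h)-\ell(x,y,s_k^*)$. Its mean is $\E_{\P_k}[g_h]=L_{\P_k}(h)\ge 0$. The goal is a uniform Bernstein-type deviation bound over $h\in\mathcal{H}_k$ of the form
\[
L_{\P_k}(h)-\widehat{\E}_n g_h\le \tfrac12 L_{\P_k}(h)+O\!\Big(\tfrac{\log\mathcal{N}+\log(K/\delta)}{n}\Big).
\]
Applying this to $\widehat{h}_k$ and rearranging yields the claim. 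A union bound over $k\in[K]$ accounts for the $\log K$ term.

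\textbf{Variance control (key step).} Expand the loss:
\[
g_h=\E_{t,x_t|x}\big[\|h-s_k^*\|^2+2\langle h-s_k^*,\,s_k^*-\nabla\log\phi_t(x_t|x)\rangle\big].
\]
The cross term has zero mean conditional on $(x_t,y,t)$ but not on $x$; its magnitude involves $\|\nabla\log\phi_t\|\sim\|x_t-\alpha_tx\|/\sigma_t^2$, which is of order $\sigma_t^{-1}\lesssim T_0^{-1/2}$. By Cauchy--Schwarz (Jensen over $t,x_t$),
\[
\E[g_h^2]\lesssim B\cdot\E\|h-s_k^*\|^2=B\,L_{\P_k}(h),
\]
where $B$ collects $\sup\|h-s^*\|^2$ together with the noise-score second moment. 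This variance bound is what makes a fast $1/n$ rate possible. Under the linear-growth condition (Assumption~\ref{asp:lip}) and sub-Gaussian tails (Assumption~\ref{asp:sub_gaussian}), $B$ is polylogarithmic after truncation, with factors $M_0,M_1,1/T_0$ hidden in $\lesssim$.

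\textbf{Truncation and covering.} The loss is unbounded because $x$ and $x_t$ have Gaussian-type tails. I would truncate to the event $\|x\|_\infty,\|x_t\|_\infty\le R$ with $R=O(\sqrt{\log(nK/\delta)})$, the paper's $R=O(\log(NK))$ being a safe choice. By the sub-Gaussian tails and linear growth, the contribution of the complementary region to both $\E g_h$ and the empirical average is $O(1/n)$ with high probability. Inside $\Omega_R$, take a $\rho$-net of $\mathcal{H}_k$ in $L^\infty(\Omega_R)$ with $\rho=1/n^2$. On $\Omega_R$ the map $h\mapsto g_h$ is Lipschitz in sup-norm with constant $\mathrm{poly}(R)/\sigma_{T_0}$, so discretization costs $\mathrm{poly}(R)\rho\ll 1/n$. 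Then apply Bernstein's inequality to each net element, using the variance bound and the sup bound $\lesssim B$, and union bound over the net and over $k$. The resulting inequality
\[
\mathbb{E}g-\widehat{\E}g\le\sqrt{2B\,\E g\,\log(\mathcal{N}K/\delta)/n}+B\log(\mathcal{N}K/\delta)/n
\]
combined with AM-GM gives the desired self-bounding form.

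\textbf{Main obstacle.} The hardest part is handling the unbounded, heavy-ish noise term $\nabla\log\phi_t$ inside $\ell$ while still obtaining the variance-to-mean relation. Truncating $x_t$ interacts with the conditional-mean-zero structure, because the truncated cross term is no longer exactly mean zero. I would address this by bounding the bias introduced by truncation directly via the Gaussian tails, which is exponentially small in $R^2$ and hence $O(1/n^2)$ for the chosen $R$, rather than trying to preserve exact centering.
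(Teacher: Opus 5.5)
Your proof is correct. It shares the paper's two essential ingredients. The first is the reduction via ERM and realizability: since $s_k^*\in\mathcal{H}_k$, the empirical excess loss of $\widehat{h}_k$ is nonpositive. The second is the Bernstein-type condition obtained by Cauchy--Schwarz on $g_h=\E_{t,x_t|x}[(h-s_k^*)^\top(h+s_k^*-2\nabla\log\phi_t)]$, namely $\E[g_h^2\mathbbm{1}_{\|x\|_\infty\le R}]\lesssim M\,L_{\P_k}(h)$, with truncation in $x$ and sub-Gaussian tails absorbing the remainder.

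You differ from the paper in the concentration step. The paper feeds the variance condition into a local Rademacher complexity bound (Lemma~\ref{lem:local_rademacher}, a Bousquet-style peeling/fixed-point argument). It controls $\mathcal{R}_n(\Phi_r)$ by Dudley's entropy integral and reads off $r_n^*\lesssim\log\mathcal{N}(\mathcal{H}_k,1/n^2)/n$. You instead discretize once at scale $\rho=1/n^2$ and apply Bernstein to each net element. This is legitimate because, by the paper's definition of covering, the net lies inside $\mathcal{H}_k$, so each element obeys Assumption~\ref{asp:lip} and hence the same sup and variance bounds. You then union bound over the net and over $k$, and transfer to $\widehat{h}_k$ through the $L^\infty(\Omega_R)$ approximation. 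This is the classical finite-class fast-rate argument. It is more elementary and loses nothing here, because the paper itself bounds the Dudley integral by the single-scale entropy $\log\mathcal{N}(\mathcal{H}_k,1/n^2)$. The paper's machinery is heavier but more general: it gives two-sided uniform ratio inequalities and could exploit a genuinely convergent entropy integral for richer classes.

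Two points to tighten.

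\textbf{Dependence on $T_0$.} Because $\ell$ already averages over $t\sim\mathrm{Unif}([T_0,T])$ and over $x_t|x$, three quantities are governed by $M=8M_0^2+8d_x(M_1^2R^2+\log(1/T_0)/(T-T_0))$:
\begin{itemize}
\item the sup bound on the truncated loss;
\item the variance constant;
\item the Lipschitz constant of $h\mapsto g_h$, since by the same Cauchy--Schwarz $|g_{h_1}-g_{h_2}|\lesssim\sqrt{M}\,\|h_1-h_2\|_{L^\infty(\Omega_R)}$ plus an exponentially small tail.
\end{itemize}
This $M$ is only logarithmic in $1/T_0$. Your factors $1/T_0$ and $\mathrm{poly}(R)/\sigma_{T_0}$ come from taking a supremum over $t$ instead of the average. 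That is harmless for constant $T_0$, but the paper later takes $T_0\approx N^{-2}$. Then a discretization error of order $\rho/\sigma_{T_0}\approx N/n^2$ is not $\ll 1/n$ when $N\gg n$, and a variance constant of order $1/T_0$ would swamp the $1/n$ rate.

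\textbf{Truncating $x_t$.} There is no need to truncate $x_t$ inside the loss. The paper truncates only the sample $x$. It uses the centering identity $\E g_h=L_{\P_k}(h)$ for the \emph{untruncated} loss to dominate the truncated second moment. It invokes $x_t$-tails only when comparing $h_1,h_2$ on $\Omega_R$ in the covering step. So the loss of exact centering that you flag as the main obstacle never arises.
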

Let $\mathcal{E}_\mathcal{S}(f):=\mathcal{S}(L_{\P_1}(f),\cdots,L_{\P_K}(f))$. 
Now we are ready to present our first main result for the sample efficiency of semi-supervised learning. In particular, we show that the number of paired samples depends only on the complexity of the specialist model classes, instead of the large generalist class.
\begin{thm}[Informal version of \Cref{app:thm:generalization}]\label{thm:generalization}
    The following holds with probability no less than $1-\delta$: for any scalarization $\mathcal{S}$, $\mathcal{E}_\mathcal{S}(\widehat{f}_\mathcal{S})-\inf_{f\in\mathcal{F}}\mathcal{E}_\mathcal{S}(f)\leq \mathcal{S}(\varepsilon_1,\cdots,\varepsilon_K)$, where
    \begin{equation}\label{eq:bound_eps}
        \varepsilon_k\lesssim \sqrt{\frac{\log\mathcal{N}(\mathcal{H}_k,\frac{1}{n^2})+\log(\frac{K}{\delta})}{n}}+\sqrt{\frac{\log\mathcal{N}(\mathcal{F},\frac{1}{N})}{N}}.
    \end{equation}
\end{thm}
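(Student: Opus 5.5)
The plan is the standard ``approximate-ERM under a perturbed objective'' argument. Write $\mathcal{E}_\mathcal{S}(f)=\mathcal{S}(L_{\P_1}(f),\dots,L_{\P_K}(f))$ and $\widetilde{\mathcal{E}}_\mathcal{S}(f)=\mathcal{S}(\widetilde{L}_1(f),\dots,\widetilde{L}_K(f))$. If $\sup_{f\in\mathcal{F}}|\widetilde{L}_k(f)-L_{\P_k}(f)|\le \varepsilon_k/2$ for every $k$ simultaneously, then the reverse triangle inequality and monotonicity of $\mathcal{S}$ on the nonnegative orthant give $|\widetilde{\mathcal{E}}_\mathcal{S}(f)-\mathcal{E}_\mathcal{S}(f)|\le \mathcal{S}(\varepsilon/2)$ for all $f$. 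Monotonicity follows from reverse triangle plus homogeneity in the usual way, or we simply assume it as in \citet{wegel2025sample}. Then, for any $f\in\mathcal{F}$,
\[
\mathcal{E}_\mathcal{S}(\widehat f_\mathcal{S})\le \widetilde{\mathcal{E}}_\mathcal{S}(\widehat f_\mathcal{S})+\mathcal{S}(\varepsilon/2)\le \widetilde{\mathcal{E}}_\mathcal{S}(f)+\mathcal{S}(\varepsilon/2)\le \mathcal{E}_\mathcal{S}(f)+\mathcal{S}(\varepsilon).
\]
Taking the infimum over $f$ proves the claim. This holds for every $\mathcal{S}$ at once, because the uniform deviation event does not involve $\mathcal{S}$. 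The whole task therefore reduces to a uniform-in-$f$ deviation bound for each $k$.

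For fixed $k$ I split the deviation into three parts. Let $\widetilde{\P}_k$ denote the joint law with $y\sim\P_k^Y$ and $x\sim\P_{\widehat h_k}(\cdot|y)$. First, $\widetilde L_k(f)$ is an empirical average of $N$ i.i.d.\ draws from $\widetilde\P_k$, conditionally on stage one. Its mean is $\E_{\widetilde\P_k}[\ell(x,y,f)-\ell(x,y,\widehat h_k)]$. Second, this mean is compared with the same expression under $\P_k$. Third, $\E_{\P_k}[\ell(\cdot,f)-\ell(\cdot,\widehat h_k)]=L_{\P_k}(f)-L_{\P_k}(\widehat h_k)$, and $L_{\P_k}(\widehat h_k)$ is bounded by Lemma~\ref{lem:generalization_small}. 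Here I use realizability, so that $s_k^*\in\mathcal{H}_k$ makes $L_{\P_k}$ the excess loss over the true score.

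The first part is a standard uniform concentration over $\mathcal{F}$. I truncate $x$ to $\|x\|_\infty\le R=\mathcal{O}(\log(NK))$ and the Gaussian noise similarly, using Assumption~\ref{asp:sub_gaussian} together with a sub-Gaussian tail for $\P_{\widehat h_k}$. The latter follows from the linear growth in Assumption~\ref{asp:lip} via a Gr\"onwall/moment argument on the reverse SDE. On the truncated domain the loss is bounded by $\mathrm{poly}(R)/\sigma_{T_0}^2$ and is Lipschitz in $f$ under $L^\infty(\Omega_R)$. A $1/N$-net plus Hoeffding and a union bound over $k$ then gives $\sqrt{\log\mathcal{N}(\mathcal{F},1/N)/N}$ up to logarithmic factors.

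The second part is the main obstacle: the pseudo-label distribution shift. The loss difference $g_f=\ell(\cdot,f)-\ell(\cdot,\widehat h_k)$ is bounded (after truncation) by some $B$, so
\[
|\E_{\widetilde\P_k}g_f-\E_{\P_k}g_f|\lesssim B\,\E_y\,\mathrm{TV}(\P_{\widehat h_k}(\cdot|y),\P_k(\cdot|y)).
\]
By Girsanov/KL chain rule for the reverse SDE, together with Pinsker and Jensen, the TV term is at most $\sqrt{L_{\P_k}(\widehat h_k)}$, plus initialization error $e^{-T}$ and early-stopping error. Invoking Lemma~\ref{lem:generalization_small} yields the square-root rate $\sqrt{(\log\mathcal{N}(\mathcal{H}_k,1/n^2)+\log(K/\delta))/n}$. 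This explains why the first term in \eqref{eq:bound_eps} is a square root while the lemma gives a fast rate. Two points need care. The early-stopping mismatch between $\P_{\widehat h_k}$ (stopped at $T_0$) and $\P_k$ must be absorbed, either by comparing with $\P_{k,T_0}$ or by choosing $T_0$ and $T$ polynomially small or large. The truncation tail mass must also be handled uniformly in $f$. Finally, the third part contributes at most the fast rate from the lemma, which is dominated by its square root. A union bound over the two stages with $\delta/2$ each completes the argument.
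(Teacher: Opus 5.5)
Your architecture is the paper's. You reduce to a uniform bound on $\sup_{f\in\mathcal{F}}|L_{\P_k}(f)-\widetilde L_k(f)|$; your remark that monotonicity of $\mathcal{S}$ on $\R_+^K$ follows from the two axioms is correct. You then split into the specialist's excess loss, a change-of-measure term bounded through Lemma \ref{lem:TV_bound}, and an empirical-process term over $\mathcal{F}$. Net-plus-Hoeffding is an acceptable substitute for the paper's Dudley bound once the losses are bounded.

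\textbf{The gap: bounding the losses under $\P_{\widehat h_k}$.} You claim that linear growth plus Gr\"onwall gives $\P_{\widehat h_k}$ a sub-Gaussian tail compatible with $R=\mathcal{O}(\log(NK))$. It does not. For the reverse SDE with drift $x+2\widehat h_k$ and $\|\widehat h_k\|\le M_0+M_1\|x\|$, Gr\"onwall only gives $\|\widehat x_{T-T_0}\|\le e^{(1+2M_1)T}\big(\|\widehat x_0\|+2M_0T+\sqrt{2}\sup_{s\le T}\|W_s\|\big)$. That is a variance proxy of order $e^{2(1+2M_1)T}$. The $e^{-T}$ initialization error in the TV bound forces $T\gtrsim\log n$, so this proxy is polynomial in $n$. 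Then $R$ becomes polynomial in $n$, and so does the envelope $B\propto M_1^2R^2$, which multiplies both your TV term and your Hoeffding term. The stated rate is lost.

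This is not just a loose constant. $\widehat h_k$ is only $L^2(\P_k)$-close to $s_k^*$, so nothing in the assumptions stops it from being expansive where $\P_k$ has little mass. The TV bound controls how much probability $\P_{\widehat h_k}$ sends there, but not the moments of that mass. Meanwhile $\ell(\cdot,f)-\ell(\cdot,\widehat h_k)$ grows like $\|x\|^2$, and you need those moments both to put all $N$ pseudo-samples inside $B_R$ and to control the tail contribution uniformly in $f$.

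\textbf{How the paper avoids it.} The formal statement (\Cref{app:thm:generalization}) modifies the second stage: pseudo-samples are drawn from $\widetilde\P_{\widehat h_k}(\cdot|y)$, the specialist conditioned on $B_R=[-R,R]^{d_x}$ (i.e.\ rejection sampling). Then:
\begin{itemize}
\item every pseudo-sample lies in $B_R$, so the empirical process only sees losses bounded by $M_R$;
\item the change-of-measure term only needs $\mathrm{TV}(\widetilde\P_{\widehat h_k}(\cdot|y),\P_k(\cdot|y))\le \P_k(B_R^c|y)+2\,\mathrm{TV}(\P_{\widehat h_k}(\cdot|y),\P_k(\cdot|y))$ from Lemma \ref{lem:tv_truncation_error};
\item so no tail bound for the learned model is ever needed, only the sub-Gaussian tail of $\P_k$ from \Cref{asp:sub_gaussian}.
\end{itemize}

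\textbf{A second, easier fix.} Your envelope $\mathrm{poly}(R)/\sigma_{T_0}^2$ is too crude. $T_0$ must be polynomially small (the paper takes $T_0\asymp 1/(N^2\log^{d_x+1}N)$), which would make $B$ polynomially large. Since $\ell$ averages over $t\sim\mathrm{Unif}([T_0,T])$, use $\E_t[\sigma_t^{-2}]\lesssim 1+\log(1/T_0)/(T-T_0)$ instead; this costs only a logarithm.
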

To interpret this result, following the earlier discussion and  assuming the covering numbers satisfy $\log\mathcal{N}(\mathcal{F},\rho)\leq\mathcal{C}_\mathcal{F}\log(1/\rho)$, $\log\mathcal{N}(\mathcal{H}_k,\rho)\leq\mathcal{C}_\mathcal{H}\log(1/\rho)$.
Then the above generalization bound shows that in order to achieve $\varepsilon_k\leq \varepsilon$, our semi-supervised procedure requires in total $\widetilde{\mathcal{O}}\big(\frac{K C_{\mathcal{H}}}{\varepsilon^{2}}\big)$ paired (labeled) samples $(x,y)$ and $\widetilde{\mathcal{O}}\big(\frac{K \mathcal{C}_{\mathcal{F}}}{\varepsilon^{2}}\big)$ (unlabeled) conditioning variables $y$.
It suggests that paired-sample complexity is determined solely by the specialist model classes, not by the capacity of the generalist class.
This is similar to the findings for prediction tasks in \citet{wegel2025sample}. 
In contrast, vanilla (fully supervised) training requires $\widetilde{\mathcal{O}}\big(\frac{K \mathcal{C}_{\mathcal{F}}}{\varepsilon^{2}}\big)$ labeled samples to achieve the same accuracy \citep[Thm. 2]{zhang2024optimal}.
Therefore, the semi-supervised learning method substantially improves the sample-efficiency whenever $\mathcal{C}_\mathcal{F}\gg\mathcal{C}_\mathcal{H}$.
\begin{rmk}
    \citet{wegel2025sample} also studies sample efficiency in semi-supervised learning for prediction tasks. However, their analysis focuses on Bregman losses on bounded domain whereas our setting involves the more intricate score matching objectives in diffusion models.
    Technically, in \citet[Thm. 1]{wegel2025sample}, their results require $\widetilde{\mathcal{O}}(\frac{K\mathcal{C}_\mathcal{H}}{\varepsilon^4})$ labeled samples, $\widetilde{\mathcal{O}}(\frac{K\mathcal{C}_\mathcal{F}}{\varepsilon^2})$ unlabeled samples to achieve $\varepsilon$-error on Bregman loss. Although learning diffusion models is a more challenging generative problem than prediction, our results yield a sharper labeled-sample complexity by exploiting the specific structure of the score matching objective.
\end{rmk}
    
To further obtain an error bound on distribution estimation, we introduce the following lemma.
\begin{lemma}[Informal version of Lemma \ref{lem:TV_bound}]\label{main:lem:TV_bound}
    With proper configurations of $\ T,T_0$, it holds that
    \begin{equation}
        \E_{y\sim\P_k^Y}\mathrm{TV}(\P_{\widehat{h}_k}(\cdot|y),\P_k(\cdot|y))\lesssim  \sqrt{L_{\P_k}(\widehat{h}_k)}.
    \end{equation}
\end{lemma}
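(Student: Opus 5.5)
We would prove the bound by the standard Girsanov / data-processing route for SDE samplers, applied conditionally on each $y$ and then averaged over $y\sim\P_k^Y$. Fix $y$, and let $\P_k(\cdot|y)$ be the target conditional.

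\textbf{Decomposition.} We compare two processes:
\begin{itemize}
\item the true reverse process \Cref{eq:backward_sde}, started from $\P_{k,T}(\cdot|y)$ and run up to time $T-T_0$, whose terminal law is $\P_{k,T_0}(\cdot|y)$;
\item the learned process, started from $\mathcal{N}(0,I)$ with drift built from $\widehat{h}_k$.
\end{itemize}
By the triangle inequality,
\begin{equation*}
\mathrm{TV}(\P_{\widehat{h}_k}(\cdot|y),\P_k(\cdot|y))\leq \mathrm{TV}(\P_{k,T_0}(\cdot|y),\P_k(\cdot|y))+\mathrm{TV}(\P_{k,T}(\cdot|y),\mathcal{N}(0,I))+\mathrm{TV}(\text{path laws with the same initialization}).
\end{equation*}

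\textbf{Initialization and early stopping.} The second term is the initialization error. Exponential convergence of the OU process together with the sub-Gaussian tail in \Cref{asp:sub_gaussian} gives $\mathrm{KL}(\P_{k,T}(\cdot|y)\,\|\,\mathcal{N}(0,I))\lesssim e^{-2T}(d_x+\E\|x\|^2)$, so by Pinsker this term is at most $e^{-T}\,\mathrm{poly}(d_x)$. The first term is the early-stopping error. Using the $\mathcal{C}^2$ density and sub-Gaussian tails, we bound $\mathrm{TV}(\P_{k,T_0},\P_k)\lesssim \sqrt{T_0}\,\mathrm{poly}(d_x)$ by comparing $p_{T_0}(x|y)=\int \phi_{T_0}(x|x_0)p(x_0|y)\,\dif x_0$ with $p(x|y)$ after truncating to a ball. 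Choosing $T\asymp \log(1/\varepsilon)$ and $T_0\asymp\varepsilon^2$, with $\varepsilon^2$ the target score error, makes both terms lower order. This is what ``proper configurations'' refers to.

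\textbf{Main term via Girsanov.} By the data-processing inequality and Pinsker, the third term is at most
\begin{equation*}
\sqrt{\tfrac12\mathrm{KL}(\text{path}_{\text{true}}\,\|\,\text{path}_{\widehat{h}_k})}.
\end{equation*}
Girsanov's theorem, with diffusion coefficient $\sqrt2$, gives
\begin{equation*}
\mathrm{KL}=\frac{1}{4}\int_{T_0}^{T}\E_{x_t\sim\P_{k,t}(\cdot|y)}\|2\widehat{h}_k-2s_k^*\|^2\dif t=(T-T_0)\,\E_{t\sim\mathrm{Unif}}\E_{x_t}\|\widehat{h}_k(x_t,y,t)-s_k^*(x_t,y,t)\|^2.
\end{equation*}
Averaging over $y$ and applying Jensen's inequality ($\E\sqrt{\cdot}\leq\sqrt{\E\,\cdot}$) yields
\begin{equation*}
\E_y[\text{third term}]\lesssim\sqrt{T\,L_{\P_k}(\widehat{h}_k)}.
\end{equation*}
The factor $\sqrt T$ is only logarithmic and is absorbed into $\lesssim$; the time-uniform weighting in $L_{\P_k}$ matches the Girsanov integral exactly.

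\textbf{Main obstacle.} The main difficulty is justifying Girsanov rigorously, since Novikov's condition may fail for drifts that are only of linear growth. We would handle this as in \citet{chen2022sampling}: localize with stopping times, use the linear growth of \Cref{asp:lip} (which guarantees strong solutions and non-explosion), and pass to the limit. This yields the KL inequality without Novikov. A secondary technical point is the early-stopping TV bound, which needs uniform-in-$y$ control of the constants. We would obtain this from the $\mathcal{C}^2$ regularity on the compact condition space $[0,1]^{d_y}$.
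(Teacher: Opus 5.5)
Your proposal is correct and follows the paper's proof. You use the same three-process decomposition: an early-stopping term (the paper cites an external lemma of Fu et al.\ rather than proving it), an initialization term handled by data processing plus Pinsker, and a Pinsker-plus-Girsanov term for the score error, followed by Jensen over $y$. Your explicit $\sqrt{T-T_0}$ factor from Girsanov is more careful than the paper, which silently absorbs it into $\lesssim$, so with $T\asymp\log n$ the clean bound really holds only up to a $\sqrt{\log n}$ factor; and your second-moment initialization bound replaces the paper's extra bounded-KL hypothesis.
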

This lemma provides the key bridge from the score matching objective to distribution estimation.

Next, define the approximation error of the generalist model class as
\begin{equation}\label{eq:approx_error}
    \varepsilon_{\text{apx}}^2(\mathcal{S})=\inf_{f\in\mathcal{F}}\mathcal{S}\left(\big(\E_{(x,y)\sim\P_k}\E_{t,x_t}\|(f-s_k^*)(x_t,y,t)\|^2\big)_k\right).
\end{equation}
\begin{coro}\label{coro:generalization}
    If further assuming $\mathcal{S}(|\cdot|)$ is coordinatewise non-decreasing and $|\mathcal{S}(\vec{u})|\leq \|\vec{u}\|_\infty$, then
    \begin{equation*}
        \mathcal{S}\big(\big(\E_{y\sim\P_k^Y}\mathrm{TV}(\P_{\widehat{f}_\mathcal{S}}(\cdot|y),\P_k(\cdot|y))\big)_k\big)
        \lesssim \varepsilon_{\text{apx}}(\mathcal{S})+\sqrt{\mathcal{S}((\varepsilon_k)_k)},
    \end{equation*}
    where $\varepsilon_k$ is bounded by \Cref{eq:bound_eps}.
\end{coro}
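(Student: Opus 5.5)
The plan is to reduce the TV statement to the score-matching excess risk controlled by \Cref{thm:generalization}, using \Cref{main:lem:TV_bound} as the bridge.

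\textbf{Step 1 (score error to TV).} I apply the argument behind \Cref{main:lem:TV_bound} to $\widehat{f}_\mathcal{S}$ rather than to $\widehat{h}_k$. That lemma only uses the score-matching error of the estimator under $\P_k$, together with Girsanov/KL chain rule, Pinsker, and the choices of $T, T_0$ that control early stopping and initialization errors. This gives, for every $k$,
\begin{equation*}
a_k:=\E_{y\sim\P_k^Y}\mathrm{TV}(\P_{\widehat{f}_\mathcal{S}}(\cdot|y),\P_k(\cdot|y))\lesssim \sqrt{L_{\P_k}(\widehat{f}_\mathcal{S})}+\eta,
\end{equation*}
where $\eta$ collects the discretization and early-stopping terms, which are made negligible by the configuration. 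Since $\mathcal{S}(|\cdot|)$ is coordinatewise non-decreasing and positively homogeneous, $\mathcal{S}(\vec a)\lesssim \mathcal{S}((\sqrt{L_{\P_k}(\widehat f_\mathcal{S})})_k)$. Any additive $\eta$ term is absorbed using the reverse triangle inequality and $\mathcal{S}(\eta\mathbf{1})\le\eta$.

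\textbf{Step 2 (moving the square root outside $\mathcal{S}$).} I need $\mathcal{S}((\sqrt{u_k})_k)\lesssim\sqrt{\mathcal{S}(\vec u)}$. This is the step I expect to be the main obstacle. For the linear case it is Jensen's inequality. For the $\ell_p$ and max cases it holds with a constant, since $\|\sqrt{\vec u}\|_p=\|\vec u\|_{p/2}^{1/2}$ is comparable to $\|\vec u\|_p^{1/2}$ up to $K$-dependent factors. In general, I would normalize by setting $m=\mathcal{S}(\vec u)$ and splitting coordinates by whether $u_k\le m$. The hypothesis $|\mathcal{S}(\vec v)|\le\|\vec v\|_\infty$ handles the small coordinates, each contributing at most $\sqrt m$. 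For the large coordinates I would use monotonicity and homogeneity, via $\sqrt{u_k}\le u_k/\sqrt m$, which gives $\mathcal{S}(\sqrt{\vec u}\,\mathbb{1}_{u>m})\le \mathcal{S}(\vec u)/\sqrt m=\sqrt m$. Combining the two pieces with the reverse triangle inequality (subadditivity) then yields $\mathcal{S}(\sqrt{\vec u})\le 2\sqrt{\mathcal{S}(\vec u)}$.

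\textbf{Step 3 (excess risk decomposition).} By \Cref{thm:generalization},
\begin{equation*}
\mathcal{E}_\mathcal{S}(\widehat f_\mathcal{S})\le\inf_f\mathcal{E}_\mathcal{S}(f)+\mathcal{S}((\varepsilon_k)_k).
\end{equation*}
Here $L_{\P_k}(f)=\E_{\P_k}\E_{t,x_t}\|f-s_k^*\|^2$, because $s_k^*(x_t,y,t)=\nabla\log p_{k,t}(x_t|y)$. Therefore $\inf_f\mathcal{E}_\mathcal{S}(f)=\varepsilon^2_{\rm apx}(\mathcal{S})$ exactly.

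\textbf{Step 4 (combining).} Steps 1–3 give
\begin{equation*}
\mathcal{S}(\vec a)\lesssim\sqrt{\varepsilon_{\rm apx}^2+\mathcal{S}((\varepsilon_k)_k)}\le \varepsilon_{\rm apx}(\mathcal{S})+\sqrt{\mathcal{S}((\varepsilon_k)_k)}.
\end{equation*}
This holds on the same probability-$(1-\delta)$ event as \Cref{thm:generalization}.
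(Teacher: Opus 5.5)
Your proposal is correct. Steps 1, 3 and 4 match the paper's proof: apply the TV lemma to $\widehat f_{\mathcal S}$ with $T,T_0$ chosen so the early-stopping and initialization terms are $\mathcal O(1/N)$, pull the square root outside $\mathcal S$, and then use \Cref{thm:generalization} with the identity $\inf_f\mathcal E_{\mathcal S}(f)=\varepsilon_{\text{apx}}^2(\mathcal S)$. Where you differ is Step 2, the square-root comparison. The paper proves the sharp inequality $\mathcal S(\vec u)^2\le\mathcal S(\vec u^{2})$ for $\vec u\ge 0$ by convex duality. It extends $\mathcal S$ to the sublinear function $F(\vec u)=\mathcal S(\vec u_+)$ and writes $F(\vec u)=\sup_{\vec v\in C}\vec v\cdot\vec u$ via Hahn--Banach. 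It then notes that $0\le F\le\|\cdot\|_\infty$ forces every $\vec v\in C$ to be nonnegative with $\sum_k\vec v_k\le 1$, and applies Cauchy--Schwarz to each $\vec v\cdot\vec u$ uniformly. Your thresholding at level $m=\mathcal S(\vec u)$ instead reaches $\mathcal S(\sqrt{\vec u})\le 2\sqrt{\mathcal S(\vec u)}$ using only four facts: monotonicity on the nonnegative orthant, homogeneity, subadditivity, and the $\ell_\infty$ bound. Subadditivity does follow from the reverse triangle inequality applied to nonnegative increments. Your argument avoids the support-function representation and the check that $F$ is sublinear, at the cost of a factor $2$, which is harmless under $\lesssim$. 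The paper's route gives constant one and shows that every admissible $\mathcal S$ is a supremum of sub-stochastic weighted sums, which makes Jensen-type bounds for any concave transform immediate. One small repair: when $m=0$ your bound $\sqrt{u_k}\le u_k/\sqrt m$ is undefined. Run the same split at level $m+\epsilon$, which gives $\mathcal S(\sqrt{\vec u})\le 2\sqrt{m+\epsilon}$, and let $\epsilon\downarrow 0$.
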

Corollary \ref{coro:generalization} converts our statistical guarantee for score matching into distribution estimation in total variation distance. This metric is central to our goal, as it ensures that the learned diffusion model $\P_{\widehat{f}_\mathcal{S}}$ can achieve Pareto optimal performance when approximating the $K$ target conditional distributions.
We remark that these additional regularity assumptions on $\mathcal{S}$ are mild and satisfied by the commonly used scalarizations mentioned in \Cref{subsec:mol}.

\subsection{Faster Rate for Linear Scalarizations}\label{subsec:generalization_local}

The preceding analysis establishes a uniform bound for all possible scalarization methods, yielding a $\widetilde{\mathcal{O}}(n^{-1/4}+N^{-1/4})$ statistical rate in total variation distance. In this section, we show that the bound can be further improved when restricting to \emph{linear} scalarizations.
Formally, let $\mathcal{S}_{\text{lin}}:=\{\mathcal{S}(\vec{u})=\sum_{k=1}^K\lambda_k\vec{u}_k:\lambda\in\Delta^K\}$.

\begin{thm}[Informal version of \Cref{app:thm:generalization_local}]\label{thm:generalization_local}
    Assume that $\mathcal{F}$ is convex. Then the following holds with probability no less than $1-\delta$: for any $\mathcal{S}\in\mathcal{S}_{\text{lin}}$ with $\mathcal{S}(\vec{u})=\sum_{k=1}^K\lambda_k\vec{u}_k$, 
    \begin{equation*}
        \begin{aligned}
            &\mathcal{S}\big(\big(\E_{y\sim\P_k^Y}\mathrm{TV}(\P_{\widehat{f}_\mathcal{S}}(\cdot|y),\P_k(\cdot|y))\big)_k\big)
            \lesssim \bar{\varepsilon}_{\text{apx}}(\mathcal{S})\\
            &\quad+\sqrt{\frac{\log\mathcal{N}(\mathcal{F},\frac{1}{N})}{N}}+\sum_{k=1}^K\lambda_k\sqrt{\frac{\log\mathcal{N}(\mathcal{H}_k,\frac{1}{n^2})+\log(\frac{K}{\delta})}{n}}.
        \end{aligned}
    \end{equation*}
    where the approximation error is defined as
    \begin{equation*}
        \begin{aligned}
            &\bar{\varepsilon}_{\text{apx}}^2(\mathcal{S}):=\\
            &\sup_{h_k\in\mathcal{H}_k}\inf_{f\in\mathcal{F}}\mathcal{S}((\E_{y\sim\P_k^Y,x\sim\P_{h_k}(\cdot|y)}\E_{x_t}\|(f-h_k)(x_t,y,t)\|^2)_k).
        \end{aligned}
    \end{equation*}
\end{thm}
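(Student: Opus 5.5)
The argument has three parts. First, a triangle inequality in total variation through the specialists splits the error into a specialist part and a generalist-versus-specialist part. Second, the specialist part is handled by the first stage. Third, the generalist part is handled by a localized, fast-rate analysis of the second-stage weighted empirical risk minimization, where convexity of $\mathcal{F}$ is what buys the faster rate. For each $k$,
\begin{equation*}
\E_{y}\mathrm{TV}(\P_{\widehat f_\mathcal{S}}(\cdot|y),\P_k(\cdot|y))\le \E_{y}\mathrm{TV}(\P_{\widehat f_\mathcal{S}}(\cdot|y),\P_{\widehat h_k}(\cdot|y))+\E_{y}\mathrm{TV}(\P_{\widehat h_k}(\cdot|y),\P_k(\cdot|y)).
\end{equation*}
By Lemma \ref{main:lem:TV_bound} and Lemma \ref{lem:generalization_small}, the second term is at most $\sqrt{(\log\mathcal{N}(\mathcal{H}_k,n^{-2})+\log(K/\delta))/n}$. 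Multiplying by $\lambda_k$ and summing gives exactly the last term of the claimed bound.

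For the first term, $\P_{\widehat h_k}$ is the law of a backward SDE driven by $\widehat h_k$. I would apply the Girsanov argument behind Lemma \ref{main:lem:TV_bound}, with $\P_{\widehat h_k}$ playing the role of the target, to obtain
\begin{equation*}
\E_y\mathrm{TV}(\P_{\widehat f}(\cdot|y),\P_{\widehat h_k}(\cdot|y))\lesssim\sqrt{D_k(\widehat f)},\qquad D_k(f):=\E_{y\sim\P_k^Y,x\sim\P_{\widehat h_k}(\cdot|y)}\E_{t,x_t}\|(f-\widehat h_k)(x_t,y,t)\|^2 .
\end{equation*}
Here $x_t$ is taken along the forward noising of the pseudo-sample. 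The forward noising of $\P_{\widehat h_k}$ is not exactly the backward path of the specialist sampler, but at the early-stopping/initialization level the two path laws differ by at most $\mathrm{TV}(\P_{\widehat h_k},\P_k)$ plus the Gaussian initialization error, both already accounted for. Jensen's inequality then gives $\sum_k\lambda_k\sqrt{D_k}\le\sqrt{\sum_k\lambda_k D_k}$, so it suffices to bound $\sum_k\lambda_kD_k(\widehat f_\mathcal{S})$. The key identity is that $\widetilde L_k(f)$ is an unbiased empirical version of $D_k(f)$ conditionally on $\widehat h_k$. This holds because $\widehat h_k$ is, by construction, the true score of the noised pseudo-distribution, up to the caveat above. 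So the second stage is a well-specified-up-to-approximation least-squares regression with target $\widehat h_k$.

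Now condition on the first stage and set $g^\star\in\arg\min_{f\in\mathcal{F}}\sum_k\lambda_kD_k(f)$. By definition, $\sum_k\lambda_kD_k(g^\star)\le\bar\varepsilon_{\text{apx}}^2(\mathcal{S})$, which is where the supremum over $h_k\in\mathcal{H}_k$ enters. Because $\mathcal{F}$ is convex and the loss is a squared norm, the first-order optimality of $g^\star$ gives the Bernstein-type inequality
\begin{equation*}
\sum_k\lambda_k\E\|f-g^\star\|^2\le\sum_k\lambda_k\big(D_k(f)-D_k(g^\star)\big)=:\Delta(f).
\end{equation*}
This controls the variance of the excess loss $\ell(\cdot,f)-\ell(\cdot,g^\star)$ by $\Delta(f)$, times polylog factors from truncating $x$ at $R=\mathcal{O}(\log(NK))$ with sub-Gaussian tails and linear growth. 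A localized Bernstein inequality with a union bound over an $N^{-1}$-net of $\mathcal{F}$ and a grid of $\lambda$ then yields, via the fixed-point argument,
\begin{equation*}
\Delta(\widehat f_\mathcal{S})\lesssim \Delta(\widehat f_\mathcal{S})^{1/2}\sqrt{\log\mathcal{N}(\mathcal{F},N^{-1})/N}+\log\mathcal{N}(\mathcal{F},N^{-1})/N,
\end{equation*}
hence $\Delta\lesssim\log\mathcal{N}(\mathcal{F},N^{-1})/N$. Adding $D(g^\star)$ and taking square roots gives the first two terms of the bound.

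I expect the localized concentration to be the main obstacle. The difficulty is that the weighted sum pools $K$ datasets whose sample sizes are each $N$, and the bound must hold uniformly over $\lambda\in\Delta^K$. I would handle this by discretizing $\lambda$ on an $N^{-1}$-grid, which costs only $K\log N$, absorbed into the logarithmic factors. The dependence on the random specialists would be removed by conditioning, since the unlabeled data are independent of the paired data. The unbounded denoising target $\nabla\log\phi_t$, which has size $\sigma_t^{-1}$, would be handled by truncation together with the early-stopping lower bound $T_0$. One further point needs checking: the excess loss relative to $\widehat h_k$ is centered only in expectation over $x_t|x$. I would therefore compute the variance using the cross-term structure $\|f-\widehat h\|^2+2\langle f-\widehat h,\widehat h-\nabla\log\phi_t\rangle$, whose second part has mean zero but variance of order $\|f-g^\star\|^2/\sigma_t^2$. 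This is still proportional to $\Delta$ up to $T_0^{-1}$ factors, so localization survives.
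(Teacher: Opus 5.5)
Your architecture matches the paper's. It uses a triangle inequality through the specialist, Lemma~\ref{app:lem:generalization_small} with Lemma~\ref{lem:TV_bound} for the specialist term, first-order optimality of the population minimizer over convex $\mathcal{F}$ as a Bernstein condition (your $g^\star$ is the paper's $\widetilde f_\mathcal{S}$), localization, and Jensen in $\lambda$. The genuine gap is tail control of the pseudo-samples. You truncate them at $R$ ``with sub-Gaussian tails'', but \Cref{asp:sub_gaussian} concerns $\P_k$, not $\P_{\widehat h_k}$. All you know is $\P_{\widehat h_k}(B_R^c|y)\le \P_k(B_R^c|y)+\mathrm{TV}(\P_{\widehat h_k}(\cdot|y),\P_k(\cdot|y))$, and the TV term is only of order $n^{-1/2}$. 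So a fraction up to order $n^{-1/2}$ of the $N\gg n$ pseudo-samples can land outside $B_R$. Losses grow like $M_1^2\|x\|^2$, and a backward SDE with linear-growth drift run for $T\asymp\log n$ can have tails at polynomial scale. Nothing then bounds the influence of these points on $\widehat f_\mathcal{S}$ or on the variance in your Bernstein step. The paper avoids this by changing the sampler. In \Cref{app:thm:generalization_local} the pseudo-samples are drawn from the truncated law $\widetilde\P_{\widehat h_k}(\cdot|y)\propto\P_{\widehat h_k}(\cdot|y)\mathbbm{1}_{B_R}$. Then $\|\widetilde x_i^k\|_\infty\le R$ deterministically, the empirical minimizer coincides with the truncated-loss minimizer, and all losses are bounded by $M_R$. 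The truncation is paid once through Lemma~\ref{lem:tv_truncation_error}, $\mathrm{TV}(\widetilde\P_{\widehat h_k},\P_k)\le\P_k(B_R^c)+2\,\mathrm{TV}(\P_{\widehat h_k},\P_k)$, so your triangle inequality should pass through $\widetilde\P_{\widehat h_k}$.

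Three further steps fail or overreach as written.

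First, a $T_0^{-1}$ factor in the variance is fatal, not harmless. Lemma~\ref{lem:TV_bound} forces $T_0$ to be polynomially small (the paper takes $T_0\asymp n^{-2}$ up to logs), so that factor would inflate the $\log\mathcal{N}(\mathcal{F},\frac1N)/N$ term polynomially. Your centering concern is also unnecessary: the sample unit is $(x,y)$, and $\ell$ already integrates over $(t,x_t)$. Apply Cauchy--Schwarz to the joint $(t,x_t)$-expectation, $(\ell(x,y,f)-\ell(x,y,g))^2\le\E_{t,x_t}\|f-g\|^2\cdot\E_{t,x_t}\|f+g-2\nabla\log\phi_t\|^2$, and use $\E_t\sigma_t^{-2}\lesssim 1+\log(1/T_0)/(T-T_0)$. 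This gives variance at most $4M_R\,\E_{t,x_t}\|f-g\|^2$, with $M_R$ only logarithmic once $T\asymp\log n$.

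Second, TV-closeness of path laws does not make $\widehat h_k$ the score $s_k'$ of the OU-noised pseudo-law. In fact $\E_{x\sim\P_{\widehat h_k}}[\ell(x,y,f)-\ell(x,y,\widehat h_k)]=\E\|f-s_k'\|^2-\E\|\widehat h_k-s_k'\|^2$, which is not $D_k(f)$ in general. Transporting $\|\widehat f-\widehat h_k\|^2$ between the two path laws would also cost $M_R\cdot\mathrm{TV}\sim n^{-1/2}$ under the square root, i.e.\ an $n^{-1/4}$ term, so it is not ``already accounted for''. To be fair, the paper's proof makes the same identification $s_k'=\widehat h_k$ silently, when it passes to $\bar\varepsilon_{\text{apx}}$ and applies Lemma~\ref{lem:TV_bound} with target $\widetilde\P_{\widehat h_k}$. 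This is a shared looseness rather than a deviation, but you should not present it as justified.

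Third, your $\lambda$-grid adds $K\log N$ to the log-covering term, which is not a logarithmic factor in general. The paper avoids the grid by localizing each $k$ separately over pairs $(f,\widetilde f_\mathcal{S})\in\mathcal{F}\times\mathcal{F}$. Since $\lambda$ enters the loss class only through $\widetilde f_\mathcal{S}\in\mathcal{F}$, these per-$k$ bounds are automatically uniform in $\lambda$.
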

The generalization bound above implies that to achieve $\varepsilon$-excess risk in total variation distance, it suffices to use $\widetilde{\mathcal{O}}(\frac{KC_\mathcal{H}}{\varepsilon^2})$ paired samples and $\widetilde{\mathcal{O}}(\frac{KC_\mathcal{F}}{\varepsilon^2})$ conditions, improving upon Corollary \ref{coro:generalization}.
In particular, \Cref{thm:generalization_local} provides a faster rate for the linear scalarization family and convex function class $\mathcal{F}$.

Note that the approximation error here is a universal bound for all $h_k\in\mathcal{H}_k$.
This is strictly stronger than \Cref{eq:approx_error} which only requires the approximation capacity of $\mathcal{F}$ for ground truth $s_k^*$. 
The reason is mainly technical. Recall that the unified model $\widehat{f}_\mathcal{S}$ is essentially trained to optimize $\mathcal{S}\big(\big(\E_{y\sim\P_k^Y,x\sim\P_{\widehat{h}_k}(\cdot|y)}\E_{x_t}\|(f-\widehat{h}_k)(x_t,y,t)\|^2\big)_k\big)$, which is lower bounded by the approximation error of $\mathcal{F}$ for arbitrary $\widehat{h}_k\in\mathcal{H}_k$. 
Although Lemma \ref{lem:generalization_small} guarantees that $L_{\P_k}(\widehat{h}_k)=\E_{y\sim\P_k^Y,x\sim\P_k(\cdot|y)}\|\widehat{h}_k(x_t,y,t)-s_k^*(x_t,y,t)\|^2$ is small and hence $\E_{y\sim\P_K^Y}\mathrm{TV}(\P_{\widehat{h}_k}(\cdot|y),\P_k(\cdot|y))$ is small, it remains nontrivial to transfer this guarantee to the \emph{on-model} expectation $\E_{y\sim\P_k^Y,x\sim\P_{\widehat{h}_k}(\cdot|y)}\|\widehat{h}_k(x_t,y,t)-s_k^*(x_t,y,t)\|^2$. This is because the $L^2$-accurate score estimator can only imply small $\mathrm{TV}(\P_{\widehat{h}_k}(\cdot|y),\P_k(\cdot|y))$ \citep{wibisono2022convergence}; without further regularity assumptions on data distributions, it cannot control stronger density-ratio divergences such as chi-square divergence, which is necessary to replace $\mathbb{P}_k(\cdot|y)$ by $\mathbb{P}_{\widehat{h}_k}(\cdot|y)$ inside the expectation. We hope future works can address this issue.

We notice that \citet[Thm. 2]{wegel2025sample} also derives improved sample complexity bound via localization techniques, but under stronger structural assumptions on the
function classes---in particular, each $\mathcal{H}_k$ is required to be star-shaped around the ground-truth score $s_k^*$.
In contrast, our analysis in \Cref{thm:generalization_local} does not impose such a restriction and yields more general guarantees for distribution estimation, rather than prediction error.

\subsection{MOL for Diffusion Policies: Tackling Distribution Shift in Sequential Decision Making}\label{subsec:generalization_mdp}

In the previous section, we assume access to conditions $\{\widetilde{y}_i^k\}\overset{\mathrm{i.i.d.}}{\sim}\mathbb{P}_k^{Y}$.
In broader settings such as diffusion policies in imitation learning, this assumption may fail: the distribution of conditioning variables is induced by the learned policy itself, resulting in a distribution shift in the condition variables and necessitating a more delicate analysis.
In this section, we show that our semi-supervised learning framework can accommodate this issue and still yield improved sample-complexity guarantees.

We present the setup of diffusion policies as follows. Let $\mathcal{M}$ be the space of decision-making environments, where each $M\in\mathcal{M}$ is an infinite horizon
Markov Decision Process (MDP) sharing the same state space $S$, action space $A$, discount factor $\gamma$.
And each $M\in\mathcal{M}$ has its own transition kernel $\mathcal{T}_M:S\times A\to\Delta(S)$, reward function $r_M:S\times A\to[-1,1]$, and initial distribution $\rho_M\in\Delta(S)$.
The policy is defined as a map $\pi:S\to\Delta(A)$, and the value function of $M$ under policy $\pi$ is
\begin{equation*}
    V_M(\pi):=\E_{a_t\sim\pi(\cdot|s_t),s_{t+1}\sim\mathcal{T}_M(\cdot|s_t,a_t)}\big[\sum_{t=0}^\infty \gamma^tr_M(s_t,a_t)\big].  
\end{equation*}
Denote the visitation measure of state-action as $\P_M^\pi(s,a):=(1-\gamma)\E[\sum_{t=0}^\infty \gamma^t\P(s_t=s|\pi)\pi(a|s)]$.
Suppose that there are $K$ environments $(M_k)_{k\in[K]}\in\mathcal{M}$ with expert policy $(\pi_k^*)_{k\in[K]}$. 
The diffusion policy is a conditional diffusion model that takes a state $s$ as the conditioning variable and generates a (stochastic) action $a$.
To unify the notation, let $x=a,y=s,\P_k:=\P_{M_k}^{\pi_k^*}$ and assume $A=\R^{d_x},S=[0,1]^{d_y}$.

In the first stage, for each environment $M_k$, we are given a limited set of expert demonstrations, i.e. $\{(s_i^k,a_i^k)\}_{i=1}^n\overset{\mathrm{i.i.d.}}{\sim}\P_{M_k}^{\pi_k^*}$ and train a diffusion policy $\pi_{\widehat{h}_k}$.
However, in the MDP setting, it is impractical to collect a large number of states $\{\widetilde{s}_i^k\}_{i=1}^N$ sampled from the expert trajectory $\P_{M_k}^{\pi_k^*}$ as it requires rolling out expert policy and is of comparable cost to collecting state-action pairs.
Therefore, in the second stage, we directly simulate trajectories using the learned $\pi_{\widehat{h}_k}$ under environment $M_k$ to collect $\{(\widetilde{s}_i^k,\widetilde{a}_i^k)\}_{i=1}^N\overset{\mathrm{i.i.d.}}{\sim}\P_{M_k}^{\pi_{\widehat{h}_k}}$ , after which we apply the same training procedures as before to obtain the generalist policy $\pi_{\widehat{f}_\mathcal{S}}$.

This pipeline introduces a \emph{distribution shift} between expert-generated data and on-policy rollouts, a phenomenon that is inherent to the MDP setting and well known in imitation learning \citep{ross2010efficient,foster2024behavior}.
Regardless of the difference, we follow the settings in \Cref{thm:generalization} and bound the sub-optimality gap as follows. 
\begin{thm}[Informal version of \Cref{app:thm:generalization_mdp}]\label{thm:generalization_mdp}
    Under the MDP setting, assume that $\mathcal{F}$ is convex. The following holds with probability at least $1-\delta$: for any $\mathcal{S}\in\mathcal{S}_{\text{lin}}$ with $\mathcal{S}(\vec{u})=\sum_{k=1}^K\lambda_k\vec{u}_k$,
    \begin{equation*}
        \begin{aligned}
            &\mathcal{S}\big(\big(V_{M_k}(\pi_k^*)-V_{M_k}(\pi_{\widehat{f}_\mathcal{S}})\big)_k\big)
            \lesssim (1-\gamma)^{-2}\bar{\varepsilon}_{\text{apx}}(\mathcal{S})\\
            &\quad+\sqrt{\frac{\log\mathcal{N}(\mathcal{F},\frac{1}{N})}{N}}+\sum_{k=1}^K\lambda_k\sqrt{\frac{\log\mathcal{N}(\mathcal{H}_k,\frac{1}{n^2})+\log(\frac{K}{\delta})}{n}}.
        \end{aligned}
    \end{equation*}
\end{thm}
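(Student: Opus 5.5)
The plan is to reduce the value gap to a TV-type discrepancy between policies, measured on the appropriate state distribution, and then reuse the linear-scalarization machinery behind \Cref{thm:generalization_local}, with the distribution shift handled by a change-of-measure through the specialist policy.

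\textbf{Step 1 (value to TV).} By the performance difference lemma and boundedness of rewards,
\begin{equation*}
V_{M_k}(\pi_k^*)-V_{M_k}(\pi)\le \frac{2}{1-\gamma}\,\E_{s\sim \P_{M_k}^{\pi_k^*,S}}\mathrm{TV}\big(\pi(\cdot|s),\pi_k^*(\cdot|s)\big),
\end{equation*}
so the state distribution is the expert one. For $\pi=\pi_{\widehat{f}_\mathcal{S}}$, however, the generalist is trained on states drawn from $\P_{M_k}^{\pi_{\widehat{h}_k}}$. We therefore apply the triangle inequality:
\begin{equation*}
V(\pi_k^*)-V(\pi_{\widehat f})=[V(\pi_k^*)-V(\pi_{\widehat h_k})]+[V(\pi_{\widehat h_k})-V(\pi_{\widehat f})].
\end{equation*}
The first bracket involves only expert states. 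By Lemma \ref{main:lem:TV_bound} and Lemma \ref{lem:generalization_small}, it is at most $(1-\gamma)^{-1}\sqrt{(\log\mathcal{N}(\mathcal{H}_k,n^{-2})+\log(K/\delta))/n}$. For the second bracket we apply the performance difference lemma with $\pi_{\widehat h_k}$ as the reference policy. This writes the bracket as an expectation over $\P_{M_k}^{\pi_{\widehat h_k}}$, which is exactly the rollout distribution of the second-stage data, giving $\lesssim(1-\gamma)^{-1}\E_{s\sim\P^{\pi_{\widehat h_k}}_{M_k}}\mathrm{TV}(\pi_{\widehat f}(\cdot|s),\pi_{\widehat h_k}(\cdot|s))$. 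Choosing the decomposition this way is the key move: the shift is absorbed by anchoring every comparison at the specialist.

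\textbf{Step 2 (TV to on-model score error).} Both conditionals are outputs of diffusion samplers, and pseudo-actions are sampled exactly from $\pi_{\widehat h_k}$, whose score is $\widehat h_k$. Girsanov (the argument of Lemma \ref{main:lem:TV_bound} with $\widehat h_k$ as ground truth) then bounds the TV by $\sqrt{\E\|(\widehat f-\widehat h_k)(a_t,s,t)\|^2}$, with the expectation on-model under $\P^{\pi_{\widehat h_k}}_{M_k}$ plus early-stopping and initialization terms that vanish with proper $T,T_0$. This is precisely the population version of $\widetilde L_k(f)$, since $\widehat h_k$ is the true score of the pseudo-data. Because $\mathcal{S}$ is linear, Jensen gives $\sum_k\lambda_k\sqrt{\cdot}\le\sqrt{\sum_k\lambda_k(\cdot)}$.

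\textbf{Step 3 (generalization of stage two).} It remains to bound $\sum_k\lambda_k \bar L_k(\widehat f_\mathcal{S})$, where $\bar L_k$ is the on-model population loss. We copy the localized argument of \Cref{thm:generalization_local}. Convexity of $\mathcal{F}$ and linearity of $\mathcal{S}$ make the weighted objective a strongly convex quadratic in $f$ over a convex set, so the excess risk of the ERM over the minimizer satisfies a Bernstein-type condition. A uniform concentration over an $L^\infty(\Omega_R)$ net of $\mathcal{F}$ then gives the fast rate $\log\mathcal{N}(\mathcal{F},1/N)/N$ plus the infimum, which is at most $\bar\varepsilon^2_{\text{apx}}(\mathcal{S})$. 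The truncation uses sub-Gaussian tails of $\pi_{\widehat h_k}$ under Assumption \ref{asp:lip}. The rollout samples must be i.i.d. from the visitation measure; we take them by geometric-time sampling of independent trajectories, conditional on $\widehat h_k$. Combining the steps gives the stated bound. In the displayed form, all terms except $\bar\varepsilon_{\text{apx}}$ are taken to absorb the horizon factor, or else we track $(1-\gamma)^{-1}$ explicitly.

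\textbf{Main obstacle.} The step I expect to be hardest is the horizon factor $(1-\gamma)^{-2}$ on $\bar\varepsilon_{\text{apx}}$. One possibility is that it comes from comparing the minimizer's approximation error under $\P^{\pi_{\widehat h_k}}$ with the definition of $\bar\varepsilon_{\text{apx}}$, which uses the $\mathcal{H}_k$-induced state distribution, with compounding error over the rollout. Another is that it comes from the Lipschitz dependence of visitation measures on the policy, $\|\P^{\pi}-\P^{\pi'}\|_{TV}\le\frac{\gamma}{1-\gamma}\E\,\mathrm{TV}(\pi,\pi')$. I would first try the specialist-anchored decomposition above and check whether any such extra factor actually appears; if so, I would insert it there.
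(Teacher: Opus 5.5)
Your Step 1 takes a genuinely different route from the paper, and it is the better one. Steps 2--3 coincide with what the paper does: it reruns Theorem~\ref{app:thm:generalization_local} with the rollout state distribution $\P_{M_k}^{\pi_{\widehat h_k}}$ in place of $\P_k^Y$, treating $\P_{\widehat h_k}$ as ground truth.

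The paper applies the performance difference lemma once, between $\pi_k^*$ and $\pi_{\widehat f_\mathcal{S}}$, so the expectation sits on expert states. It then moves to rollout states by paying $\mathrm{TV}(\P_k,\P_{M_k}^{\pi_{\widehat h_k}})$, and bounds this by $\E_{s\sim\P_k}\mathrm{TV}(\pi_k^*(\cdot|s),\pi_{\widehat h_k}(\cdot|s))$. That step has two problems.

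It drops the simulation-lemma factor $\frac{\gamma}{1-\gamma}$ that you quote at the end, and this factor is unavoidable in general. Leaving the expert's state for an absorbing one with probability $\epsilon$ per step gives $\E_{s\sim\P_k}\mathrm{TV}=\epsilon$ but a state-visitation gap $\approx\gamma\epsilon/(1-\gamma)$.

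The leftover term $\E_{s\sim\P_{M_k}^{\pi_{\widehat h_k}}}\mathrm{TV}(\pi_k^*(\cdot|s),\pi_{\widehat f_\mathcal{S}}(\cdot|s))$ still contains the specialist's error on its own rollout states. Lemma~\ref{app:lem:generalization_small}, being fit on expert states, does not control this directly.

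Your telescoping through $V_{M_k}(\pi_{\widehat h_k})$ avoids both problems. You expand each bracket under the visitation measure of the policy that generated the corresponding data: $\P_k$ for the first bracket, $\P_{M_k}^{\pi_{\widehat h_k}}$ for the second. So the value decomposition never compares visitation measures and costs only the horizon factor the theorem claims.

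A few corrections.

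First, your Step 1 constant is off. With $r_M\in[-1,1]$ one has $|Q_M^{\pi}|\le\frac{1}{1-\gamma}$, and the performance difference lemma with the normalized visitation measure contributes another $\frac{1}{1-\gamma}$. Hence $V_M(\pi)-V_M(\pi')\le\frac{2}{(1-\gamma)^2}\E_{s\sim\P_M^{\pi}}\mathrm{TV}(\pi(\cdot|s),\pi'(\cdot|s))$. This is the entire source of $(1-\gamma)^{-2}$, and it multiplies every term, as in the last display of the paper's proof; the informal and formal statements each omit it from different terms. So your ``main obstacle'' dissolves:
\begin{itemize}
\item no compounding-error or visitation-Lipschitz argument is needed;
\item you should not aim for $(1-\gamma)^{-1}$;
\item no transfer of approximation error is needed, provided $\bar\varepsilon_{\text{apx}}$ is read with $s\sim\P_{M_k}^{\pi_{h_k}}$, as you and the paper implicitly do.
\end{itemize}

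Second, Assumption~\ref{asp:lip} does not give $\pi_{\widehat h_k}(\cdot|s)$ sub-Gaussian tails at scale $R\asymp\log^{1/2}(NK/\delta)$. The backward drift $x+2\widehat h_k$ may be expanding, so tails can sit at scale $e^{O(M_1T)}$, which is polynomial in $n$ for $T\asymp\log n$. The paper instead samples pseudo-actions from the truncated $\widetilde\P_{\widehat h_k}$. It charges the discarded mass to TV-closeness with the sub-Gaussian expert conditional via Lemma~\ref{lem:tv_truncation_error}. In your decomposition that mass, $\E_{s\sim\P_{M_k}^{\pi_{\widehat h_k}}}\pi_{\widehat h_k}(B_R^c|s)$, lives on rollout states. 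This is the one place where the specialist's error on its own states re-enters, and with it, if handled directly, a simulation-lemma factor. You should account for it explicitly.

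Finally, the claim that $\widehat h_k$ is the score of $\pi_{\widehat h_k}$ is not literally true. Girsanov measures $\|\widehat f_\mathcal{S}-\widehat h_k\|^2$ along the $\widehat h_k$-sampler's backward marginals, whereas $\widetilde L_k$ uses forward-noised pseudo-samples. The paper makes the same identification, though, so this does not separate your argument from theirs.
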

This highlights that semi-supervised learning algorithm can successfully address the distribution shift issue while retaining sample efficiency.
Similar to our earlier results, the labeled-sample complexity only depends on the complexity of specialist model classes $\{\mathcal{H}_k\}_k$ rather than the generalist model class $\mathcal{F}$.
To our knowledge, \Cref{thm:generalization_mdp} is the first theoretical guarantee on the sub-optimality gap of diffusion policies in imitation learning.

\begin{rmk}
In the current state-of-the-art analysis for single-objective imitation learning with a policy class of finite cardinality \citep{foster2024behavior}, the sub-optimality gap typically scale as $\widetilde{\mathcal{O}}(\frac{\mathcal{C}_{\mathcal{F}}}{N})$. 
In contrast, when reduced to the single-objective setting, our diffusion-policy result yields a rate of $\widetilde{\mathcal{O}}(\sqrt{\frac{\mathcal{C}_{\mathcal{F}}}{N}})$. 
The gap stems from the conversion step in Lemma \ref{main:lem:TV_bound}, which controls total variation distance by the square root of the score-matching error.
As a consequence, even if the score-matching objective enjoys a fast rate with $\widetilde{\mathcal{O}}(\frac{\mathcal{C}_{\mathcal{F}}}{N})$, the induced total variation bound inherits an additional square root and thus becomes slower. We believe that a more fine-grained analysis of the score-to-optimality translation for diffusion policies may close this gap in the future.
\end{rmk}

\section{Related Works}

Recently, the score approximation theory of deep neural network families and corresponding statistical rates for diffusion models have been developed \citep{oko2023diffusion,chen2023score,wibisono2024optimal,hu2024statistical}. Further extension of this framework to conditional diffusion models with classifier-free guidance is studied in  \citet{fu2024unveil}, and  \citet{cheng2025provable} analyzes the sample efficiency of transfer learning for diffusion models and diffusion policies across multiple source tasks.
 For a given a bound on the score matching error, there are  additional works establishing the convergence rate of discrete samplers for
diffusion models \citep{chen2022sampling,lee2023convergence,chen2023probability}. By combining score estimation and sampler discretization analyses, these results yield end-to-end bounds on distribution estimation error.

The statistical theory of multi-objective learning (MOL) has been studied
primarily in the fully supervised setting. 
For example, \citet{haghtalab2022demand} proposes an adaptive sampling strategy for
multi-distribution learning under Chebyshev (min--max) scalarization, and \citet{zhang2024optimal} further developes an improper learning algorithm achieving
optimal sample complexity. In contrast, semi-supervised MOL has remained far less
understood. In particular, it is unclear if and how unlabeled data can
improve labeled-sample efficiency.
The closest work to ours is \citet{wegel2025sample}, which shows that
semi-supervised learning can reduce labeled-sample complexity for
\emph{prediction} problems under Bregman losses; we refer readers to
\citet{wegel2025sample} for a broader discussion of related MOL theory. To the best of our knowledge, analogous guarantees for \emph{generative} diffusion models, specifically for distribution estimation under weak supervision, have not been established.

The paradigm of distilling specialized experts into a single generalist policy has been studied extensively in the literature \citep{levine2014learning}, particularly in the context of robot foundation models. Recent approaches such as RLDG \citep{xu2024rldg} and PLD \citep{xiao2025self} demonstrate that rollouts generated by lightweight, RL-trained specialists or residual actors can be used to fine-tune generalist policies, often outperforming those trained based on human demonstrations. Compared with prior works that establish the empirical effectiveness of policy distillation, we formalize the ``specialist-to-generalist'' pipeline within a MOL framework, where the goal is to achieve Pareto optimality across diverse tasks. 
Crucially, our results provide the first niformal justification for why distilling specialists is often more sample-efficient than direct multi-task learning.

\input{contents/experiments}

\section{Conclusion and Discussion}

We introduce a principled framework for multi-objective learning (MOL) with conditional diffusion models in a semi-supervised regime, where paired samples are scarce but condition data are abundant. 
We propose a two-stage training pipeline that bootstraps a generalist model from lightweight specialists through pseudo-sampling. 
We establish nonasymptotic generalization guarantees in score matching loss and distribution estimation error, highlighting that the labeled-sample complexity depends on the specialists' complexity rather than the capacity of the generalist class, and we further sharpened the rates for linear scalarizations. 
We also extend the theory to diffusion policies in sequential decision making, accounting for distribution shift induced by on-policy rollouts. Experiments on robotic manipulation and image
restoration show consistent gains over labeled-only multi-task training,
providing qualitative support for the proposed specialist-to-generalist
mechanism.
More broadly, we hope this work advances the statistical understanding of MOL for diffusion models and motivates more sample-efficient training algorithms in practice.

Although this work takes the first step towards the statistical properties of semi-supervised learning for diffusion models with multiple objectives, there are several interesting directions that we plan to address in the future research.
First, our guarantees focus on the $K$ target objectives and do not address out-of-distribution (OOD) generalization; yet robustness to unseen tasks or environments is often critical in applications such as robotics.
Second, while we study MOL in diffusion models, the underlying semi-supervised MOL framework is more general and may extend to other generative paradigms, including large language models.
Finally, on the technical side, our analysis relies on a realizability assumption for the specialist classes, which may be stringent in practice; developing guarantees under model misspecification is an important direction for future work.

\section*{Impact Statement}

This paper presents work whose goal is to advance the field of Machine
Learning. There are many potential societal consequences of our work, none of
which we feel must be specifically highlighted here.


\section*{Acknowledgements}

Y.H. and S.S. were supported by the U.S. Army Research Laboratory and the U.S. Army Research Office under Grant W911NF2010219, Office of Naval Research, and NSF. This work used Jetstream2 at Indiana University through allocation CIS240832 and CIS251212 from the Advanced Cyberinfrastructure Coordination Ecosystem: Services \& Support (ACCESS) program, which is supported by National Science Foundation grants \#2138259, \#2138286, \#2138307, \#2137603, and \#2138296.

\bibliography{ref}
\bibliographystyle{icml2026}

\newpage
\appendix
\onecolumn

\input{contents/experiment_details}

\section{Proofs}\label{app:sec:proof}

\subsection{Preliminaries}

\begin{lemma}
    If $x_0\sim p(x_0|y)$, the density of forward process $p_t(x|y)$ can be written as
    \begin{equation}\label{eq:density}
        p_t(x|y)=\int \phi_t(x|x_0)p(x_0|y)\dif x_0,\quad  \phi_t(x|x_0)=\frac{1}{(2\pi\sigma_t^2)^{\frac{d_x}{2}}}\exp\Big(-\frac{\|x-\alpha_tx_0\|^2}{2\sigma_t^2}\Big).
    \end{equation}
    Besides, the score function has the form of
    \begin{align}
        \nabla_x\log p_t(x|y)
        &=\int \nabla_x\log\phi_t(x|x_0)\frac{\phi_t(x|x_0)p(x_0|y)}{\int \phi_t(x|z)p(z|y)\dif z}\dif x_0 \label{eq:score_1}\\
        &=\frac{1}{\alpha_t}\int \nabla_x\log p(x_0|y)\frac{\phi_t(x|x_0)p(x_0|y)}{\int \phi_t(x|z)p(z|y)\dif z}\dif x_0. \label{eq:score_2}
    \end{align}
\end{lemma}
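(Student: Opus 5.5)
The plan is a direct computation from the OU transition kernel, using Assumption \ref{asp:sub_gaussian} only to justify the interchanges of derivatives and integrals.

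\emph{Density formula.} Conditional on $X_0=x_0$, the OU SDE $\dif X_t=-X_t\dif t+\sqrt{2}\dif W_t$ has the explicit solution
\begin{equation*}
X_t=e^{-t}x_0+\sqrt{2}\int_0^t e^{-(t-s)}\dif W_s .
\end{equation*}
This is Gaussian with mean $\alpha_t x_0$ and covariance $2\int_0^t e^{-2(t-s)}\dif s\, I=(1-e^{-2t})I=\sigma_t^2 I$. So its density is exactly $\phi_t(x|x_0)$. The condition $y$ is fixed throughout the forward process, so marginalizing over $x_0\sim p(\cdot|y)$ gives \eqref{eq:density}.

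\emph{First score identity \eqref{eq:score_1}.} Write
\begin{equation*}
\nabla_x\log p_t=\frac{\nabla_x p_t}{p_t}.
\end{equation*}
Differentiate under the integral to get $\nabla_x p_t=\int \nabla_x\phi_t(x|x_0)\,p(x_0|y)\dif x_0$. This interchange is justified by dominated convergence. For $t>0$, and for $x$ in a compact neighborhood, the factor $\nabla_x\phi_t=-\frac{x-\alpha_t x_0}{\sigma_t^2}\phi_t$ is bounded by $C(1+\|x_0\|)$, which is integrable against the sub-gaussian density $p(\cdot|y)$. Then substitute $\nabla_x\phi_t=\phi_t\nabla_x\log\phi_t$ and divide by $p_t(x|y)=\int\phi_t(x|z)p(z|y)\dif z$. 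Here $p_t>0$ since $\phi_t>0$.

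\emph{Second score identity \eqref{eq:score_2}.} Use the symmetry of the Gaussian kernel:
\begin{equation*}
\nabla_x\phi_t(x|x_0)=-\frac{x-\alpha_tx_0}{\sigma_t^2}\phi_t,\qquad
\nabla_{x_0}\phi_t(x|x_0)=\alpha_t\frac{x-\alpha_tx_0}{\sigma_t^2}\phi_t .
\end{equation*}
Hence $\nabla_x\phi_t=-\alpha_t^{-1}\nabla_{x_0}\phi_t$, and
\begin{equation*}
\nabla_x p_t=-\frac{1}{\alpha_t}\int \nabla_{x_0}\phi_t(x|x_0)\,p(x_0|y)\dif x_0 .
\end{equation*}
Integrate by parts in $x_0$ to move the gradient onto $p(x_0|y)$, then write $\nabla p=p\nabla\log p$. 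Dividing by $p_t$ gives \eqref{eq:score_2}.

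\emph{Main obstacle.} The delicate step is the vanishing of the boundary term in the integration by parts. For this I would use $p(\cdot|y)\in\mathcal{C}^2$ together with the Gaussian tail bound $p(x_0|y)\le C_1e^{-C_2\|x_0\|^2}$. Since $\phi_t$ is bounded for fixed $t>0$, the boundary flux $\phi_t\,p$ over large spheres of radius $r$ decays like $r^{d_x-1}e^{-C_2r^2}\to0$.

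I also need integrability of $\phi_t\nabla p(x_0|y)$. This follows if $\nabla\log p$ has at most linear growth, the standard Lipschitz-score condition discussed after Assumption \ref{asp:lip}, combined with the Gaussian tail. Failing that, I would work on balls, where $\nabla p$ is bounded by the $\mathcal{C}^2$ assumption, and pass to the limit. If $\nabla p$ is not integrable, \eqref{eq:score_2} should be read wherever its right-hand side is defined.
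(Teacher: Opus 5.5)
Your proposal is correct and follows the paper's proof: you write the score as $\nabla_x p_t/p_t$ and differentiate under the integral to get the first identity, then use $\nabla_x\phi_t=-\alpha_t^{-1}\nabla_{x_0}\phi_t$ and integrate by parts in $x_0$ to get the second. The paper just says ``by integration by parts''; your dominated-convergence step, the vanishing boundary flux, and the caveat on integrability of $\phi_t\nabla_{x_0}p(x_0|y)$ (e.g.\ under a linear-growth score condition) add rigor the paper omits without changing the route.
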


\begin{proof}
    \eqref{eq:density} can be directly implied by the definition of forward process.
    And it yields
    \begin{equation}
        \begin{aligned}
            \nabla_x\log p_t(x|y)
            &=\frac{\nabla_x p_t(x|y)}{p_t(x|y)} \\
            &=\frac{\int \nabla_x\phi_t(x|x_0)p(x_0|y)\dif x_0}{\int \phi_t(x|x_0)p(x_0|y)\dif x_0} \\
            &=\int \nabla_x\log\phi_t(x|x_0)\frac{\phi_t(x|x_0)p(x_0|y)}{\int \phi_t(x|z)p(z|y)\dif z}\dif x_0,
        \end{aligned}
    \end{equation}
    which is \eqref{eq:score_1}. 
    Moreover, noticing that $\nabla_x\phi_t(x|x_0)=-\frac{1}{\alpha_t}\nabla_{x_0}\phi_t(x|x_0)$, then by integration by parts,
    \begin{equation}
        \begin{aligned}
            \frac{\int \nabla_x\phi_t(x|x_0)p(x_0|y)\dif x_0}{\int \phi_t(x|x_0)p(x_0|y)\dif x_0}
            &= -\frac{1}{\alpha_t}\frac{\int \nabla_{x_0}\phi_t(x|x_0)p(x_0|y)\dif x_0}{\int \phi_t(x|x_0)p(x_0|y)\dif x_0} \\
            &= \frac{1}{\alpha_t}\frac{\int \phi_t(x|x_0)\nabla_{x_0}p(x_0|y)\dif x_0}{\int \phi_t(x|x_0)p(x_0|y)\dif x_0} \\
            &=\frac{1}{\alpha_t}\int \nabla_x\log p(x_0|y)\frac{\phi_t(x|x_0)p(x_0|y)}{\int \phi_t(x|z)p(z|y)\dif z}\dif x_0.
        \end{aligned}
    \end{equation}
    Hence \eqref{eq:score_2} is proved.
\end{proof}

\begin{lemma}[\citet{cheng2025provable}, Lemma B.11]\label{lem:local_rademacher}
    Let $\Phi$ be a class of functions on domain $\Omega$ and $\P$ be a probability distribution over $\Omega$. 
    Suppose that for any $\varphi\in\Phi$, $\|\varphi\|_{L^\infty(\Omega)}\leq b$, $\E_\P [\varphi]\geq 0$, and $\E_\P [\varphi^2] \leq B\E_\P [\varphi]+B_0$ for some $b,B,B_0\geq 0$. 
    Let $x_1,\cdots,x_n\overset{\mathrm{i.i.d.}}{\sim}\P$ and $\phi_n$ be a positive, non-decreasing and sub-root function such that
    \begin{equation}
        \mathcal{R}_n(\Phi_r):=\E_{\bm{\sigma}} \sup_{\varphi\in\Phi_r}\Big|\frac{1}{n}\sum_{i=1}^n\sigma_i\varphi(x_i)\Big|\leq \phi_n(r).
    \end{equation}
    where $\Phi_r:= \Big\{\varphi\in\Phi: \frac{1}{n}\sum_{i=1}^n{(\varphi(x_i))^2}\leq r\Big\}$.
    Define the largest fixed point of $\phi_n$ as $r_n^*$.
    Then for some absolute constant $C'$, with probability no less than $1-\delta$, it holds that for any $\varphi\in\Phi$,
    \begin{align}
        &\E_\P[\varphi]\leq \frac{2}{n}\sum_{i=1}^n\varphi(x_i) + C'(B\vee b)\left(r_n^* + \frac{\log\big((\log n)/\delta\big)}{n}\right)+C'\sqrt{\frac{B_0\log\big((\log n)/\delta\big)}{n}},\\
        &\frac{1}{n}\sum_{i=1}^n\varphi(x_i)\leq 2\E_\P[\varphi] + C'(B\vee b)\left(r_n^* + \frac{\log\big((\log n)/\delta\big)}{n}\right)+C'\sqrt{\frac{B_0\log\big((\log n)/\delta\big)}{n}}. 
    \end{align}
\end{lemma}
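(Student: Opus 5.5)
The plan is to adapt the proof of Theorem 3.3 / Corollary 5.4 in Bartlett, Bousquet and Mendelson (local Rademacher complexities) to the relaxed Bernstein condition $\E_\P[\varphi^2]\le B\E_\P[\varphi]+B_0$. The reason is that the additive slack $B_0$ must come out only through the variance term of Talagrand's inequality. I would prove the first inequality (population bounded by empirical) in detail. The second follows by the same argument applied to $-\varphi$, since the variance proxy is unchanged.

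\textbf{Step 1 (concentration on slices).} For a level $r>0$ let $\Phi(r):=\{\varphi\in\Phi:\E_\P[\varphi^2]\le r\}$. Bousquet's version of Talagrand's inequality, with envelope $b$ and variance proxy $r$, gives with probability $1-\delta'$
\[
\sup_{\varphi\in\Phi(r)}\Big(\E_\P[\varphi]-\tfrac1n\textstyle\sum_i\varphi(x_i)\Big)\le 4\,\E\mathcal{R}_n(\Phi(r))+\sqrt{\tfrac{2r\log(1/\delta')}{n}}+\tfrac{c\,b\log(1/\delta')}{n}.
\]
I would then peel over the dyadic levels $r_j=2^j r_0$, starting at $r_0\asymp r_n^*+\log(1/\delta)/n$ and stopping once $r_j$ exceeds $b^2$. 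There are $O(\log n)$ such levels on the relevant range. A union bound with $\delta'=\delta/\log n$ produces the factor $\log((\log n)/\delta)$.

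\textbf{Step 2 (from population to empirical localization).} The hypothesis controls the \emph{empirical}-radius complexity $\mathcal{R}_n(\Phi_r)$ through $\phi_n$. I would transfer this to $\E\mathcal{R}_n(\Phi(r))$ in the standard way. First apply Step 1 to the class $\{\varphi^2\}$, whose envelope is $b^2$ and which satisfies $\E[\varphi^4]\le b^2\E[\varphi^2]$. This shows that on the high-probability event $\Phi(r)\subseteq\Phi_{2r+c\,b^2 r_n^*}$. Combined with the concentration of the Rademacher average around its mean and with sub-rootness of $\phi_n$, this yields $\E\mathcal{R}_n(\Phi(r))\lesssim \phi_n(c\,r)+b\log(1/\delta')/n\lesssim\sqrt{r\,r_n^*}+b\log(1/\delta')/n$ for all $r\ge r_n^*$.

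\textbf{Step 3 (closing the loop).} Fix $\varphi\in\Phi$ and choose the smallest level $r_j$ with $\E_\P[\varphi^2]\le r_j$, so that $r_j\le 2(B\E_\P[\varphi]+B_0)\vee r_0$. Substituting into Steps 1--2 gives
\[
\E_\P[\varphi]-P_n\varphi\lesssim\sqrt{(B\E_\P\varphi+B_0)(r_n^*+\tfrac{\log}{n})}+\tfrac{b\log}{n},
\]
where $\log$ abbreviates $\log((\log n)/\delta)$. Splitting the square root by $\sqrt{u+v}\le\sqrt u+\sqrt v$ and applying AM--GM to the $B\E_\P\varphi$ part gives $\sqrt{B\E_\P\varphi\cdot(\cdot)}\le\tfrac12\E_\P\varphi+2B(\cdot)$. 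Rearranging then produces the factor $2$ in front of $P_n\varphi$, the $(B\vee b)(r_n^*+\log/n)$ term, and the term $\sqrt{B_0\log/n}$.

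\textbf{Main obstacle.} The delicate point is the cross term $\sqrt{B_0\,r_n^*}$ coming from $\phi_n(B_0)$ in Step 3, which the target bound does not contain. I would first try to avoid it by localizing with respect to $\E_\P[\varphi]$ instead of $\E_\P[\varphi^2]$, as in BBM's Theorem 3.3 with $T(\varphi)=\E_\P\varphi^2$, using the star-hull rescaling $\varphi\mapsto r\varphi/\max(r,B\E_\P\varphi)$. In that version the Rademacher term is evaluated only at radius $\asymp Br$, and $B_0$ enters solely through the Talagrand variance $\sqrt{B_0\log/n}$. If this is not fully clean, the fallback is to note that in the paper's applications $\phi_n(r)\asymp\sqrt{r\,\mathcal{C}\log/n}$. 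Then $\sqrt{B_0r_n^*}$ is of the same order as $\sqrt{B_0\log/n}$ up to the complexity factor, and can be absorbed into $C'$ at the cost of replacing $\log((\log n)/\delta)$ by a covering-number log factor.
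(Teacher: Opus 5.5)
\textbf{The cross term you flag cannot be removed, because the lemma as stated is false for non-small $B_0$.}

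Your Steps 1--3 are, up to bookkeeping, the paper's own argument. The paper peels by the mean rather than the variance: it uses slices $\Phi^{(j)}=\{\epsilon_{j+1}<\E_\P\varphi\le\epsilon_j\}$ with $\epsilon_j=b2^{-j}$ and $j\le\lfloor\log_2 n\rfloor$, which is where $\log((\log n)/\delta)$ comes from. It applies Bousquet's concentration on each slice to both $\varphi$ and $\varphi^2$. The resulting bound on $\frac1n\sum_i\varphi(x_i)^2$ lets it control $\mathcal R_n(\Phi^{(j)})$ by $\phi_n$, and it closes with a sub-root fixed-point argument.

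Neither of your remedies helps. Localizing by $\E_\P\varphi$ is exactly what the paper does, and the term survives. On slice $j$ the radius fed into $\phi_n$ is $r_n(\epsilon_j)\gtrsim B\epsilon_j+B_0$, so the function $F_n$ in the last step satisfies $F_n\ge\phi_n(B_0)$. Its fixed point therefore contains $\phi_n(B_0)$, which equals $\sqrt{B_0 r_n^*}$ when $\phi_n(r)=\sqrt{r\,r_n^*}$. The paper drops this silently when it says ``$F_n$ is sub-root.'' Absorbing the term into an absolute $C'$ is also impossible.

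\textbf{Counterexample.} Let $\P$ be uniform on $\{-1,1\}^d$ and $\Phi=\{x\mapsto x_k: k\in[d]\}$.
\begin{itemize}
\item The hypotheses hold with $b=1$, $\E_\P\varphi=0$, $\E_\P\varphi^2=1$, $B=0$, $B_0=1$.
\item Since $\frac1n\sum_i\varphi(x_i)^2=1$, we have $\Phi_r=\emptyset$ for $r<1$ and $\Phi_r=\Phi$ for $r\ge 1$.
\item By Massart's lemma, $\phi_n(r)=\sqrt{2r\log(2d)/n}$ is admissible (positive, non-decreasing, sub-root), with $r_n^*=2\log(2d)/n$.
\item The first inequality would force $\max_k\bigl(-\frac1n\sum_i x_{ik}\bigr)\lesssim \log(2d)/n+\sqrt{\log((\log n)/\delta)/n}$.
\item The left side is a maximum of $d$ independent normalized Rademacher sums, hence of order $\sqrt{\log d/n}$ with high probability.
\end{itemize}
With $d=n$ and fixed $\delta$ this fails for large $n$; the second inequality fails symmetrically.

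\textbf{What is actually provable.} Your argument and the paper's both prove the bound with $C'\sqrt{B_0\log((\log n)/\delta)/n}$ replaced by $C'\sqrt{B_0\,(r_n^*+\log((\log n)/\delta)/n)}$. Equivalently, since $\sqrt{B_0r_n^*}\le\tfrac12(B\vee b)r_n^*+\tfrac{B_0}{2(B\vee b)}$, it is the stated bound plus an additive $C'B_0/(B\vee b)$. This corrected form is all that is used downstream. In Lemma~\ref{app:lem:generalization_small}, $B_0=4C_1'M^2e^{-C_2'R^2}$ and $B\vee b=4M$. The extra term is therefore $C_1'Me^{-C_2'R^2}$, which already appears in \eqref{eq:bound_erm_3}.

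So state and prove the corrected lemma rather than trying to eliminate the cross term. Your fallback is this correction in disguise. It should not be presented as absorbing the term into an absolute constant: the required factor is unbounded, for example $\sqrt{\log d/\log\log n}$ in the example above.
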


\begin{proof}
    We follow the procedures in \citet{bousquet2002concentration}.
    Let $\epsilon_j=b2^{-j}$ and consider a sequence of classes
    \begin{equation}
        \Phi^{(j)}:=\{\varphi\in\Phi: \epsilon_{j+1}<\E_\P[\varphi] \leq \epsilon_j\}.
    \end{equation}
    Note that $\Phi=\cup_{j\geq 0}\Phi^{(j)}$ and for $\varphi\in\Phi^{(j)}$, $\E_\P[\varphi^2]\leq B\epsilon_k+B_0$.
    Let $j_0=\lfloor\log_2 n\rfloor$.
    Then by \citet[Lemma 6.1]{bousquet2002concentration}, it holds that with probability no less than $1-\delta$, for any $j\leq j_0$ and $\varphi\in\Phi^{(j)}$,
    \begin{align}
        &\Big|\frac{1}{n}\sum_{i=1}^n\varphi(x_i)-\E_\P[\varphi]\Big|\lesssim \mathcal{R}_n(\Phi^{(j)})+\sqrt{\frac{(B\epsilon_j+B_0)\log\big(\log(b/\epsilon_j)/\delta\big)}{n}} + \frac{b\log\big(\log(b/\epsilon_j)/\delta\big)}{n}, \label{eq:rademacher_1}\\
        &\Big|\frac{1}{n}\sum_{i=1}^n(\varphi(x_i))^2-\E_\P[\varphi^2]\Big|\lesssim b\mathcal{R}_n(\Phi^{(j)})+\sqrt{\frac{b^2(B\epsilon_j+B_0)\log\big(\log(b/\epsilon_j)/\delta\big)}{n}} + \frac{b^2\log\big(\log(b/\epsilon_j)/\delta\big)}{n}. \label{eq:rademacher_2}
    \end{align}
    Besides, for $\varphi\in\cup_{k>k_0}\Phi^{(j)}=:\Phi^{(j_0:)}$,
    \begin{equation}\label{eq:rademacher_3}
        \Big|\frac{1}{n}\sum_{i=1}^n\varphi(x_i)-\E_\P[\varphi]\Big|\lesssim \mathcal{R}_n(\Phi^{(j_0:)})+\sqrt{\frac{(B\epsilon_{j_0}+B_0)\log\big(\log(n)/\delta\big)}{n}} + \frac{b\log\big((\log n)/\delta\big)}{n}
    \end{equation}
    
    From now on we reason on the conjunction of \eqref{eq:rademacher_1}, \eqref{eq:rademacher_2} and \eqref{eq:rademacher_3}.
    Define 
    \begin{equation}\label{eq:def_u}
        U_j = B\epsilon_j+B_0+b\mathcal{R}_n(\Phi^{(k)})+\sqrt{\frac{b^2(B\epsilon_j+B_0)\log\big(\log(b/\epsilon_j)/\delta\big)}{n}} + \frac{b^2\log\big(\log(b/\epsilon_j)/\delta\big)}{n}.
    \end{equation}
    and thus for any $\varphi\in\Phi^{(j)}$, we have $\frac{1}{n}\sum_{i=1}^n(\varphi(x_i))^2\leq CU_j$ for some absolute constant $C$ by \eqref{eq:rademacher_2}, indicating that
    $\mathcal{R}_n(\Phi^{(j)})\leq \phi_n(CU_j)\leq \sqrt{C}\phi_n(U_j)$.
    For any $j\leq j_0$,
    \begin{equation}
        U_j \leq 2(B\epsilon_j+B_0)+b\sqrt{C}\phi_n(U_j)+\frac{2b^2\log\big((\log n)/\delta\big)}{n}.
    \end{equation}
    Since $\phi_n$ is non-decreasing and sub-root, the inequality above implies that
    \begin{equation}
        U_j \lesssim b^2r_n^*+B\epsilon_j+B_0+\frac{b^2\log\big((\log n)/\delta\big)}{n}=:r_n(\epsilon_j).
    \end{equation}
    Therefore, for any $\varphi\in\Phi^{(j)},j\leq j_0$, by \eqref{eq:rademacher_1},
    \begin{equation}
        \begin{aligned}
            \Big|\frac{1}{n}\sum_{i=1}^n\varphi(x_i)-\E_\P[\varphi]\Big| 
            &\lesssim \phi_n(r_n(\epsilon_j))+\sqrt{\frac{(B\epsilon_j+B_0)\log\big((\log n)/\delta\big)}{n}}+\frac{b\log\big((\log n)/\delta\big)}{n} \\
            &=: F_n(\epsilon_j).
        \end{aligned}
    \end{equation}
    Noticing that $\E_\P[\varphi]\leq \epsilon_j\leq 2\E_\P[\varphi]$, it reduces to
    \begin{equation}
        \Big|\frac{1}{n}\sum_{i=1}^n\varphi(x_i)-\E_\P[\varphi]\Big|\lesssim F_n(\E_\P[\varphi]).
    \end{equation}
    Hence we have by noting that $F_n$ is also a non-decreasing sub-root function, 
    \begin{align}
        &\E_\P[\varphi]\leq \frac{2}{n}\sum_{i=1}^n\varphi(x_i) + C'(B\vee b)\left(r_n^* + \frac{\log\big((\log n)/\delta\big)}{n}\right)+C'\sqrt{\frac{B_0\log\big((\log n)/\delta\big)}{n}},\\
        &\frac{1}{n}\sum_{i=1}^n\varphi(x_i)\leq 2\E_\P[\varphi]+C'(B\vee b)\left(r_n^* + \frac{\log\big((\log n)/\delta\big)}{n}\right)+C'\sqrt{\frac{B_0\log\big((\log n)/\delta\big)}{n}}.
    \end{align}
    Here $C'$ is an absolute constant. Moreover, when $\varphi\in\Phi^{(j)}$ for $j>j_0$, we have $\E_\P[\varphi]\leq \frac{b}{n}$, and according to \eqref{eq:rademacher_3},
    \begin{equation}
        \Big|\frac{1}{n}\sum_{i=1}^n\varphi(x_i)-\E_\P[\varphi]\Big|\lesssim F_n(\varepsilon_{j_0}).
    \end{equation}
    Hence the same bounds apply, which completes the proof.
\end{proof}

\begin{lemma}\label{lem:TV_bound}
    Suppose for any $y$, $\mathrm{KL}(\P(\cdot|y)\|\mathcal{N}(0,I))\leq C_{\mathrm{KL}}$ for some constant $C_{\mathrm{KL}}$. Then
    \begin{equation}
        \E_{y\sim\P^Y}\mathrm{TV}(\P_{\widehat{h}}(\cdot|y),\P(\cdot|y))\lesssim \sqrt{T_0}\log^{\frac{d_x+1}{2}}(1/T_0) + e^{-T} + \sqrt{L_\P(\widehat{h})}.
    \end{equation}
\end{lemma}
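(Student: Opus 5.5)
We decompose the total variation error along the three sources of discrepancy in the sampling pipeline: initialization at $\mathcal{N}(0,I)$ instead of $\P_T(\cdot|y)$, early stopping at $T_0$, and replacing the true score by $\widehat{h}$. Fix $y$. Let $Q^y$ denote the path law on $[0,T-T_0]$ of the true reverse SDE \eqref{eq:backward_sde} started from $\P_T(\cdot|y)$; its time-$(T-T_0)$ marginal is $\P_{T_0}(\cdot|y)$. Let $\widehat{Q}^y$ be the path law of the same SDE driven by $\widehat{h}$ and started from $\mathcal{N}(0,I)$, whose terminal marginal is $\P_{\widehat{h}}(\cdot|y)$ (interpreted as the early-stopped output). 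The triangle inequality gives
\begin{equation*}
\mathrm{TV}(\P_{\widehat{h}}(\cdot|y),\P(\cdot|y))\le \mathrm{TV}(\P_{\widehat{h}}(\cdot|y),\P_{T_0}(\cdot|y))+\mathrm{TV}(\P_{T_0}(\cdot|y),\P(\cdot|y)).
\end{equation*}

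For the first term, we apply data processing to the path laws and then Pinsker, so that it is at most $\sqrt{\tfrac12\mathrm{KL}(Q^y\|\widehat{Q}^y)}$. The chain rule together with Girsanov's theorem, in the form used by \citet{chen2022sampling}, which only needs $L^2$ score control and Novikov-free approximation arguments, gives
\begin{equation*}
\mathrm{KL}(Q^y\|\widehat{Q}^y)\le \mathrm{KL}(\P_T(\cdot|y)\|\mathcal{N}(0,I))+\int_{T_0}^T\E_{x_t\sim\P_t(\cdot|y)}\|\widehat{h}(x_t,y,t)-\nabla\log p_t(x_t|y)\|^2\,\dif t .
\end{equation*}
By exponential contraction of the OU semigroup in KL, the first term is at most $e^{-2T}C_{\mathrm{KL}}$. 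The integral equals $(T-T_0)$ times the inner quantity of $L_\P(\widehat{h})$ at fixed $y$. We then use $\sqrt{a+b}\le\sqrt a+\sqrt b$ and take $\E_{y\sim\P^Y}$, with Jensen moving the expectation inside the square root. This produces $e^{-T}+\sqrt{T\,L_\P(\widehat{h})}$. The factor $T$ is logarithmic under the intended choice $T\asymp\log(1/\varepsilon)$ and is absorbed into $\lesssim$, as in the informal statement.

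For the early-stopping term, we bound $\mathrm{TV}(\P_{T_0}(\cdot|y),\P(\cdot|y))$ directly using \Cref{asp:sub_gaussian}. First, truncate to the ball $\|x\|\le R$ with $R\asymp\sqrt{\log(1/T_0)}$; by the sub-Gaussian tail, the mass outside is at most $T_0$. Inside the ball, write $p_{T_0}(x|y)-p(x|y)=\int\phi_{T_0}(x|x_0)\big(p(x_0|y)-p(x|y)\big)\dif x_0$. The Gaussian kernel concentrates on $\|x_0-x/\alpha_{T_0}\|\lesssim\sigma_{T_0}\sqrt{\log(1/T_0)}$, and $\sigma_{T_0}\approx\sqrt{2T_0}$. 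Continuity and the $\mathcal{C}^2$ regularity of $p$ then bound the integrand by roughly $\sqrt{T_0\log(1/T_0)}$ times a local density scale. Integrating over the ball of volume $\asymp R^{d_x}=\log^{d_x/2}(1/T_0)$ yields the bound $\sqrt{T_0}\log^{(d_x+1)/2}(1/T_0)$. An alternative route avoids derivatives of $p$ altogether: bound $\mathrm{TV}$ by $\|x-\alpha_{T_0}x\|$-type coupling smoothing, i.e.\ compare the law of $\alpha_{T_0}X_0+\sigma_{T_0}Z$ to that of $X_0$ using the density's modulus on the truncated region.

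The main obstacle is this last step. The assumptions give only pointwise $\mathcal{C}^2$ regularity, with no uniform gradient bound, so getting the stated dimension-dependent logarithmic power requires care. I would first try to control the local oscillation of $p(\cdot|y)$ through the Lipschitz-score consequence noted after \Cref{asp:lip}, namely $|\nabla\log p|\lesssim 1+\|x\|\lesssim R$ on the ball. This gives $|p(x_0|y)-p(x|y)|\lesssim R\|x_0-x\|\,p(x|y)\,e^{O(R\|x_0-x\|)}$. Since $R\|x_0-x\|\lesssim\sqrt{T_0}\log(1/T_0)\to0$, this closes the estimate.
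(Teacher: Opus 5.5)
\textbf{Initialization and score error.} This part matches the paper's proof in substance. The paper inserts an intermediate process started from $\P_T(\cdot|y)$ but driven by $\widehat{h}$, and bounds the two pieces separately: data processing, Pinsker and OU contraction for the initialization, and Pinsker and Girsanov for the score. You merge them into a single path-space KL via the chain rule, as in \citet{chen2022sampling}. Both routes give $\sqrt{C_{\mathrm{KL}}}\,e^{-T}$ plus the square root of the time-integrated score error. Your explicit factor in $\sqrt{T\,L_\P(\widehat{h})}$ is correct. The Girsanov integral is $(T-T_0)$ times the uniform-in-$t$ average that defines $L_\P$, and the paper's proof incurs the same factor but absorbs it into $\lesssim$ without comment.

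\textbf{Early stopping.} Here the routes differ. The paper does not prove $\mathrm{TV}(\P_{T_0}(\cdot|y),\P(\cdot|y))\lesssim\sqrt{T_0}\log^{(d_x+1)/2}(1/T_0)$. It imports the bound from Lemma D.5 of \citet{fu2024unveil}, which relies on that paper's own smoothness conditions on the conditional density.

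You prove it directly, and your second argument is sound. Under $\|\nabla\log p(z|y)\|\lesssim 1+\|z\|$, the ratio $p(x_0|y)/p(x|y)$ is $1+O(\sqrt{T_0}\log(1/T_0))$ on the effective kernel window. Because this error is relative to $p(x|y)$, you integrate against the density rather than over the volume of the ball, which even yields $\sqrt{T_0}\log(1/T_0)$.

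However, that bound on the time-zero score is an extra hypothesis; you cannot take it from the paper. The remark after \Cref{asp:lip} states the implication in the other direction: Lipschitz $\nabla\log p$ implies linear growth. Also, \Cref{asp:lip} with realizability at best constrains $s^*(\cdot,y,t)$ on the window $[T_0,T]$ where scores are evaluated. You should state the hypothesis explicitly.

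Some quantitative regularity of this kind is unavoidable, which is also why your first, $\mathcal{C}^2$-based route stalls. Consider $p(x)\propto e^{-x^2}\big(1+\tfrac12\sin(e^{x^4})\big)$. It is smooth, sub-Gaussian, and has finite KL to $\mathcal{N}(0,1)$. Yet the kernel averages out its oscillations wherever $|x|\gtrsim\log^{1/4}(1/T_0)$, so $\mathrm{TV}(\P_{T_0},\P)$ is at least of order $e^{-c\sqrt{\log(1/T_0)}}$, far above $\sqrt{T_0}\,\mathrm{polylog}(1/T_0)$. The paper's citation hides an assumption of the same type, so this is a gap in the lemma's stated hypotheses rather than in your method.

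A minor slip: $\int\phi_{T_0}(x|x_0)\,\dif x_0=\alpha_{T_0}^{-d_x}$, not $1$. Your identity for $p_{T_0}-p$ therefore omits a term $(\alpha_{T_0}^{-d_x}-1)p(x|y)$, which contributes only $O(d_xT_0)$.
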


\begin{proof}
    With a little abuse of notation, we will use $p_t(x_t|y)$ to denote the conditional density of $x_t|y$ under $p(x|y)$.
    Consider the following backward processes
    \begin{align}
        &d\widehat{x}_t=(\widehat{x}_t+2\widehat{h}(\widehat{x}_t,y,T-t))\dif t+\sqrt{2}\dif W_t,\ \widehat{x}_0\sim \mathcal{N}(0,I), 0\leq t\leq T-T_0,
        \\
        &d\widetilde{x}_t=(\widetilde{x}_t+2\widehat{h}(\widetilde{x}_t,y,T-t))\dif t+\sqrt{2}\dif W_t,\ \widetilde{x}_0\sim \P_T(\cdot|y), 0\leq t\leq T-T_0,
        \\
        &d\bar{x}_t=(\bar{x}_t+2\nabla\log p_{T-t}(\bar{x}_t|y))\dif t+\sqrt{2}\dif W_t,\ \bar{x}_0\sim \P_T(\cdot|y), 0\leq t\leq T-T_0.
    \end{align}
    Denote the distribution of $\widetilde{x}_t|y$ as $\widetilde{\P}_{T-t}(\cdot|y)$.
    And note that $\bar{x}_t|y\sim \P_{T-t}(\cdot|y)$ according to classic reverse-time SDE results \citep{anderson1982reverse}. 
    Then by \citet[Lemma D.5]{fu2024unveil},
    \begin{equation}
        \mathrm{TV}(\P_{T_0}(\cdot|y),\P_0(\cdot|y))\lesssim \sqrt{T_0}\log^{\frac{d_x+1}{2}}(1/T_0).
    \end{equation}
    At the same time, we apply Data Processing inequality and Pinsker's inequality to get
    \begin{equation}
        \mathrm{TV}(\P_{\widehat{h}}(\cdot|y),\widetilde{\P}_{T_0}(\cdot|y))\leq \mathrm{TV}(\P_{T}(\cdot|y),\mathcal{N}(0,I))\lesssim \sqrt{\mathrm{KL}(\P_{T}(\cdot|y)\|\mathcal{N}(0,I))}\lesssim \sqrt{\mathrm{KL}(\P_0(\cdot|y)\|\mathcal{N}(0,I))}e^{-T}.
    \end{equation}
    Again according to Pinsker's inequality and Girsanov's Theorem \citep[Proposition D.1]{oko2023diffusion},
    \begin{equation}
        \begin{aligned}
            \mathrm{TV}(\P_{T_0}(\cdot|y),\widetilde{\P}_{T_0}(\cdot|y))
            &\lesssim \sqrt{\mathrm{KL}(\P_{T_0}(\cdot|y)\|\widetilde{\P}_{T_0}(\cdot|y))}\\
            &\lesssim \sqrt{\E_{t,\bar{x}_t|y}\|\widehat{h}(\bar{x}_t,y,T-t)-\nabla\log p_{T-t}(\bar{x}_t|y)\|^2}\\
            &= \sqrt{\E_{x\sim\P(\cdot|y)}[\ell(x,y,\widehat{h})-\ell(x,y,s_\P^*)]}.
        \end{aligned}
    \end{equation}
    We complete the proof by combining three inequalities above and taking expectation over $y\sim\P^Y$.
\end{proof}

\begin{lemma}\label{lem:tv_truncation_error}
    Let two probability distributions $\P,\mathbb{Q}\in\mathcal{P}(\R^d)$ satisfy $\mathrm{TV}(\P,\mathbb{Q})\leq \varepsilon$. 
    For a measurable set $\Omega\subset\R^d$, define the truncated distribution $\widetilde{\P}(A)=\P(A\cap\Omega)/\P(\Omega)$ for any measurable set $A\subset \R^d$. If $\mathbb{Q}(\Omega)\geq 1-\delta$ and $\delta+\varepsilon\leq \frac{1}{2}$, then we have $\mathrm{TV}(\widetilde{\P},\mathbb{Q})\leq \delta+2\varepsilon$.
\end{lemma}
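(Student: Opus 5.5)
The plan is to use the triangle inequality through $\P$, $\mathrm{TV}(\widetilde{\P},\mathbb{Q})\leq \mathrm{TV}(\widetilde{\P},\P)+\mathrm{TV}(\P,\mathbb{Q})$, and bound the first term via $\P(\Omega)$.

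First, lower bound the mass of $\Omega$ under $\P$. Since $\mathrm{TV}(\P,\mathbb{Q})\leq\varepsilon$ means $|\P(A)-\mathbb{Q}(A)|\leq\varepsilon$ for every measurable $A$, taking $A=\Omega$ gives $\P(\Omega)\geq \mathbb{Q}(\Omega)-\varepsilon\geq 1-\delta-\varepsilon\geq\frac12>0$. In particular the truncation $\widetilde{\P}$ is well defined; this is the only place the hypothesis $\delta+\varepsilon\leq\frac12$ enters.

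Next, compute $\mathrm{TV}(\widetilde{\P},\P)$ exactly. Write $p=\P(\Omega)$. For measurable $A$ we have
\begin{equation*}
\widetilde{\P}(A)-\P(A)=\P(A\cap\Omega)\Big(\frac1p-1\Big)-\P(A\cap\Omega^c).
\end{equation*}
The supremum of this over $A$ is attained at $A=\Omega$ and equals $1-p$; the infimum is attained at $A=\Omega^c$ and equals $-(1-p)$. Hence $\mathrm{TV}(\widetilde{\P},\P)=1-p=\P(\Omega^c)\leq\delta+\varepsilon$.

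Combining the two steps gives $\mathrm{TV}(\widetilde{\P},\mathbb{Q})\leq(\delta+\varepsilon)+\varepsilon=\delta+2\varepsilon$, which is the claim. I do not expect any real obstacle here. The only care needed is the exact computation of $\mathrm{TV}(\widetilde{\P},\P)$, which can also be checked through densities: $\tfrac12\int|\tilde p-p|=\tfrac12\big[(1/p-1)p+(1-p)\big]=1-p$.
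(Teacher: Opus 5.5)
Your proof is correct and matches the paper's argument: you lower bound $\P(\Omega)\geq 1-\delta-\varepsilon$ via the TV bound, use $\mathrm{TV}(\widetilde{\P},\P)=1-\P(\Omega)$, and conclude with the triangle inequality through $\P$. The one addition is that you verify $\mathrm{TV}(\widetilde{\P},\P)=1-\P(\Omega)$ explicitly, which the paper only asserts.
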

\begin{proof}
    Since $\mathrm{TV}(\P,\mathbb{Q})\leq \varepsilon$, we have $|\P(\Omega)-\mathbb{Q}(\Omega)|\leq \varepsilon$.
    Therefore,
    \begin{equation*}
        \P(\Omega)
        \geq \mathbb{Q}(\Omega)-\varepsilon
        \geq 1-\delta-\varepsilon
        \geq \frac12.
    \end{equation*}
    Notice that $\mathrm{TV}(\widetilde{\P},\P)=1-\P(\Omega)$ and by the triangle inequality,
    \begin{equation}
        \mathrm{TV}(\widetilde{\P},\mathbb{Q})
        \leq 
        \mathrm{TV}(\widetilde{\P},\P)
        +
        \mathrm{TV}(\P,\mathbb{Q})\leq (\delta+\varepsilon)+\varepsilon=\delta+2\varepsilon.
    \end{equation}
    This concludes the proof.
\end{proof}

\subsection{Proof in Sec. \ref{subsec:generalization}}

\begin{lemma}\label{app:lem:generalization_small}
    Assume $n\gtrsim d_xM_1^{\frac{1}{2}}\log(d_xM_1M_0K/\delta)+M_0+\frac{\log(1/T_0)}{T-T_0}$. Under \Cref{asp:sub_gaussian,asp:lip,asp:realizability}, it holds with probability no less than $1-\delta$ that for any $k\in[K]$,
    \begin{equation}
        L_{\P_k}(\widehat{h}_k)\lesssim \left(M_0^2+d_xM_1^2\log(\frac{nK}{\delta})\right)\cdot\frac{\log\mathcal{N}(\mathcal{H}_k,\|\cdot\|_{L^\infty(\Omega_R)},\frac{1}{n^2})+\log(K/\delta)}{n},
    \end{equation}
    where $R\lesssim\log^{\frac{1}{2}}(nK/\delta)$ and $\Omega_R:=[-R,R]^{d_x}\times[0,1]^{d_y}\times[T_0,T]$.
\end{lemma}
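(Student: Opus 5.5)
The plan is a localized (fast-rate) empirical-risk-minimization argument for each fixed $k$, followed by a union bound over $k\in[K]$ (which produces the $\log(K/\delta)$ terms). Fix $k$ and define the excess-loss class
\begin{equation*}
\Phi_k:=\{\varphi_h(x,y):=\ell(x,y,h)-\ell(x,y,s_k^*):h\in\mathcal{H}_k\},
\end{equation*}
so that $\E_{\P_k}[\varphi_h]=L_{\P_k}(h)\geq 0$. By realizability (\Cref{asp:realizability}), $s_k^*\in\mathcal{H}_k$. Hence the ERM satisfies $\frac1n\sum_i\varphi_{\widehat{h}_k}(x_i^k,y_i^k)\leq 0$. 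Applying the first inequality of Lemma \ref{lem:local_rademacher} then gives $L_{\P_k}(\widehat{h}_k)\lesssim (B\vee b)(r_n^*+\log(\log n/\delta)/n)+\sqrt{B_0\log(\cdot)/n}$. The goal is to verify its hypotheses with $B_0$ negligible (of order $1/n^2$) and $b,B\lesssim M_0^2+d_xM_1^2\log(nK/\delta)$, and to bound the fixed point $r_n^*\lesssim \log\mathcal{N}(\mathcal{H}_k,1/n^2)/n$.

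\textbf{Truncation.} The losses are unbounded, since $\nabla\log\phi_t(x_t|x)=-(x_t-\alpha_tx)/\sigma_t^2$ blows up as $\|x\|\to\infty$ or $t\to T_0$. I will therefore truncate. By \Cref{asp:sub_gaussian}, with probability $1-\delta/(2K)$ all samples satisfy $\|x_i^k\|\leq R\asymp\log^{1/2}(nK/\delta)$. I will also restrict $x_t$ to $\Omega_R$, i.e.\ $\|x_t\|\lesssim R$ coordinatewise. The clean approach is to work with the conditional expectation $\ell$ in the form $\E_{t}\E_{x_t|x}\|h(x_t,y,t)-s_k^*(x_t,y,t)\|^2+2\langle h-s_k^*,s_k^*-\nabla\log\phi_t\rangle$, and to write $\varphi_h$ as a function of $(x,y)$. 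The Gaussian tails of $x_t|x$ together with the linear growth in \Cref{asp:lip} show that the contribution outside $\Omega_R$ and outside $\{\|x\|\leq R\}$ is at most $1/n^2$ in both first and second moments. This tail mass is absorbed into $B_0$, and its square-root contribution is $O(n^{-1})$. On the truncated domain, $|\varphi_h|\lesssim (M_0+M_1R)^2+\E\|\nabla\log\phi_t\|\,(M_0+M_1R)$. The $\sigma_t^{-1}$ factor from the Gaussian score, integrated over $t\in[T_0,T]$, contributes only $\log(1/T_0)/(T-T_0)$, which is handled by the sample-size assumption. This yields $b\lesssim M_0^2+d_xM_1^2\log(nK/\delta)$.

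\textbf{Bernstein condition.} The key structural fact is that the cross term has conditional mean zero given $(x_t,y,t)$: $\E[\nabla\log\phi_t(x_t|x)\mid x_t,y]=s_k^*(x_t,y,t)$. To check $\E[\varphi_h^2]\leq B\,\E[\varphi_h]+B_0$, I will bound $\varphi_h^2$ by Cauchy--Schwarz. This gives
\begin{equation*}
\varphi_h^2\lesssim \big(\E_{t,x_t|x}\|h-s_k^*\|^2\big)^2+\E_{t,x_t|x}\|h-s_k^*\|^2\cdot\E_{t,x_t|x}\|s_k^*-\nabla\log\phi_t\|^2 .
\end{equation*}
Each factor multiplying $\E_{t,x_t|x}\|h-s_k^*\|^2$ is bounded by $b$ on the truncated region. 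Taking expectations over $(x,y)$ then gives $B\lesssim b$, since $\E_{\P_k}\E_{t,x_t|x}\|h-s_k^*\|^2=L_{\P_k}(h)$. I expect this to be the main obstacle. The variance term $\E\|s_k^*-\nabla\log\phi_t\|^2$ scales like $d_x/\sigma_t^2$, so the naive bound carries a $1/T_0$ factor. To avoid it I will use the time-averaged form together with the condition $n\gtrsim\log(1/T_0)/(T-T_0)$.

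\textbf{Fixed point.} For $r_n^*$, I will take a $1/n^2$-net of $\mathcal{H}_k$ in $L^\infty(\Omega_R)$. Since $\varphi_h$ is Lipschitz in $h$ in sup-norm on the truncated region, with constant $\lesssim\sqrt b$, Dudley's (or Massart's) bound gives $\mathcal{R}_n(\Phi_r)\lesssim\sqrt{r\log\mathcal{N}/n}+b\log\mathcal{N}/n+1/n^2$. This is a sub-root function whose fixed point is $r_n^*\lesssim b\log\mathcal{N}(\mathcal{H}_k,1/n^2)/n$. Plugging everything into Lemma \ref{lem:local_rademacher} and taking a union bound over $k$ yields the stated bound.
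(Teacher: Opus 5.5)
Your plan follows the paper's proof step for step. The paper truncates at $\|x\|_\infty\le R$ and bounds the excess loss uniformly by $M:=8d_x(M_1^2R^2+\log(1/T_0)/(T-T_0))+8M_0^2$. It gets the Bernstein condition with $B\lesssim M$ by Cauchy--Schwarz over the joint law of $(t,x_t)$; this is automatically time-averaged, so the $1/T_0$ issue you flag is not a real obstacle. It then reduces the empirical-$L^2$ covering of the loss class to the $L^\infty(\Omega_R)$ covering of $\mathcal{H}_k$ through the $2\sqrt{M}$-Lipschitz dependence on $h$, applies Lemma \ref{lem:local_rademacher}, uses realizability for the ERM, and takes a union bound over $k$. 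You do the same.

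However, your final step does not give the stated bound. Lemma \ref{lem:local_rademacher} multiplies $r_n^*$ by $B\vee b\asymp b$. Your fixed point $r_n^*\lesssim b\log\mathcal{N}(\mathcal{H}_k,1/n^2)/n$ therefore yields $L_{\P_k}(\widehat{h}_k)\lesssim b^2\log\mathcal{N}(\mathcal{H}_k,1/n^2)/n$, so the prefactor $M_0^2+d_xM_1^2\log(nK/\delta)$ appears squared. This also contradicts the target $r_n^*\lesssim\log\mathcal{N}/n$ you set in your first paragraph.

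The culprit is the additive $b\log\mathcal{N}/n$ term in your Rademacher bound. Such a term arises when one localizes by the population variance. Here $\Phi_r$ is localized by the empirical second moment $\frac{1}{n}\sum_i\varphi(x_i,y_i)^2\le r$. So Dudley's integral from $\theta\asymp\sqrt{b}/n^2$ to $2\sqrt{r}$ (equivalently, Massart's lemma on the $1/n^2$-net) gives just $\mathcal{R}_n(\Phi_r)\lesssim\sqrt{b}/n^2+\sqrt{r\log\mathcal{N}(\mathcal{H}_k,1/n^2)/n}$. Its fixed point is $r_n^*\lesssim\log\mathcal{N}(\mathcal{H}_k,1/n^2)/n+\sqrt{b}/n^2$. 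Multiplying by $b$, with the lower-order $b^{3/2}/n^2$ piece absorbed under the sample-size assumption, recovers exactly the prefactor in the statement, as in the paper.

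A side remark: absorbing the $d_x\log(1/T_0)/(T-T_0)$ part of $b$ into $M_0^2+d_xM_1^2\log(nK/\delta)$ does not follow from the lower bound on $n$. It comes from choosing $T\asymp\log n$ and $T_0=1/\mathrm{poly}(n)$, which makes that ratio $O(\log(nK/\delta))$. The paper is equally loose on this point.
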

\begin{proof}
    For notation compactness, we will omit the task index $k$ in the following statements.
    Consider the truncated function class defined on $\R^{d_x}\times[0,1]^{d_y}$,
    \begin{equation}
        \Phi:=\{(x,y)\mapsto \widetilde{\ell}(x,y,h):=(\ell(x,y,h)-\ell(x,y,s^*))\cdot\mathbbm{1}_{\|x\|_\infty\leq R}:h\in\mathcal{H}\},
    \end{equation}
    where the truncation radius $R\geq 1$ will be specified later.
    Since $p(x|y)$ is sub-gaussian, it is easy to show that with probability no less than $1-2nd_x\exp(-C_2'R^2)$, it holds that $\|x_i\|_\infty\leq R$ for all $1\leq i\leq n$.
    Hence by definition, the empirical minimizer also satisfies $\widehat{h}= \argmin_{h\in\mathcal{\mathcal{H}}} \frac{1}{n}\sum_{i=1}^n \widetilde{\ell}(x_i,y_i,h)$.
    Below we reason conditioned on this event and verify the conditions required in Lemma \ref{lem:local_rademacher}.
    \begin{enumerate}[label=\textbf{Step \arabic*.}]
        \item To bound the individual loss, when $\|x\|_\infty\leq R$,
        \begin{equation}\label{eq:bound_individual}
            \begin{aligned}
                |\widetilde{\ell}(x,y,\cdot)|
                &\leq\sup_{h\in\mathcal{H}} \E_{t,x_t}\|h(x_t,y,t)-\nabla\log\phi_t(x_t|x)\|^2\\
                &\leq \sup_{h\in\mathcal{H}}2[\E_{t,x_t}\|h(x_t,y,t)\|^2+\E_{t,x_t}\|\nabla\log\phi_t(x_t|x)\|^2]\\
                &\leq 4\E_{t,x_t}[M_0^2+M_1^2\|x_t\|^2]+2\E_{t,\epsilon\sim\mathcal{N}(0,I)}\|\epsilon\|^2/\sigma_t\\
                &\leq 4(M_0^2+M_1^2(\|x\|^2+d_x))+4d_x(1+\frac{\log(1/T_0)}{T-T_0})\\
                &\leq 8d_x(M_1^2R^2+\frac{\log(1/T_0)}{T-T_0})+8M_0^2=:M.
            \end{aligned}
        \end{equation}
        Here we leverage the fact $\phi_t(x_t|x)=\mathcal{N}(\alpha_tx,\sigma_t^2I)$ and \Cref{asp:lip} to bound $\|h(x_t,y,t)\|$.
        When $\|x\|_\infty>R$, $\widetilde{\ell}(x,y,h)=0$. Therefore we have $|\widetilde{\ell}(x,y,h)|\leq M$.
        \item To bound the second order moment, 
        \begin{equation}
            \begin{aligned}
                &\E_{(x,y)\sim\P} \left[\mathbbm{1}_{\|x\|_\infty\leq R} \left(\ell(x,y,h)-\ell(x,y,s^*)\right)^2\right] \\
                &= \E_{(x,y)\sim\P}\left[\mathbbm{1}_{\|x\|_\infty\leq R} \left(\E_{t,x_t|x}\|h(x_t,y,t)-\nabla\log\phi_t(x_t|x)\|^2-\|s^*(x_t,y,t)-\nabla\log\phi_t(x_t|x)\|^2\right)^2\right] \\
                &\leq \E_{(x,y)\sim\P} \left[\mathbbm{1}_{\|x\|_\infty\leq R}\left(\E_{t,x_t|x}\|h(x_t,y,t)-s^*(x_t,y,t)\|^2\right)\right.\\
                &\qquad\qquad\qquad \left.\cdot \left(\E_{t,x_t|x}\|h(x_t,y,t)+s^*(x_t,y,t)-2\nabla\log\phi_t(x_t|x)\|^2\right)\right] \\
                &\leq 4M\E_{(x,y)\sim\P} \left[\mathbbm{1}_{\|x\|_\infty\leq R}\left(\E_{t,x_t|x}\|h(x_t,y,t)-s^*(x_t,y,t)\|^2\right)\right] \\
                &\leq 4M\E_{(x,y)\sim\P}\left(\ell(x,y,h)-\ell(x,y,s^*)\right) \\
                &\leq 4M\E_{(x,y)\sim\P} [\widetilde{\ell}(x,y,h)] + 4C_1'M^2\exp(-C_2'R^2).
            \end{aligned}
        \end{equation}
        In the last inequality, we invoke \Cref{eq:bound_individual} and
        \begin{equation}\label{eq:bound_residual}
            \begin{aligned}
                &\E_{(x,y)\sim\P} [(\ell(x,y,h)-\ell(x,y,s^*))\cdot\mathbbm{1}_{\|x\|_\infty>R}]\\
                &\leq \E_{(x,y)\sim\P}\left[4(M_0^2+M_1^2(\|x\|^2+d_x))+4d_x(1+\frac{\log(1/T_0)}{T-T_0})\right]\cdot\mathbbm{1}_{\|x\|_\infty>R}\\
                &\leq \E_{(x,y)\sim\P} (M_1^2\|x\|^2+M)\cdot\mathbbm{1}_{\|x\|_\infty>R}+M\\
                &\leq \int_{\|x\|_\infty\geq R}C_1(M_1^2\|x\|^2+M)\exp(-C_2\|x\|^2)\dif x\\
                &\leq C_1'M\exp(-C_2'R^2).
            \end{aligned}
        \end{equation}
        \item To bound the local Rademacher complexity, note that for any $h_1,h_2\in\mathcal{H}$,
        \begin{equation}
            \Big\|\frac{1}{\sqrt{n}} \sum_{i=1}^n\sigma_i\widetilde{\ell}(x_i,y_i,h_1) - \frac{1}{\sqrt{n}}\sum_{i=1}^n\sigma_i\widetilde{\ell}(x_i,y_i,h_2) \Big\|_{\psi_2} \leq 4\|\widetilde{\ell}(\cdot,\cdot,h_1)-\widetilde{\ell}(\cdot,\cdot,h_2)\|_{L^2(\widehat{\P}_n)},
        \end{equation}
        where $\widehat{\P}_n:=\frac{1}{n}\sum_{i=1}^n\delta_{(x_i,y_i)}$.
        Define $\Phi_r:=\{\varphi\in\Phi:\frac{1}{n}\sum_{i=1}^n\varphi(x_i,y_i)^2\leq r\}$
        and it is easy to show that $\textbf{diam}\big(\Phi_r,\|\cdot\|_{L^2(\widehat{\P}_n)}\big)\leq 2\sqrt{r}$.
        By Dudley's bound \citep{van2014probability,wainwright2019high}, there exists an absolute constant $C_0$ such that for any $\theta>0$,
        \begin{equation}\label{eq:rademacher_bound_2}
            \mathcal{R}_n(\Phi_r)\leq C_0\left(\theta+\int_\theta^{2\sqrt{r}}\sqrt{\frac{\log\mathcal{N}(\Phi_r,\|\cdot\|_{L^2(\widehat{\P}_n)},\varepsilon)}{n}}\ \dif\varepsilon\right).
        \end{equation}
        Since $\|x_i\|\leq R$,
        \begin{equation}
            \begin{aligned}
                \frac{1}{n}\sum_{i=1}^n(\widetilde{\ell}(x_i,y_i,h_1)-\widetilde{\ell}(x_i,y_i,h_2))^2
                &= \frac{1}{n}\sum_{i=1}^n(\ell(x_i,y_i,h_1)-\ell(x_i,y_i,h_2))^2 \\
                &\leq \frac{1}{n}\sum_{i=1}^n \left[\E_{t,x_t|x_i}\|h_1-h_2\|^2\right]\cdot\left[\E_{t,x_t|x_i}\|h_1+h_2-2\nabla\log\phi_t\|^2\right] \\
                &\leq \frac{4M}{n}\sum_{i=1}^n \E_{t,x_t|x_i}\|h_1(x_t,y_i,t)-h_2(x_t,y_i,t)\|^2.
            \end{aligned}
        \end{equation}
        Notice that $x_t|x_i\sim\mathcal{N}(\alpha_tx_i,\sigma_t^2I)$, we have $\P(\|x_t\|_\infty\geq 2R)\leq d_x\P(|\mathcal{N}(0,1)|\leq R)\leq 2d_x\exp(-CR^2)$ for some absolute constant $C$. Therefore,
        \begin{equation}
            \begin{aligned}
                &\E_{t,x_t|x_i}\|h_1(x_t,y_i,t)-h_2(x_t,y_i,t)\|^2 \\
                &\qquad \leq \E_{t,x_t|x_i}[\mathbbm{1}_{\|x_t\|\leq 2R}\cdot\|h_1(x_t,y_i,t)-h_2(x_t,y_i,t)\|^2] + C_1'M\exp(-C_2'R^2) \\
                &\qquad \leq \|h_1-h_2\|^2_{L^\infty(\Omega_{R})} + C_1'M\exp(-C_2'R^2),
            \end{aligned}
        \end{equation}
        where $\Omega_R:=[-2R,2R]^{d_x}\times[0,1]^{d_y}\times[T_0,T]$. Plug in the bound above,
        \begin{equation}
            \sqrt{\frac{1}{n}\sum_{i=1}^n(\widetilde{\ell}(x_i,y_i,h_1)-\widetilde{\ell}(x_i,y_i,h_2))^2}
            \leq 2M^{\frac{1}{2}}\|h_1-h_2\|_{L^\infty(\Omega_{R})} + 2C_1'M^{\frac{1}{2}}\exp(-C_2'R^2/2).
        \end{equation}
        Hence for any $\varepsilon\geq 4C_1'M^{\frac{1}{2}}\exp(-C_2'R^2/2)$, 
        \begin{equation}
            \log\mathcal{N}(\Phi_r,\|\cdot\|_{L^2(\widehat{\P}_n)},\varepsilon)
            \leq \log\mathcal{N}(\mathcal{H},\|\cdot\|_{L^\infty(\Omega_{R})},\varepsilon/(4M^{\frac{1}{2}})).
        \end{equation}
        Plug in \Cref{eq:rademacher_bound_2} and let $\theta=\frac{4M^{\frac{1}{2}}}{n
        ^2}\geq 4C_1'M^{\frac{1}{2}}\exp(-C_2'R^2/2)$,
        \begin{equation}
            \begin{aligned}
                \mathcal{R}_n(\Phi_r)
                &\leq C_0\left(\frac{4M^{\frac{1}{2}}}{n^2}+\int_{4\sqrt{M}/n^2}^{2\sqrt{r}}\sqrt{\frac{\log\mathcal{N}(\mathcal{H},\|\cdot\|_{L^\infty(\Omega_R)},\varepsilon/(4M^{\frac{1}{2}}))}{n}}\dif\varepsilon\right)\\
                &\leq C_0\left(\frac{4M^{\frac{1}{2}}}{n^2}+2\sqrt{r}\cdot\sqrt{\frac{\log\mathcal{N}(\mathcal{H},\|\cdot\|_{L^\infty(\Omega_R)},\frac{1}{n^2})}{n}}\right)\\
                &=:\widetilde{\mathcal{R}}_n(r).
            \end{aligned}
        \end{equation}
    \end{enumerate}
    Combine the steps above, by Lemma \ref{lem:local_rademacher} with $B_0=4C_1'M^2\exp(-C_2'R^2),B=4M,b=M$, it holds with probability no less than $1-2nd_x\exp(-C_2'R^2)-\delta/2$, for any $h\in\mathcal{H}$,
    \begin{equation}\label{eq:bound_erm_1}
        \begin{aligned}
            \E_{(x,y)\sim\P} [\widetilde{\ell}(x,y,h)]
            &\leq \frac{2}{n}\sum_{i=1}^n \widetilde{\ell}(x_i,y_i,h) + C_3M\left(r_n^*+\frac{\log(\log(n)/\delta)}{n}\right) \\
            &\qquad + C_3\sqrt{\frac{M^2\log(\log(n)/\delta)}{n}}\exp(-C_2'R^2/2),
        \end{aligned}
    \end{equation}
    \begin{equation}\label{eq:bound_erm_2}
        \begin{aligned}
            \frac{1}{n}\sum_{i=1}^n \widetilde{\ell}(x_i,y_i,h)
            &\leq 2\E_{(x,y)\sim\P} [\widetilde{\ell}(x,y,h)] + C_3M\left(r_n^*+\frac{\log(\log(n)/\delta)}{n}\right) \\
            &\qquad + C_3\sqrt{\frac{M^2\log(\log(n)/\delta)}{n}}\exp(-C_2'R^2/2).
        \end{aligned}
    \end{equation}
    where $r_n^*$ is the largest fixed point of $\widetilde{\mathcal{R}}_n$, and it can be bounded as
    \begin{equation}
        r_n^*\leq C_4\left(\frac{\sqrt{M}}{n^2}+\frac{\log\mathcal{N}(\mathcal{H},\|\cdot\|_{L^\infty(\Omega_R)},\frac{1}{n^2})}{n}\right),
    \end{equation}
    for some absolute constant $C_4$.
    Moreover, combine \Cref{eq:bound_residual} with \Cref{eq:bound_erm_1,eq:bound_erm_2},
    \begin{equation}\label{eq:bound_erm_3}
        \begin{aligned}
            \E_{(x,y)\sim\P} [\ell(x,y,h)-\ell(x,y,s^*)]
            &\leq \frac{2}{n}\sum_{i=1}^n [\ell(x_i,y_i,h)-\ell(x_i,y_i,s^*)] \\
            &\qquad + C_5M\left(r_n^*+\frac{\log(\log(n)/\delta)}{n}+\exp(-C_2'R^2)\right), 
        \end{aligned}
    \end{equation}
    \begin{equation}\label{eq:bound_erm_4}
        \begin{aligned}
            \frac{1}{n}\sum_{i=1}^n [\ell(x_i,y_i,h)-\ell(x_i,y_i,s^*)]
            &\leq 2\E_{(x,y)\sim\P} [\ell(x,y,h)-\ell(x,y,s^*)] \\
            &\qquad + C_5M\left(r_n^*+\frac{\log(\log(n)/\delta)}{n}+\exp(-C_2'R^2)\right).
        \end{aligned}
    \end{equation}
    Plug in the definition of $M=8d_x(M_1^2R^2+\frac{\log(1/T_0)}{T-T_0})+8M_0^2$ and let $R=C\log^{\frac{1}{2}}(nd_xM_1M_0/\delta)$ for some large constant $C$. Hence \Cref{eq:bound_erm_3,eq:bound_erm_4} reduce to
    \begin{equation}
        \begin{aligned}
            \E_{(x,y)\sim\P} [\ell(x,y,h)-\ell(x,y,s^*)]
            &\leq \frac{2}{n}\sum_{i=1}^n [\ell(x_i,y_i,h)-\ell(x_i,y_i,s^*)] \\
            &\qquad + C_6\left(M_0^2+d_xM_1^2\log(\frac{n}{\delta})\right)\left(r_n^\dagger+\frac{\log(1/\delta)}{n}\right), 
        \end{aligned}
    \end{equation}
    \begin{equation}
        \begin{aligned}
            \frac{1}{n}\sum_{i=1}^n [\ell(x_i,y_i,h)-\ell(x_i,y_i,s^*)]
            &\leq 2\E_{(x,y)\sim\P} [\ell(x,y,h)-\ell(x,y,s^*)] \\
            &\qquad + C_6\left(M_0^2+d_xM_1^2\log(\frac{n}{\delta})\right)\left(r_n^\dagger+\frac{\log(1/\delta)}{n}\right), 
        \end{aligned}
    \end{equation}
    where $r_n^\dagger:=\frac{\log\mathcal{N}(\mathcal{H},\|\cdot\|_{L^\infty(\Omega_R)},\frac{1}{n^2})}{n}$ and $C_6$ is a constant. Here we use the condition $n\gtrsim d_xM_1^{\frac{1}{2}}\log(d_xM_1M_0/\delta)+M_0+\frac{\log(1/T_0)}{T-T_0}$ to simplify the notation.
    
    Define $M_{n,\delta}=M_0^2+d_xM_1^2\log(\frac{n}{\delta})$. Therefore, we obtain that with probability no less than $1-\delta$, the population loss of the empirical minimizer $\widehat{h}$ can be bounded by
    \begin{equation}
        \begin{aligned}
            \E_{(x,y)\sim\P}[\ell(x,y,\widehat{h})-\ell(x,y,s^*)]
            &\leq \frac{2}{n}\sum_{i=1}^n [\ell(x_i,y_i,\widehat{h})-\ell(x_i,y_i,s^*)] + C_6M_{n,\delta}\left(r_n^\dagger+\frac{\log(1/\delta)}{n}\right) \\
            &\leq \inf_{h\in\mathcal{\mathcal{H}}}\frac{2}{n}\sum_{i=1}^n [\ell(x_i,y_i,h)-\ell(x_i,y_i,s^*)] + C_6M_{n,\delta}\left(r_n^\dagger+\frac{\log(1/\delta)}{n}\right) \\
            &\leq 3C_6M_{n,\delta}\left(r_n^\dagger+\frac{\log(1/\delta)}{n}\right),
        \end{aligned}
    \end{equation}
    We conclude the proof by replacing $\delta$ with $\delta/K$ and invoking all arguments above for each task $k\in[K]$.
\end{proof}

\begin{thm}\label{app:thm:generalization}
    Let $\mathcal{E}_\mathcal{S}(f):=\mathcal{S}(L_{\P_1}(f),\cdots,L_{\P_K}(f))$ and assume $N,n\gtrsim d_xM_1^{\frac{1}{2}}\log(d_xM_1M_0K/\delta)+M_0$. 
    Define the truncated distribution $\widetilde{\P}_{\widehat{h}_k}(\cdot|y)=\P_{\widehat{h}_k}(\cdot|y)\cdot\frac{\mathbbm{1}_{B_R}}{\P_{\widehat{h}_k}(B_R|y)}$ denote the distribution of $\P_{\widehat{h}_k}$ restricted on $B_R=[-R,R]^{d_x}$ and suppose $\widetilde{x}_i^k\sim\widetilde{\P}_{\widehat{h}_k}(\cdot|\widetilde{y}_i^k)$. Then under \Cref{asp:sub_gaussian,asp:lip,asp:realizability}, with probability no less than $1-\delta$ and proper configurations of $T,T_0$, it holds that for any scalarization $\mathcal{S}$, $\mathcal{E}_\mathcal{S}(\widehat{f}_\mathcal{S})-\inf_{f\in\mathcal{F}}\mathcal{E}_\mathcal{S}(f)\leq \mathcal{S}(\varepsilon_1,\cdots,\varepsilon_K)$, where
    \begin{equation}
        \varepsilon_k\lesssim \left(M_0^2+d_xM_1^2\log(\frac{NK}{\delta})\right)\cdot\left(\sqrt{\frac{\log\mathcal{N}(\mathcal{H}_k,\|\cdot\|_{L^\infty(\Omega_R)},\frac{1}{n^2})+\log(\frac{K}{\delta})}{n}}+\sqrt{\frac{\log\mathcal{N}(\mathcal{F},\|\cdot\|_{L^\infty(\Omega_R)},\frac{1}{N})}{N}}\right),
    \end{equation}
    with $R\lesssim\log^{\frac{1}{2}}(NK/\delta)$ and $\Omega_R:=[-R,R]^{d_x}\times[0,1]^{d_y}\times[T_0,T]$.
\end{thm}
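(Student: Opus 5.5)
The plan is to reduce the excess scalarized risk to a uniform deviation bound between the pseudo-label objective $\widetilde{L}_k(f)$ and the true objective $L_{\P_k}(f)$, holding simultaneously over $f\in\mathcal{F}$ and $k\in[K]$. Suppose we show that with probability $1-\delta$,
\[
\sup_{f\in\mathcal{F}}\big|\widetilde{L}_k(f)-L_{\P_k}(f)\big|\le \varepsilon_k/2 \quad\text{for all } k .
\]
Then for any $\mathcal{S}$, the reverse triangle inequality and coordinatewise handling give $|\mathcal{S}(\widetilde{L}(f))-\mathcal{E}_\mathcal{S}(f)|\le \mathcal{S}(|\widetilde{L}(f)-L(f)|)$. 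Combining this with optimality of $\widehat{f}_\mathcal{S}$ for $\mathcal{S}(\widetilde{L}(\cdot))$ and positive homogeneity yields $\mathcal{E}_\mathcal{S}(\widehat{f}_\mathcal{S})-\mathcal{E}_\mathcal{S}(f)\le \mathcal{S}(\varepsilon_1,\dots,\varepsilon_K)$ for every $f$. Strictly, this step needs $\mathcal{S}$ to be monotone on nonnegative vectors to pass from $\mathcal{S}(|\cdot|)$ to $\mathcal{S}(\varepsilon)$; if that is unavailable, I will state the bound through the coordinatewise sup vector, which is what the $\varepsilon_k$ encode. Since the bound is uniform in $f$, it is also uniform in $\mathcal{S}$.

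To prove the deviation bound, I insert the intermediate quantity $\bar L_k(f):=\E_{y\sim\P_k^Y,x\sim\widetilde{\P}_{\widehat h_k}(\cdot|y)}[\ell(x,y,f)-\ell(x,y,\widehat h_k)]$ and split into two terms.

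\textbf{Term (a): concentration, $\widetilde{L}_k$ versus $\bar L_k$.} Conditional on the first stage, the pseudo-pairs are i.i.d. from the truncated model distribution and supported on $\|x\|_\infty\le R$. The per-sample loss difference is therefore bounded by $M\asymp M_0^2+d_xM_1^2R^2$, exactly as in Step 1 of Lemma \ref{app:lem:generalization_small}. As in Step 3 there, the loss is Lipschitz in $f$ under $\|\cdot\|_{L^\infty(\Omega_R)}$, up to an $e^{-cR^2}$ tail. A standard (non-localized) Rademacher/Dudley argument then gives $\sup_f|\widetilde L_k-\bar L_k|\lesssim M\sqrt{(\log\mathcal N(\mathcal F,1/N)+\log(K/\delta))/N}$. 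A union bound over $k$ with $R\asymp\log^{1/2}(NK/\delta)$ produces the stated prefactor.

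\textbf{Term (b): the pseudo-label bias, $\bar L_k(f)$ versus $L_{\P_k}(f)$.} First write $L_{\P_k}(f)=\E_{\P_k}[\ell(x,y,f)-\ell(x,y,\widehat h_k)]+L_{\P_k}(\widehat h_k)$. Lemma \ref{app:lem:generalization_small} bounds the last term by $O(\text{rate}_n)$, which is below its square root. What remains is the difference of expectations of $g_f:=\mathbbm{1}_{B_R}[\ell(\cdot,\cdot,f)-\ell(\cdot,\cdot,\widehat h_k)]$ under $\widetilde{\P}_{\widehat h_k}$ and under $\P_k$, each with the same $y$-marginal. Since $|g_f|\le M$, this difference is at most $2M\,\E_y\mathrm{TV}(\widetilde{\P}_{\widehat h_k}(\cdot|y),\P_k(\cdot|y))$, plus the tail contribution of $\P_k$ outside $B_R$, which \eqref{eq:bound_residual} controls at $Me^{-cR^2}$. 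For the TV, Lemma \ref{lem:tv_truncation_error} transfers truncation, using the sub-Gaussian mass of $\P_k$ outside $B_R$. Lemma \ref{lem:TV_bound} then gives, with $T\asymp\log n$ and $T_0\asymp n^{-1}$ up to logs, a bound $\lesssim\sqrt{L_{\P_k}(\widehat h_k)}\lesssim\sqrt{\text{rate}_n}$. This is the source of the $\sqrt{\cdot/n}$ labeled term.

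\textbf{Main obstacle.} I expect the main difficulty to be term (b), namely checking the hypotheses used there. The KL condition in Lemma \ref{lem:TV_bound} should follow from Assumption \ref{asp:sub_gaussian}. Lemma \ref{lem:tv_truncation_error} needs the TV applied to $y$-conditionals, and that TV is only small on average over $y$. I would handle this by using $\mathrm{TV}\le1$ pointwise together with the bound $\min(1,\delta+2\varepsilon(y))\le \delta+2\varepsilon(y)$, which avoids the $\delta+\varepsilon\le 1/2$ requirement. I also need to confirm that the early-stopping error $\sqrt{T_0}\log^{(d_x+1)/2}(1/T_0)$ and the $e^{-T}$ term are dominated by the statistical rate.
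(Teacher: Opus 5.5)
Your proposal is correct and is essentially the paper's proof. You make the same reduction of the excess scalarized risk to $\sup_{f\in\mathcal F}|L_{\P_k}(f)-\widetilde L_k(f)|$; your term (b) matches the paper's first two terms (Lemma~\ref{app:lem:generalization_small}, Lemma~\ref{lem:tv_truncation_error} and Lemma~\ref{lem:TV_bound}), and your term (a) matches its third (Dudley's bound plus a bounded-differences inequality on the truncated pseudo-samples). The monotonicity of $\mathcal{S}$ on nonnegative vectors that you flag, which the paper also uses silently, already follows from the two axioms: apply the reverse triangle inequality to $u=((1+t)x_J,(1-t)x_{J^c})$ and $v=x\ge 0$, then use positive homogeneity, so your fallback is unnecessary.
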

\begin{proof}
    Let $\widetilde{\mathcal{E}}_\mathcal{S}(f):=\mathcal{S}(\widetilde{L}_1(f),\cdots,\widetilde{L}_K(f))$ and  $f_\mathcal{S}^*=\argmin_{f\in\mathcal{F}}\mathcal{E}_\mathcal{S}(f)$.
    Notice that
    \begin{equation}\label{eq:1}
        \begin{aligned}
            \mathcal{E}_\mathcal{S}(\widehat{f}_\mathcal{S})-\mathcal{E}_\mathcal{S}(f_\mathcal{S}^*)
            &\leq \mathcal{E}_\mathcal{S}(\widehat{f}_\mathcal{S})-\widetilde{\mathcal{E}}_\mathcal{S}(\widehat{f}_\mathcal{S})+\widetilde{\mathcal{E}}_\mathcal{S}(f_\mathcal{S}^*)-\mathcal{E}_\mathcal{S}(f_\mathcal{S}^*)\\
            &\leq 2\sup_{f\in\mathcal{F}}|\mathcal{E}_\mathcal{S}(f)-\widetilde{\mathcal{E}}_\mathcal{S}(f)| \\
            &\leq 2\sup_{f\in\mathcal{F}}\mathcal{S}(|L_{\P_1}-\widetilde{L}_1|,\cdots,|L_{\P_K}-\widetilde{L}_K|).
        \end{aligned}
    \end{equation}
    \begin{equation}
        \begin{aligned}
            |L_{\P_k}(f)-\widetilde{L}_k(f)|
            &= |\E_{(x,y)\sim\P_k}[\ell(x,y,f)-\ell(x,y,s_k^*)]-\frac{1}{N}\sum_{i=1}^N[\ell(\widetilde{x}_i^k,\widetilde{y}_i^k,f)-\ell(\widetilde{x}_i^k,\widetilde{y}_i^k,\widehat{h}_k)]|\\
            &\leq \underbrace{|\E_{(x,y)\sim\P_k}[\ell(x,y,\widehat{h}_k)-\ell(x,y,s_k^*)]|}_{\cirone}\\
            &\qquad +\underbrace{\Big|\E_{(x,y)\sim\P_k}[\ell(x,y,f)-\ell(x,y,\widehat{h}_k)]-\E_{y\sim\P_k^Y,x\sim\widetilde{\P}_{\widehat{h}_k}(\cdot|y)}[\ell(x,y,f)-\ell(x,y,\widehat{h}_k)]\Big|}_{\cirtwo}\\
            &\qquad +\underbrace{\Big|\E_{y\sim\P_k^Y,x\sim\widetilde{\P}_{\widehat{h}_k}(\cdot|y)}[\ell(x,y,f)-\ell(x,y,\widehat{h}_k)]-\frac{1}{N}\sum_{i=1}^N[\ell(\widetilde{x}_i^k,\widetilde{y}_i^k,f)-\ell(\widetilde{x}_i^k,\widetilde{y}_i^k,\widehat{h}_k)]\Big|}_{\cirthree}.
        \end{aligned}
    \end{equation}

    For $\cirtwo$, by the same arguments in \Cref{eq:bound_individual}, 
    \begin{equation}
        |\ell(x,y,f)-\ell(x,y,\widehat{h}_k)|\leq 8M_0^2+8d_x(M_1^2\|x\|^2+\frac{\log(1/T_0)}{T-T_0}).
    \end{equation}
    Let $M_R:=8M_0^2+8d_x(M_1^2R^2+\frac{\log(1/T_0)}{T-T_0})$.
    \begin{equation}
        \begin{aligned}
            \cirtwo
            &\leq M_R\E_{y\sim\P_k^Y}\mathrm{TV}(\P_k(\cdot|y),\widetilde{\P}_{\widehat{h}_k}(\cdot|y))+\Big|\E_{(x,y)\sim\P_k}[\ell(x,y,f)-\ell(x,y,\widehat{h}_k)]\cdot\mathbbm{1}_{\|x\|_\infty>R}\Big|\\
            &\leq M_R\E_{y\sim\P_k^Y}\mathrm{TV}(\P_k(\cdot|y),\P_{\widehat{h}_k}(\cdot|y))+2M_R\E_{y\sim\P_k^Y}[1-\P_{\widehat{h}_k}(B_R|y)]+C_1'M_R\exp(-C_2'R^2)\\
            &\leq M_R\E_{y\sim\P_k^Y}\mathrm{TV}(\P_k(\cdot|y),\P_{\widehat{h}_k}(\cdot|y))+2C_1'M_R\exp(-C_2'R^2)
        \end{aligned}
    \end{equation}
    Here we invoke the same arguments in \Cref{eq:bound_residual} in the second inequality and Lemma \ref{lem:tv_truncation_error}.

    For $\cirthree$, consider the truncated function class defined on $\R^{d_x}\times[0,1]^{d_y}$,
    \begin{equation}
        \Psi_k:=\{(x,y)\mapsto \bar{\ell}(x,y,f):=(\ell(x,y,f)-\ell(x,y,\widehat{h}_k))\cdot\mathbbm{1}_{\|x\|_\infty\leq R}:f\in\mathcal{F}\},
    \end{equation}
    By similar arguments in Step 1. in Lemma \ref{lem:generalization_small}, $|\bar{\ell}(x,y,f)|\leq M_R$ for any $f\in\mathcal{F}$. 
    And since $\widetilde{x}_i^k\sim\widetilde{\P}_{\widehat{h}_k}$ is truncated, we have $\|\widetilde{x}_i^k\|_\infty\leq R$ for all $1\leq i\leq N$.
    According to \citet[Thm. 4.10]{wainwright2019high}, it holds that with probability at least $1-\delta/(2K)$, for any $f\in\mathcal{F}$,
    \begin{equation}
        \Big|\E_{y\sim\P_k^Y,x\sim\P_{\widehat{h}_k}(\cdot|y)}\bar{\ell}(x,y,f)-\frac{1}{N}\sum_{i=1}^N\bar{\ell}(\widetilde{x}_i^k,\widetilde{y}_i^k,f)\Big|
        \leq 2\mathcal{R}_N(\Psi_k)+M_R\sqrt{\frac{2\log(2K/\delta)}{N}}.
    \end{equation}
    Note that for any $f_1,f_2\in\mathcal{F}$,
    \begin{equation}
        \Big\|\frac{1}{\sqrt{N}} \sum_{i=1}^N\sigma_i\bar{\ell}(\widetilde{x}_i^k,\widetilde{y}_i^k,f_1) - \frac{1}{\sqrt{N}} \sum_{i=1}^N\sigma_i\bar{\ell}(\widetilde{x}_i^k,\widetilde{y}_i^k,f_2) \Big\|_{\psi_2} \leq 4\|\bar{\ell}(\cdot,\cdot,f_1)-\bar{\ell}(\cdot,\cdot,f_2)\|_{L^2(\widetilde{\P}_k)},
    \end{equation}
    where $\widetilde{\P}_k:=\frac{1}{N}\sum_{i=1}^N\delta_{(\widetilde{x}_i^k,\widetilde{y}_i^k)}$.
    It is easy to show that $\textbf{diam}\big(\Psi_k,\|\cdot\|_{L^2(\widetilde{\P}_k)}\big)\leq 2M_R$.
    By Dudley's bound \citep{van2014probability,wainwright2019high}, there exists an absolute constant $C_0$ such that for any $\theta>0$,
    \begin{equation}
        \mathcal{R}_N(\Psi_k)\leq C_0\left(\theta+\int_\theta^{2M_R}\sqrt{\frac{\log\mathcal{N}(\Psi_k,\|\cdot\|_{L^2(\widetilde{\P}_k)},\varepsilon)}{N}}\ \dif\varepsilon\right).
    \end{equation}
    By the same arguments in Step 3. in Lemma \ref{lem:generalization_small}, for any $\varepsilon\geq 4C_1'M_R^{\frac{1}{2}}\exp(-C_2'R^2/2)$,
    \begin{equation}
        \log\mathcal{N}(\Psi_k,\|\cdot\|_{L^2(\widetilde{\P}_k)},\varepsilon)\leq\log\mathcal{N}(\mathcal{F},\|\cdot\|_{L^\infty(\Omega_R)},\varepsilon/(4M_R^{\frac{1}{2}})).
    \end{equation}
    Let $\theta=\frac{4M_R^{\frac{1}{2}}}{N}\geq 4C_1'M_R^{\frac{1}{2}}\exp(-C_2'R^2/2)$, and we achieve
    \begin{equation}
        \begin{aligned}
            \mathcal{R}_N(\Psi_k)
            &\leq C_0\left(\frac{4M_R^{\frac{1}{2}}}{N}+\int_{4\sqrt{M_R}/N}^{2M_R}\sqrt{\frac{\log\mathcal{N}(\mathcal{F},\|\cdot\|_{L^\infty(\Omega_R)},\varepsilon/(4M_R^{\frac{1}{2}}))}{N}}\dif\varepsilon\right)\\
            &\leq C_0\left(\frac{4M_R^{\frac{1}{2}}}{N}+2M_R\cdot\sqrt{\frac{\log\mathcal{N}(\mathcal{F},\|\cdot\|_{L^\infty(\Omega_R)},\frac{1}{N})}{N}}\right).
        \end{aligned}
    \end{equation}
    Therefore, with probability no less than $1-\delta/(2K)$, 
    \begin{equation}
        \begin{aligned}
            \Big|\E_{y\sim\P_k^Y,x\sim\widetilde{\P}_{\widehat{h}_k}(\cdot|y)}\bar{\ell}(x,y,f)-\frac{1}{N}\sum_{i=1}^N\bar{\ell}(\widetilde{x}_i^k,\widetilde{y}_i^k,f)\Big|
            &\leq 2\mathcal{R}_N(\Psi_k)+M_R\sqrt{\frac{2\log(2K/\delta)}{N}}\\
            &\leq C_3M_R\sqrt{\frac{\log\mathcal{N}(\mathcal{F},\|\cdot\|_{L^\infty(\Omega_R)},\frac{1}{N})+\log(2K/\delta)}{N}}.
        \end{aligned}
    \end{equation}
    Consequently,
    \begin{equation}
        \cirthree\leq C_1'M_R\exp(-C_2'R^2)+C_3M_R\sqrt{\frac{\log\mathcal{N}(\mathcal{F},\|\cdot\|_{L^\infty(\Omega_R)},\frac{1}{N})+\log(K/\delta)}{N}}.
    \end{equation}
    Let $R=C\log^{\frac{1}{2}}(NKM_1M_0d_x/\delta)$ for some large constant $C$ and then we have
    \begin{equation}
        M_R\leq C_4\left(M_0^2+d_xM_1^2\log(\frac{NK}{\delta})+\frac{\log(1/T_0)}{T-T_0}\right).
    \end{equation}
    Here we use the condition that $N\gtrsim d_xM_1^{\frac{1}{2}}\log(d_xM_1K/\delta)+M_0+\frac{\log(1/T_0)}{T-T_0}$.
    
    Finally, let $T=\mathcal{O}(\log n),T_0=\mathcal{O}\left(1/(N^2\log^{d_x+1}(N))\right)$. Then $M_R\lesssim M_0^2+d_xM_1^2\log(\frac{NK}{\delta})$ and with probability at least $1-\delta$, for any $k\in[K]$ and $f\in\mathcal{F}$,
    \begin{equation}
        \begin{aligned}
            &|L_{\P_k}(f)-\widetilde{L}_k(f)|\\
            &\leq \cirone+\cirtwo+\cirthree\\
            &\lesssim L_{\P_k}(\widehat{h}_k)+M_R\E_{y\sim\P_k^Y}\mathrm{TV}(\P_k(\cdot|y),\P_{\widehat{h}_k}(\cdot|y))+M_R\sqrt{\frac{\log\mathcal{N}(\mathcal{F},\|\cdot\|_{L^\infty(\Omega_R)},\frac{1}{N})+\log(K/\delta)}{N}}\\
            &\lesssim L_{\P_k}(\widehat{h}_k)+M_R(\frac{1}{n}+\sqrt{L_{\P_k}(\widehat{h}_k)})+M_R\sqrt{\frac{\log\mathcal{N}(\mathcal{F},\|\cdot\|_{L^\infty(\Omega_R)},\frac{1}{N})+\log(K/\delta)}{N}}\\
            &\lesssim \left(M_0^2+d_xM_1^2\log(\frac{NK}{\delta})\right)\cdot\left(\sqrt{\frac{\log\mathcal{N}(\mathcal{H}_k,\|\cdot\|_{L^\infty(\Omega_R)},\frac{1}{n^2})+\log(\frac{K}{\delta})}{n}}+\sqrt{\frac{\log\mathcal{N}(\mathcal{F},\|\cdot\|_{L^\infty(\Omega_R)},\frac{1}{N})}{N}}\right)
        \end{aligned}
    \end{equation}
    Here in the third inequality we invoke Lemma \ref{lem:TV_bound}. We conclude the proof by plugging the bound above into \Cref{eq:1}.
\end{proof}

\begin{coro}\label{app:coro:generalization}
    Suppose $\mathcal{S}(|\cdot|)$ is coordinatewise non-decreasing and $|\mathcal{S}(\vec{u})|\leq \|\vec{u}\|_{\infty}$.
    Under the same conditions in \Cref{app:thm:generalization},
    \begin{equation}
        \mathcal{S}\big(\big(\E_{y\sim\P_k^Y}\mathrm{TV}(\P_{\widehat{f}_\mathcal{S}}(\cdot|y),\P_k(\cdot|y))\big)_k\big)
        \lesssim \varepsilon_{\text{apx}}(\mathcal{S})+\mathcal{S}^{\frac{1}{2}}(\varepsilon_1^2,\cdots,\varepsilon_K^2),
    \end{equation}
    where $\varepsilon_k$ is bounded by \Cref{eq:bound_eps}.
\end{coro}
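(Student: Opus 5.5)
The plan is to chain three ingredients: Lemma \ref{lem:TV_bound} to pass from TV to score error for the generalist, a triangle-type inequality for $\mathcal{S}$ to split that score error into approximation and estimation parts, and \Cref{app:thm:generalization} for the estimation part.

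\textbf{Step 1 (TV to score error).} First apply Lemma \ref{lem:TV_bound} with $\widehat{h}$ replaced by $\widehat{f}_\mathcal{S}$ and $\P$ replaced by $\P_k$. Take the same configuration of $T,T_0$ as in \Cref{app:thm:generalization}, namely $T=\mathcal{O}(\log n)$ and $T_0$ polynomially small. Then the early-stopping and initialization terms are of lower order, and
\begin{equation*}
\E_{y\sim\P_k^Y}\mathrm{TV}(\P_{\widehat{f}_\mathcal{S}}(\cdot|y),\P_k(\cdot|y))\lesssim \tfrac1n+\sqrt{L_{\P_k}(\widehat{f}_\mathcal{S})}.
\end{equation*}
Applying $\mathcal{S}$ coordinatewise uses monotonicity of $\mathcal{S}(|\cdot|)$, the reverse triangle inequality and homogeneity. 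Together with $|\mathcal{S}(\vec u)|\le\|\vec u\|_\infty$ to absorb the constant terms, this gives
\begin{equation*}
\mathcal{S}\big((\E\mathrm{TV}_k)_k\big)\lesssim \tfrac1n+\mathcal{S}\big((\sqrt{L_{\P_k}(\widehat f_\mathcal{S})})_k\big).
\end{equation*}

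\textbf{Step 2 (moving the square root outside $\mathcal{S}$).} This is the main obstacle: I need $\mathcal{S}((\sqrt{u_k})_k)\lesssim \sqrt{\mathcal{S}((u_k)_k)}$, or a substitute for it. This holds for the linear case by Jensen and for max and $\ell_p$ norms with $p\ge2$ directly. In general I would instead work with the function $\vec u\mapsto\mathcal{S}^{1/2}(\vec u^2)$, which is exactly the form in the statement. So I aim to show
\begin{equation*}
\mathcal{S}\big((\sqrt{L_k})_k\big)\le \mathcal{S}^{1/2}\big((L_k)_k\big)
\end{equation*}
up to constants, as a mild property of the scalarizations considered. If this is not available in full generality, I would fall back on $\|\cdot\|_\infty$ bounds, or record it as an implicit assumption satisfied by the listed examples.

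\textbf{Step 3 (decomposition into approximation and estimation).} Write
\begin{equation*}
\mathcal{E}_\mathcal{S}(\widehat f_\mathcal{S})\le \inf_f\mathcal{E}_\mathcal{S}(f)+\mathcal{S}(\varepsilon_1,\dots,\varepsilon_K)
\end{equation*}
by \Cref{app:thm:generalization}. By definition \eqref{eq:approx_error}, $L_{\P_k}(f)=\E\|(f-s_k^*)\|^2$, so $\inf_f\mathcal{E}_\mathcal{S}(f)=\varepsilon_{\text{apx}}^2(\mathcal{S})$. Taking square roots and using $\sqrt{a+b}\le\sqrt a+\sqrt b$ gives
\begin{equation*}
\sqrt{\mathcal{E}_\mathcal{S}(\widehat f_\mathcal{S})}\le\varepsilon_{\text{apx}}(\mathcal{S})+\sqrt{\mathcal{S}((\varepsilon_k)_k)}.
\end{equation*}
Matching the stated form $\mathcal{S}^{1/2}(\varepsilon_1^2,\dots)$ then means interpreting the bound \eqref{eq:bound_eps} on the squared scale, with the $\varepsilon_k$ of the statement being square roots of the excess-risk terms.

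\textbf{Conclusion.} Combining Steps 1–3 yields the claim. The only delicate part is the interchange in Step 2 between $\mathcal{S}$ and the square root; everything else is bookkeeping of lower-order terms from $T,T_0$.
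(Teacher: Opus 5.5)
Your skeleton is the paper's: Lemma \ref{lem:TV_bound}, then pulling the square root out of $\mathcal{S}$, then \Cref{app:thm:generalization} together with $\inf_{f}\mathcal{E}_\mathcal{S}(f)=\varepsilon_{\text{apx}}^2(\mathcal{S})$. Steps 1 and 3 are fine. The gap is Step 2, which is the only substantive content of the corollary. You leave it open and offer to treat $\mathcal{S}(\vec w)^2\le\mathcal{S}(\vec w^2)$ for $\vec w\ge 0$ as an implicit assumption. It is not an extra assumption: it follows from the stated hypotheses. In particular, $|\mathcal{S}(\vec u)|\le\|\vec u\|_\infty$, which you use only to absorb constants, is exactly what makes it true.

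The paper's proof of this step goes as follows. It shows that $F(\vec u):=\mathcal{S}(\vec u_+)$ is sublinear (reverse triangle inequality, homogeneity, monotonicity) and writes $F(\vec u)=\sup_{\vec v\in C}\vec v\cdot\vec u$ by Hahn--Banach. Since $0\le F\le\|\cdot\|_\infty$, every $\vec v\in C$ satisfies $\vec v\ge 0$ and $\sum_k v_k\le 1$. Cauchy--Schwarz then gives $(\vec v\cdot\vec w)^2\le(\sum_k v_k)(\vec v\cdot\vec w^2)\le \vec v\cdot\vec w^2$, and one takes the supremum.

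A more elementary route uses the same hypotheses. For $\vec w\ge 0$ and $c>0$, one has $\vec w\le\frac{1}{2c}\vec w^2+\frac{c}{2}\mathbf{1}$ coordinatewise. Monotonicity, subadditivity on the nonnegative orthant (from the reverse triangle inequality), homogeneity and $\mathcal{S}(\mathbf{1})\le 1$ then give $\mathcal{S}(\vec w)\le \frac{1}{2c}\mathcal{S}(\vec w^2)+\frac{c}{2}$. Choosing $c=\mathcal{S}^{1/2}(\vec w^2)$ (or letting $c\downarrow 0$ if it vanishes) yields $\mathcal{S}(\vec w)\le\mathcal{S}^{1/2}(\vec w^2)$.

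Your proposed substitutes do not close the gap. Falling back on $\|\cdot\|_\infty$ gives $\max_k\sqrt{L_{\P_k}(\widehat f_\mathcal{S})}$, which $\mathcal{E}_\mathcal{S}(\widehat f_\mathcal{S})$ does not control; consider a linear scalarization with a tiny weight on one task.

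Your side claim that the interchange holds directly for $\ell_p$ norms with $p\ge 2$ is also off. For the unnormalized norm it fails without a factor $K^{1/p}$: for $p=2$ and $\vec u=(1,1)$, $\|(\sqrt{u_k})_k\|_2^2=2>\sqrt{2}=\|\vec u\|_2$. For the normalized norm $K^{-1/p}\|\cdot\|_p$ it holds for every $p$, because power means increase with the exponent. Once more, the bound $|\mathcal{S}(\vec u)|\le\|\vec u\|_\infty$ is what matters.

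Finally, your reading of the last term as $\sqrt{\mathcal{S}((\varepsilon_k)_k)}$ is correct. It is what the argument delivers and matches Corollary \ref{coro:generalization} in the main text and the stated $n^{-1/4}$ rate. The literal $\mathcal{S}^{1/2}(\varepsilon_1^2,\dots,\varepsilon_K^2)$ does not follow from \Cref{app:thm:generalization}, and the paper's final $\lesssim$ does not justify it either.
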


\begin{proof}
    We first claim that for any $\vec{u}\geq 0$, $\mathcal{S}(\vec{u}^2)\geq\mathcal{S}(\vec{u})^2$.
    In fact, by reverse-triangle inequality and positive homogeneity of $\mathcal{S}$, it is easy to show that for any $\vec{u}\in\R^K_+$, $\mathcal{S}(\vec{u})\geq \mathcal{S}(0)=0$.
    Define $F(\vec{u}):=\mathcal{S}(\vec{u}_+)$ where $\vec{u}_+=\max\{\vec{u},0\}$. Then $F$ also satisfies reverse-triangle inequality and positive homogeneity.
    Let
    \begin{equation}
        C:=\{\vec{v}\in\R^K:\ \vec{v}\cdot \vec{u}\leq F(\vec{u}),\ \forall \vec{u}\in\R^K\}.
    \end{equation}
    According to Hahn--Banach separation (see e.g., \citet[Coro. 2.4]{simons2008hahn}),
    \begin{equation}\label{eq:support-rep}
        F(\vec{u})=\sup_{\vec{v}\in C} \vec{v}\cdot \vec{u},\ \forall \vec{u}\in\R^K.
    \end{equation}
    Further notice that $0\leq F(\vec{u})\leq \|\vec{u}\|_\infty$, hence for any $\vec{v}\in C$, $\vec{v}\geq 0$ and $\sum_{k=1}^K\vec{v}_k\leq 1$.
    By Cauchy--Schwarz inequality, 
    \begin{equation}
        (\vec{v}\cdot\vec{u})^2\leq (\sum_{k=1}^K\vec{v}_k)\cdot(\sum_{k=1}^K\vec{v}_k\vec{u}_k^2)\leq \sum_{k=1}^K\vec{v}_k\vec{u}_k^2=\vec{v}\cdot\vec{u}^2.
    \end{equation}
    Therefore for any $\vec{u}\geq 0$, $\mathcal{S}(\vec{u}^2)=F(\vec{u}^2)=\sup_{\vec{v}\in C}\vec{v}\cdot\vec{u}^2\geq \sup_{\vec{v}\in C}(\vec{v}\cdot\vec{u})^2=F(\vec{u})^2=\mathcal{S}(\vec{u})^2$.
    
    By Lemma \ref{lem:TV_bound} and noticing that $T=\mathcal{O}(\log N),T_0=\mathcal{O}\left(1/(N^2\log^{d_x+1}(N))\right)$, we obtain
    \begin{equation}
        \begin{aligned}
            \mathcal{S}\left(\big(\E_{y\sim\P_k^Y}\mathrm{TV}(\P_{\widehat{f}_\mathcal{S}}(\cdot|y),\P_k(\cdot|y))\big)_k\right)
            &\leq \mathcal{S}\left(\big(\sqrt{L_{\P_k}(\widetilde{f}_\mathcal{S})}\big)_k\right)+\mathcal{O}(\frac{1}{N})\\
            &\leq \mathcal{S}^{\frac{1}{2}}\left(\big(L_{\P_k}(\widetilde{f}_\mathcal{S})\big)_k\right)+\mathcal{O}(\frac{1}{N})\\
            &\lesssim \varepsilon_{\text{apx}}(\mathcal{S})+\mathcal{S}^{\frac{1}{2}}(\varepsilon_1^2,\cdots,\varepsilon_K^2).
        \end{aligned}
    \end{equation}
\end{proof}

\subsection{Proof in Sec. \ref{subsec:generalization_local}}

\begin{thm}\label{app:thm:generalization_local}
    Under the same conditions of \Cref{app:thm:generalization}, further assume that $\mathcal{F}$ is convex. Define the truncated distribution $\widetilde{\P}_{\widehat{h}_k}(\cdot|y)=\P_{\widehat{h}_k}(\cdot|y)\cdot\frac{\mathbbm{1}_{B_R}}{\P_{\widehat{h}_k}(B_R|y)}$ denote the distribution of $\P_{\widehat{h}_k}$ restricted on $B_R=[-R,R]^{d_x}$ and suppose $\widetilde{x}_i^k\sim\widetilde{\P}_{\widehat{h}_k}(\cdot|\widetilde{y}_i^k)$. Then the following holds with probability no less than $1-\delta$ and proper configurations of $T,T_0$: for any $\mathcal{S}\in\mathcal{S}_{\text{lin}}$ with $\mathcal{S}(\vec{u})=\sum_{k=1}^K\lambda_k\vec{u}_k$, 
    \begin{equation}
        \begin{aligned}
            &\mathcal{S}\left(\big(\E_{y\sim\P_k^Y}\mathrm{TV}(\P_{\widehat{f}_\mathcal{S}}(\cdot|y),\P_k(\cdot|y))\big)_k\right)
            \lesssim \bar{\varepsilon}_{\text{apx}}(\mathcal{S})\\
            &\qquad+\big(M_0+M_1d_x^{\frac{1}{2}}\log^{\frac{1}{2}}(\frac{NK}{\delta})\big)\left(\sqrt{\frac{\log\mathcal{N}(\mathcal{F},\|\cdot\|_{\Omega_R},\frac{1}{N})}{N}}+\sum_{k=1}^K\lambda_k\sqrt{\frac{\log\mathcal{N}(\mathcal{H}_k,\|\cdot\|_{\Omega_R},\frac{1}{n^2})+\log(\frac{K}{\delta})}{n}}\right).
        \end{aligned}
    \end{equation}
\end{thm}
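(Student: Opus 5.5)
The idea is to use the linear structure of $\mathcal{S}$ and the convexity of $\mathcal{F}$ to get a fast-rate (localized) analysis for the stage-two ERM. We then convert score error to total variation through Lemma \ref{lem:TV_bound}, which yields the square root.

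Fix $\lambda\in\Delta^K$ and set $\ell_\lambda(f):=\sum_k\lambda_k\ell(\cdot,\cdot,f)$. The stage-two objective
\[
\widetilde{\mathcal{E}}_\mathcal{S}(f)=\sum_k\lambda_k\widetilde L_k(f)
\]
is the empirical counterpart of the on-model population risk
\[
G_\lambda(f):=\sum_k\lambda_k\E_{y\sim\P_k^Y,x\sim\widetilde{\P}_{\widehat h_k}(\cdot|y)}[\ell(x,y,f)-\ell(x,y,\widehat h_k)].
\]
By the denoising score matching identity, $G_\lambda(f)$ equals
\[
\sum_k\lambda_k\E\|f-s^{(k)}_{\widehat h}\|^2-\sum_k\lambda_k\E\|\widehat h_k-s^{(k)}_{\widehat h}\|^2,
\]
where $s^{(k)}_{\widehat h}$ is the true score of the pseudo distribution. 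Because $\mathcal{F}$ is convex and the loss is a squared $L^2$ distance (a strongly convex quadratic in $f$), the population minimizer $f_\lambda^\circ$ of $G_\lambda$ over $\mathcal{F}$ satisfies the Bernstein-type condition
\[
G_\lambda(f)-G_\lambda(f_\lambda^\circ)\ \ge\ \sum_k\lambda_k\E\|f-f_\lambda^\circ\|^2.
\]
The excess-loss class $\{\ell_\lambda(f)-\ell_\lambda(f_\lambda^\circ)\}$ therefore has its second moment bounded by $B\cdot(\text{mean})$ with $B\asymp M_R$, exactly as in Step 2 of Lemma \ref{app:lem:generalization_small}. I would pool all $KN$ pseudo-samples into a single $\lambda$-weighted empirical measure and apply Lemma \ref{lem:local_rademacher}, with the truncation and Dudley bound of Step 3 carried over. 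This gives, uniformly over $\lambda$ (via a net over $\Delta^K$, or by noting the bound holds per $k$ and then averaging), the estimate
\[
\sum_k\lambda_k\E_{\widetilde{\P}_{\widehat h_k}}\|\widehat f_\mathcal{S}-f^\circ_\lambda\|^2\lesssim M_R\,\frac{\log\mathcal{N}(\mathcal{F},1/N)+\log(K/\delta)}{N}.
\]
Next I would bound the on-model distance of $\widehat f_\mathcal{S}$ from $\widehat h_k$ by the triangle inequality,
\[
\sum_k\lambda_k\E\|\widehat f_\mathcal{S}-\widehat h_k\|^2\lesssim\sum_k\lambda_k\E\|f_\lambda^\circ-\widehat h_k\|^2+(\text{estimation term}).
\]
Since $f^\circ_\lambda$ minimizes the same quadratic objective, the first term is at most $\bar\varepsilon^2_{\text{apx}}(\mathcal{S})$, using that $\widehat h_k\in\mathcal{H}_k$ and the definition of $\bar\varepsilon_{\text{apx}}$. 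Truncation adds only $\exp(-R^2)$ corrections.

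To conclude, apply Lemma \ref{lem:TV_bound} to the specialist distribution $\P_{\widehat h_k}$ in place of $\P$. Its true score is $\widehat h_k$ itself, because $\P_{\widehat h_k}$ is generated by the reverse SDE driven by $\widehat h_k$ and so its forward marginals coincide up to the initialization and $T_0$ terms. This gives
\[
\E_y\mathrm{TV}(\P_{\widehat f_\mathcal{S}}(\cdot|y),\P_{\widehat h_k}(\cdot|y))\lesssim\sqrt{\E\|\widehat f_\mathcal{S}-\widehat h_k\|^2}+\text{(small)}.
\]
Then use the triangle inequality
\[
\mathrm{TV}(\P_{\widehat f_\mathcal{S}},\P_k)\le\mathrm{TV}(\P_{\widehat f_\mathcal{S}},\P_{\widehat h_k})+\mathrm{TV}(\P_{\widehat h_k},\P_k),
\]
bound the second term by Lemmas \ref{lem:TV_bound} and \ref{app:lem:generalization_small}, average with $\lambda$, and apply Jensen ($\sum\lambda_k\sqrt{a_k}\le\sqrt{\sum\lambda_ka_k}$). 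The result is the claimed bound, with the prefactor $\sqrt{M_R}\asymp M_0+M_1d_x^{1/2}\log^{1/2}(NK/\delta)$ and $T=\mathcal{O}(\log N)$, $T_0$ polynomially small.

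The main obstacle is this last transfer step. Lemma \ref{lem:TV_bound} needs the score error measured along the forward process of $\P_{\widehat h_k}$ against its exact score, but the exact score of $\P_{\widehat h_k}$ is not literally $\widehat h_k$. I would handle this by a Girsanov comparison of the two reverse SDEs (drifts $\widehat f_\mathcal{S}$ and $\widehat h_k$), both started from $\mathcal{N}(0,I)$. This directly bounds $\mathrm{KL}$ by the squared drift difference integrated along the $\widehat h_k$-reverse path, whose time-marginals are exactly the ``on-model'' $x_t$ appearing in $\widetilde L_k$, up to truncation and early-stopping errors. Making the sampling distribution of $x_t$ in the loss match the path law of the reverse SDE is the delicate point. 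A second, smaller obstacle is obtaining uniformity over all $\lambda\in\Delta^K$ simultaneously, which I would handle with an $N^{-1}$-net of the simplex at a cost of $K\log N$ in the log term.
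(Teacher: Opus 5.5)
Your skeleton matches the paper's proof. Convexity of $\mathcal{F}$ gives the strong-convexity inequality around the on-model population minimizer $f^\circ_\lambda$, which is the paper's $\widetilde{f}_\mathcal{S}$ in eq.~(\ref{eq:strong_cvx}). Localization then gives the rate $M_R(\log\mathcal{N}(\mathcal{F},\frac{1}{N})+\log\frac{K}{\delta})/N$ for $G_\lambda(\widehat{f}_\mathcal{S})-G_\lambda(f^\circ_\lambda)$, and Lemmas~\ref{lem:TV_bound}, \ref{app:lem:generalization_small}, \ref{lem:tv_truncation_error} plus Jensen finish.

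Two technical choices differ, harmlessly. First, the paper gets uniformity over $\lambda$ without a simplex net: since $\{\widetilde{f}_\mathcal{S}:\mathcal{S}\in\mathcal{S}_{\text{lin}}\}\subseteq\mathcal{F}$, the centred class costs only $2\log\mathcal{N}(\mathcal{F},\cdot)$, with no extra $K\log N$. Second, the per-$k$ excess risks have neither a sign nor a Bernstein condition on their own. So the paper localizes each $k$ in the distance $\E_k\|f-\widetilde{f}_\mathcal{S}\|^2$ at radius $\widehat{r}_k$ and recombines via $\sum_k\lambda_k\widehat{r}_k^2\lesssim\sum_k\lambda_k\widehat{r}_k\sqrt{\cdot}+\cdot$. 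Your pooling also works, but only if you treat the tuples $(\widetilde{x}_i^k,\widetilde{y}_i^k)_{k\le K}$, $i\le N$, as $N$ i.i.d.\ draws, because the pooled sample is not identically distributed.

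The genuine gap is the transfer step, and your Girsanov repair does not close it. Let $\mu_t$ be the law of the inputs to $\widetilde{L}_k$, i.e.\ of $\alpha_t\widetilde{x}+\sigma_t\epsilon$ with $\widetilde{x}\sim\P_{\widehat{h}_k}(\cdot|y)$ (ignoring truncation, as you do). Let $\nu_t$ be the marginal at forward time $t$ of the reverse SDE driven by $\widehat{h}_k$. These coincide only under a self-consistency condition, essentially $\widehat{h}_k(\cdot,t)=\nabla\log\mu_t$, and that is exactly what is in question. Already for $\widehat{h}_k\equiv0$ the reverse path has law $\mathcal{N}(0,I)$ at forward time $T$, while the forward noising of its output has covariance close to $3I$ there.

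So Girsanov bounds $\mathrm{KL}$ by $\int\E_{\nu_t}\|\widehat{f}_\mathcal{S}-\widehat{h}_k\|^2\,\dif t$, an expectation under a law the training loss never samples. Stage two only controls $\widehat{f}_\mathcal{S}-s^{(k)}_{\widehat{h}}$ in $L^2(\mu_t)$, and nothing lets you change measure between $\mu_t$ and $\nu_t$.

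The same issue appears earlier. By your own decomposition, $f^\circ_\lambda$ projects $(s^{(k)}_{\widehat{h}})_k$ onto $\mathcal{F}$, not $(\widehat{h}_k)_k$. Hence ``$f^\circ_\lambda$ minimizes the same quadratic'' and the bound by $\bar{\varepsilon}^2_{\text{apx}}(\mathcal{S})$ both presuppose $\sum_k\lambda_k\E_{\mu}\|\widehat{h}_k-s^{(k)}_{\widehat{h}}\|^2\approx0$.

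The paper does not repair this either. It passes directly from the excess on-model risk to $\E\|\widehat{f}_\mathcal{S}-\widehat{h}_k\|^2$ and applies Lemma~\ref{lem:TV_bound} with $\widetilde{\P}_{\widehat{h}_k}$ as ground truth, i.e.\ it silently sets $s^{(k)}_{\widehat{h}}=\widehat{h}_k$ (cf.\ ``here the ground truth distribution is $\P_{\widehat{h}_k}$'' in the proof of Theorem~\ref{app:thm:generalization_mdp}).

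Your own step~2 gives a cleaner route than Girsanov. Apply Lemma~\ref{lem:TV_bound} to the pseudo distribution with its exact score $s^{(k)}_{\widehat{h}}$, under the regularity the paper already presumes when it does so. Then use $\sum_k\lambda_k\E\|\widehat{f}_\mathcal{S}-s^{(k)}_{\widehat{h}}\|^2=G_\lambda(\widehat{f}_\mathcal{S})+\sum_k\lambda_k\E\|\widehat{h}_k-s^{(k)}_{\widehat{h}}\|^2\le\inf_{f\in\mathcal{F}}\sum_k\lambda_k\E\|f-s^{(k)}_{\widehat{h}}\|^2+\text{(rate)}$.

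This gives the theorem's bound with $\bar{\varepsilon}_{\text{apx}}(\mathcal{S})$ replaced by the analogous quantity measured against the true scores of the specialists' output distributions. It reduces to $\bar{\varepsilon}_{\text{apx}}(\mathcal{S})$ exactly under the identification the paper makes implicitly. That identification should be stated as an assumption, not derived from the reverse SDE.
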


\begin{proof}
    Let $\widetilde{f}_\mathcal{S}:=\argmin_{f\in\mathcal{F}}\sum_{k=1}^K\lambda_k\E_{y\sim\P_k^Y,x\sim\widetilde{\P}_{\widehat{h}_k}(\cdot|y)}[\ell(x,y,f)-\ell(x,y,\widehat{h}_k)]$.
    For any $f\in\mathcal{F}$ and $\alpha\in[0,1]$, we have $f_\alpha:=\alpha f+(1-\alpha)\widetilde{f}_\mathcal{S}\in\mathcal{F}$ due to the convexity of $\mathcal{F}$.
    By the definition of $\widetilde{f}_\mathcal{S}$, 
    \begin{equation}
        \begin{aligned}
            0
            &\leq \sum_{k=1}^K\lambda_k\E_{y\sim\P_k^Y,x\sim\widetilde{\P}_{\widehat{h}_k}(\cdot|y)}[\ell(x,y,f_\alpha)-\ell(x,y,\widetilde{f}_\mathcal{S})]\\
            &=\sum_{k=1}^K\lambda_k\E_{y\sim\P_k^Y,x\sim\widetilde{\P}_{\widehat{h}_k}(\cdot|y)}\E_{t,x_t}[\|f_\alpha(x_t,y,t)-\nabla\log\phi_t(x_t|x)\|^2-\|\widetilde{f}_\mathcal{S}(x_t,y,t)-\nabla\log\phi_t(x_t|x)\|^2]\\
            &=\sum_{k=1}^K\lambda_k\E_{y\sim\P_k^Y,x\sim\widetilde{\P}_{\widehat{h}_k}(\cdot|y)}\E_{t,x_t}[(f_\alpha-\widetilde{f}_\mathcal{S})^\top(f_\alpha+\widetilde{f}_\mathcal{S}-2\nabla\log\phi_t)].
        \end{aligned}
    \end{equation}
    Let $\alpha\to 0$ and it indicates that,
    \begin{equation}
        \sum_{k=1}^K\lambda_k\E_{y\sim\P_k^Y,x\sim\widetilde{\P}_{\widehat{h}_k}(\cdot|y)}\E_{t,x_t}[(f-\widetilde{f}_\mathcal{S})^\top(\widetilde{f}_\mathcal{S}-\nabla\log\phi_t)]\geq 0.
    \end{equation}
    Therefore for any $f\in\mathcal{F}$,
    \begin{equation}\label{eq:strong_cvx}
        \begin{aligned}
        &\sum_{k=1}^K\lambda_k\E_{y\sim\P_k^Y,x\sim\widetilde{\P}_{\widehat{h}_k}(\cdot|y)}[\ell(x,y,f)-\ell(x,y,\widetilde{f}_\mathcal{S})]\\
        &\qquad\geq \sum_{k=1}^K\lambda_k\E_{y\sim\P_k^Y,x\sim\widetilde{\P}_{\widehat{h}_k}(\cdot|y)}\E_{t,x_t}\|f(x_t,y,t)-\widetilde{f}_\mathcal{S}(x_t,y,t)\|^2.
        \end{aligned}
    \end{equation}
    Let $\mathcal{F}_{\mathcal{S},k}(r):=\{f\in\mathcal{F}:\E_{y\sim\P_k^Y,x\sim\widetilde{\P}_{\widehat{h}_k}(\cdot|y)}\E_{t,x_t}\|f(x_t,y_,t)-\widetilde{f}_\mathcal{S}(x_t,y,t)\|^2\leq r^2\}$ and $\mathcal{M}_k(r):=\{(\mathcal{S},f):=\mathcal{S}\in\mathcal{S}_{\text{lin}}, f\in\mathcal{F}_{\mathcal{S},k}(r)\}$.
    Consider the truncated family class 
    \begin{equation}
        \Psi_{k}(r):=\{(x,y)\mapsto \bar{\ell}(x,y,f,\mathcal{S}):=(\ell(x,y,f)-\ell(x,y,\widetilde{f}_\mathcal{S}))\cdot\mathbbm{1}_{\|x\|_\infty\leq R}:(\mathcal{S},f)\in\mathcal{M}_k(r)\}.
    \end{equation}
    By definition, we have $\|\widetilde{x}_i^k\|_\infty\leq R$ for all $1\leq i\leq N$.
    \begin{enumerate}[label=\textbf{Step \arabic*.}]
        \item To bound the individual loss, by the same arguments in \Cref{eq:bound_individual},
        \begin{equation}
            |\bar{\ell}(x,y,\cdot,\cdot)|
            \leq 8d_x(M_1^2R^2+\frac{\log(1/T_0)}{T-T_0})+8M_0^2=:M_R.
        \end{equation}
        \item To bound the second order moment, for any $(\mathcal{S},f)\in\mathcal{M}_k(r)$,
        \begin{equation}
            \begin{aligned}
                &\E_{y\sim\P_k^Y,x\sim\widetilde{\P}_{\widehat{h}_k}(\cdot|y)} \left[\mathbbm{1}_{\|x\|_\infty\leq R} \left(\ell(x,y,f)-\ell(x,y,\widetilde{f}_\mathcal{S})\right)^2\right] \\
                &= \E_{y\sim\P_k^Y,x\sim\widetilde{\P}_{\widehat{h}_k}(\cdot|y)}\left[\mathbbm{1}_{\|x\|_\infty\leq R} \left(\E_{t,x_t}\|f(x_t,y,t)-\nabla\log\phi_t(x_t|x)\|^2-\|\widetilde{f}_\mathcal{S}(x_t,y,t)-\nabla\log\phi_t(x_t|x)\|^2\right)^2\right] \\
                &\leq \E_{y\sim\P_k^Y,x\sim\widetilde{\P}_{\widehat{h}_k}(\cdot|y)} \left[\mathbbm{1}_{\|x\|_\infty\leq R}\left(\E_{t,x_t}\|f(x_t,y,t)-\widetilde{f}_\mathcal{S}(x_t,y,t)\|^2\right)\right.\\
                &\qquad\qquad\qquad\qquad\qquad \left.\cdot \left(\E_{t,x_t}\|f(x_t,y,t)+\widetilde{f}_\mathcal{S}(x_t,y,t)-2\nabla\log\phi_t(x_t|x)\|^2\right)\right] \\
                &\leq 4M_R\E_{y\sim\P_k^Y,x\sim\widetilde{\P}_{\widehat{h}_k}(\cdot|y)} \left[\mathbbm{1}_{\|x\|_\infty\leq R}\left(\E_{t,x_t}\|f(x_t,y,t)-\widetilde{f}_\mathcal{S}(x_t,y,t)\|^2\right)\right]\\
                &\leq 4M_R\E_{y\sim\P_k^Y,x\sim\widetilde{\P}_{\widehat{h}_k}(\cdot|y)} \left[\E_{t,x_t}\|f(x_t,y,t)-\widetilde{f}_\mathcal{S}(x_t,y,t)\|^2\right]\\
                &\leq 4M_Rr^2.
            \end{aligned}
        \end{equation}
        \item To bound the Rademacher complexity, note that for any $(\mathcal{S}_1,f_1),(\mathcal{S}_2,f_2)\in\mathcal{M}_k(r)$,
        \begin{equation}
            \Big\|\frac{1}{\sqrt{N}} \sum_{i=1}^N\sigma_i\bar{\ell}(\widetilde{x}_i^k,\widetilde{y}_i^k,f_1,\mathcal{S}_1) - \frac{1}{\sqrt{N}} \sum_{i=1}^N\sigma_i\bar{\ell}(\widetilde{x}_i^k,\widetilde{y}_i^k,f_2,\mathcal{S}_2) \Big\|_{\psi_2} \leq 4\|\bar{\ell}(\cdot,\cdot,f_1,\mathcal{S}_1)-\bar{\ell}(\cdot,\cdot,f_2,\mathcal{S}_2)\|_{L^2(\widetilde{\P}_k)},
        \end{equation}
        where $\widetilde{\P}_k:=\frac{1}{N}\sum_{i=1}^N\delta_{(\widetilde{x}_i^k,\widetilde{y}_i^k)}$. Define $\textbf{diam}(\Psi_k(r),\|\cdot\|_{L^2(\widetilde{\P}_k)})=D_r$.
        By Dudley's bound \citep{van2014probability,wainwright2019high}, there exists an absolute constant $C_0$ such that for any $\theta>0$,
        \begin{equation}
            \mathcal{R}_N(\Psi_k(r))\leq C_0\left(\theta+\int_\theta^{D_r}\sqrt{\frac{\log\mathcal{N}(\Psi_k(r),\|\cdot\|_{L^2(\widetilde{\P}_k)},\varepsilon)}{N}}\ \dif\varepsilon\right).
        \end{equation}
        By the same arguments in Step 3. in Lemma \ref{lem:generalization_small}, for any $(\mathcal{S}_1,f_1),(\mathcal{S}_2,f_2)\in\mathcal{M}_k(r)$,
        \begin{equation}
            \begin{aligned}
                &\sqrt{\frac{1}{N}\sum_{i=1}^N(\bar{\ell}(\widetilde{x}_i^k,\widetilde{y}_i^k,f_1,\mathcal{S_1})-\bar{\ell}(\widetilde{x}_i^k,\widetilde{y}_i^k,f_2,\mathcal{S}_2))^2}\\
                &\qquad\leq 2M_R^{\frac{1}{2}}[\|f_1-f_2\|_{L^\infty(\Omega_{R})}+\|\widetilde{f}_{\mathcal{S}_1}-\widetilde{f}_{\mathcal{S}_2}\|_{L^\infty(\Omega_{R})}] + 4C_1'M_R^{\frac{1}{2}}\exp(-C_2'R^2/2).
            \end{aligned}
        \end{equation}
        Let $\mathcal{F}':=\{\widetilde{f}_\mathcal{S}:\mathcal{S}\in\mathcal{S}_{\text{lin}}\}\subseteq\mathcal{F}$.
        Hence for any $\varepsilon\geq 8C_1'M_R^{\frac{1}{2}}\exp(-C_2'R^2/2)$,
        \begin{equation}
            \begin{aligned}
            \log\mathcal{N}(\Psi_k(r),\|\cdot\|_{L^2(\widetilde{\P}_k)},\varepsilon)
            &\leq \log\mathcal{N}(\mathcal{F},\|\cdot\|_{L^\infty(\Omega_R)},\varepsilon/(8M_R^{\frac{1}{2}}))+\log\mathcal{N}(\mathcal{F}',\|\cdot\|_{L^\infty(\Omega_R)},\varepsilon/(8M_R^{\frac{1}{2}}))\\
            &\leq 2\log\mathcal{N}(\mathcal{F},\|\cdot\|_{L^\infty(\Omega_R)},\varepsilon/(16M_R^{\frac{1}{2}})).
            \end{aligned}
        \end{equation}
        Plug in the bound above and let $\theta=\frac{16M_R^{\frac{1}{2}}}{N}\geq 8C_1'M_R^{\frac{1}{2}}\exp(-C_2'R^2/2)$,
        \begin{equation}\label{eq:bound_ra_1}
            \begin{aligned}
                \mathcal{R}_N(\Psi_k(r))
                &\leq C_0\left(\frac{16M_R^{\frac{1}{2}}}{N}+\int_{16\sqrt{M_R}/N}^{D_r}\sqrt{\frac{\log\mathcal{N}(\Psi_k(r),\|\cdot\|_{L^2(\widetilde{\P}_k)},\varepsilon)}{N}}\ \dif\varepsilon\right)\\
                &\leq C_0\left(\frac{16M_R^{\frac{1}{2}}}{N}+D_r\sqrt{\frac{\log\mathcal{N}(\mathcal{F},\|\cdot\|_{\Omega_R},\frac{1}{N})}{N}}\right).
            \end{aligned}
        \end{equation}

        Now we turn to bound $D_r$. Denote $r_j:=2^{-j}M_R^{\frac{1}{2}}$ and $j_0:=\lceil\log_2 N\rceil$.
        According to \citet[Lemma 6.2]{bousquet2002concentration}, the following holds with probability at least $1-\delta/(4K)$: for any $0\leq j\leq j_0,\psi\in\Psi_k(r_j)$,
        \begin{equation}
            \begin{aligned}
                &\Big|\frac{1}{N}\sum_{i=1}^N\psi(\widetilde{x}_i^k,\widetilde{y}_i^k)^2- \E_{y\sim\P_k^Y,x\sim\widetilde{\P}_{\widehat{h}_k}(\cdot|y)}[\psi(x,y)^2]\Big| \\
                &\qquad\lesssim M_R\mathcal{R}_N(\Psi_k(r_j))+M_R\sqrt{\frac{M_Rr_j^2\log(K\log(N)/\delta)}{N}}+\frac{M_R^2\log(K\log(N)/\delta)}{N}.
            \end{aligned}
        \end{equation}
        Combining this with Step 2., we get
        \begin{equation}
            \Big|\frac{1}{N}\sum_{i=1}^N\psi(\widetilde{x}_i^k,\widetilde{y}_i^k)^2\Big|\lesssim M_Rr_j^2+M_R\mathcal{R}_N(\Psi_k(r_j))+\frac{M_R^2\log(K\log(N)/\delta)}{N}.
        \end{equation}
        indicating that 
        \begin{equation}
            D_{r_j}^2\lesssim M_Rr_j^2+M_R\mathcal{R}_N(\Psi_k(r_j))+\frac{M_R^2\log(K\log(N)/\delta)}{N}.
        \end{equation}
        Plug into \Cref{eq:bound_ra_1}, we finally get with probability no less than $1-\delta/(4K)$, for any $j\leq j_0$,
        \begin{equation}\label{eq:bound_local_ra}
            \mathcal{R}_N(\Psi_k(r_j))\lesssim \frac{M_R\Big(\log\mathcal{N}(\mathcal{F},\|\cdot\|_{\Omega_R},\frac{1}{N})+\log(K\log(N)/\delta)\Big)}{N}+r_j\sqrt{\frac{M_R\log\mathcal{N}(\mathcal{F},\|\cdot\|_{\Omega_R},\frac{1}{N})}{N}}
        \end{equation}
    \end{enumerate}

    By \citet[Lemma 6.1]{bousquet2002concentration}, with probability at least $1-\delta/(4K)$, for any $j\leq j_0,(\mathcal{S},f)\in\mathcal{M}_k(r_j)$,
    \begin{equation}
        \Big|\E_{y\sim\P_k^Y,x\sim\widetilde{\P}_{\widehat{h}_k}(\cdot|y)}[\bar{\ell}(x,y,f,\mathcal{S})]
        -\frac{1}{N}\sum_{i=1}^N[\bar{\ell}(\widetilde{x}_i^k,\widetilde{y}_i^k,f,\mathcal{S})]\Big|
        \lesssim \mathcal{R}_N(\Psi_k(r_j))+r_j\sqrt{\frac{M_R\log(\frac{K}{\delta})}{N}}+\frac{M_R\log(\frac{K}{\delta})}{N}.
    \end{equation}
    Define $\widehat{r}_k^2:=\min\{r_j:r_j^2\geq\E_{y\sim\P_k^Y,x\sim\widetilde{\P}_{\widehat{h}_k}(\cdot|y)}\E_{t,x_t}\|\widehat{f}_\mathcal{S}(x_t,y_,t)-\widetilde{f}_\mathcal{S}(x_t,y,t)\|^2,j\leq j_0\}$. Then we have
    \begin{equation}
        \Big|\E_{y\sim\P_k^Y,x\sim\widetilde{\P}_{\widehat{h}_k}(\cdot|y)}\bar{\ell}(x,y,\widehat{f}_\mathcal{S},\mathcal{S})
        -\frac{1}{N}\sum_{i=1}^N\bar{\ell}(\widetilde{x}_i^k,\widetilde{y}_i^k,\widehat{f}_\mathcal{S},\mathcal{S})\Big|
        \lesssim \mathcal{R}_N(\Psi_k(\widehat{r}_k))+\widehat{r}_k\sqrt{\frac{M_R\log(\frac{K}{\delta})}{N}}+\frac{M_R\log(\frac{K}{\delta})}{N}.
    \end{equation}
    Since $\|\widetilde{x}_i^k\|_\infty\leq R$, by definition we also have $\widehat{f}_\mathcal{S}=\argmin_{f\in\mathcal{F}}\sum_{k=1}^K\frac{\lambda_k}{N}\sum_{i=1}^N\bar{\ell}(\widetilde{x}_i^k,\widetilde{y}_i^k,f,\mathcal{S})$.
    \begin{equation}
        \sum_{k=1}^K\lambda_k \E_{y\sim\P_k^Y,x\sim\widetilde{\P}_{\widehat{h}_k}(\cdot|y)}\bar{\ell}(x,y,\widehat{f}_\mathcal{S},\mathcal{S})
        \lesssim \sum_{k=1}^K\lambda_k\Big[\mathcal{R}_N(\Psi_k(\widehat{r}_k))+\widehat{r}_k\sqrt{\frac{M_R\log(\frac{K}{\delta})}{N}}\Big]+\frac{M_R\log(\frac{K}{\delta})}{N}.
    \end{equation}
    Therefore, 
    \begin{equation}
        \sum_{k=1}^K\lambda_k \E_{y\sim\P_k^Y,x\sim\widetilde{\P}_{\widehat{h}_k}(\cdot|y)}[\ell(x,y,\widehat{f}_\mathcal{S})-\ell(x,y,\widetilde{f}_\mathcal{S})]
        \lesssim \sum_{k=1}^K\lambda_k\Big[\mathcal{R}_N(\Psi_k(\widehat{r}_k))+\widehat{r}_k\sqrt{\frac{M_R\log(\frac{K}{\delta})}{N}}\Big]+\frac{M_R\log(\frac{K}{\delta})}{N}.
    \end{equation}
    Combining the inequality above with \Cref{eq:strong_cvx,eq:bound_local_ra},
    \begin{equation}
        \begin{aligned}
            \sum_k\lambda_k\widehat{r}_k^2
            &\lesssim \sum_{k=1}^K\lambda_k\Big[\mathcal{R}_N(\Psi_k(\widehat{r}_k))+\widehat{r}_k\sqrt{\frac{M_R\log(\frac{K}{\delta})}{N}}\Big]+\frac{M_R\log(\frac{K}{\delta})}{N}\\
            &\lesssim \sum_{k=1}^K\lambda_k\widehat{r}_k\sqrt{\frac{M_R(\log\mathcal{N}(\mathcal{F},\|\cdot\|_{\Omega_R},\frac{1}{N})+\log(\frac{K}{\delta}))}{N}}+\frac{M_R\Big(\log\mathcal{N}(\mathcal{F},\|\cdot\|_{\Omega_R},\frac{1}{N})+\log(K\log(N)/\delta)\Big)}{N}.
        \end{aligned}
    \end{equation}
    Hence by Jensen's inequality,
    \begin{equation}
        \sum_{k=1}^K\lambda_k\widehat{r}_k\lesssim \sqrt{\frac{M_R(\log\mathcal{N}(\mathcal{F},\|\cdot\|_{\Omega_R},\frac{1}{N})+\log(\frac{K}{\delta}))}{N}}.
    \end{equation}
    We finally conclude that with probability no less than $1-\delta/2$,
    \begin{equation}
        \sum_{k=1}^K\lambda_k \E_{y\sim\P_k^Y,x\sim\widetilde{\P}_{\widehat{h}_k}(\cdot|y)}[\ell(x,y,\widehat{f}_\mathcal{S})-\ell(x,y,\widetilde{f}_\mathcal{S})]
        \lesssim \frac{M_R(\log\mathcal{N}(\mathcal{F},\|\cdot\|_{\Omega_R},\frac{1}{N})+\log(\frac{K}{\delta}))}{N}.
    \end{equation}
    Let $R=C\log^{\frac{1}{2}}(NKd_xM_1M_0/\delta)$ for some large constant $C$, $T=\mathcal{O}(\log n),T_0=\mathcal{O}\left(1/(n^2\log^{d_x+1}(n))\right)$. Then $M_R\lesssim M_0^2+d_xM_1^2\log(\frac{NK}{\delta})$ and with probability at least $1-\delta$, for any $\mathcal{S}\in\mathcal{S}_{\text{lin}}$,
    \begin{equation}
        \sum_{k=1}^K\lambda_k \E_{y\sim\P_k^Y,x\sim\widetilde{\P}_{\widehat{h}_k}(\cdot|y)}[\ell(x,y,\widehat{f}_\mathcal{S})-\ell(x,y,\widetilde{f}_\mathcal{S})]
        \lesssim \big(M_0^2+d_xM_1^2\log(\frac{NK}{\delta})\big)\frac{\log\mathcal{N}(\mathcal{F},\|\cdot\|_{\Omega_R},\frac{1}{N})+\log(\frac{K}{\delta})}{N},
    \end{equation}
    indicating that (recalling the definition of $\bar{\varepsilon}_{\text{apx}}$)
    \begin{equation}
        \begin{aligned}
            &\E_{y\sim\P_k^Y,x\sim\widetilde{\P}_{\widehat{h}_k}(\cdot|y)}\E_{t,x_t}[\|\widehat{f}_\mathcal{S}(x_t,y,t)-\widehat{h}_k(x_t,y,t)\|^2]\leq \bar{\varepsilon}_{\text{apx}}^2(\mathcal{S})\\
            &\qquad\qquad\qquad +\big(M_0^2+d_xM_1^2\log(\frac{NK}{\delta})\big)\frac{\log\mathcal{N}(\mathcal{F},\|\cdot\|_{\Omega_R},\frac{1}{N})+\log(\frac{K}{\delta})}{N}.
        \end{aligned}
    \end{equation}
    According to Lemma \ref{lem:TV_bound},
    \begin{equation}
        \sum_{k=1}^K\lambda_k \E_{y\sim\P_k^Y}\mathrm{TV}(\P_{\widehat{f}_\mathcal{S}}(\cdot|y),\widetilde{\P}_{\widehat{h}_k}(\cdot|y))
        \lesssim \bar{\varepsilon}_{\text{apx}}(\mathcal{S})+\big(M_0+M_1d_x^{\frac{1}{2}}\log^{\frac{1}{2}}(\frac{NK}{\delta})\big)\sqrt{\frac{\log\mathcal{N}(\mathcal{F},\|\cdot\|_{\Omega_R},\frac{1}{N})+\log(\frac{K}{\delta})}{N}}.
    \end{equation}
    Finally we invoke Lemma \ref{app:lem:generalization_small} and Lemma \ref{lem:tv_truncation_error} and conclude that
    \begin{equation}
        \begin{aligned}
            &\sum_{k=1}^K\lambda_k \E_{y\sim\P_k^Y}\mathrm{TV}(\P_{\widehat{f}_\mathcal{S}}(\cdot|y),\P_{k}(\cdot|y))
            \lesssim \bar{\varepsilon}_{\text{apx}}(\mathcal{S})\\
            &\qquad+\big(M_0+M_1d_x^{\frac{1}{2}}\log^{\frac{1}{2}}(\frac{NK}{\delta})\big)\left(\sqrt{\frac{\log\mathcal{N}(\mathcal{F},\|\cdot\|_{\Omega_R},\frac{1}{N})}{N}}+\sum_{k=1}^K\lambda_k\sqrt{\frac{\log\mathcal{N}(\mathcal{H}_k,\|\cdot\|_{\Omega_R},\frac{1}{n^2})+\log(\frac{K}{\delta})}{n}}\right).
        \end{aligned}
    \end{equation}
\end{proof}

\subsection{Proof in Sec. \ref{subsec:generalization_mdp}}

\begin{thm}\label{app:thm:generalization_mdp}
    Under \Cref{asp:sub_gaussian,asp:lip,asp:realizability} and the MDP setting, assume that $\mathcal{F}$ is convex. The following with probability no less than $1-\delta$: for any $\mathcal{S}\in\mathcal{S}_{\text{lin}}$,
    \begin{equation}
        \begin{aligned}
        &\mathcal{S}\left(\big(V_{M_k}(\pi_k^*)-V_{M_k}(\pi_{\widehat{f}_\mathcal{S}})\big)_k\right)\lesssim \bar{\varepsilon}_{\text{apx}}(\mathcal{S}) \\
        &\qquad\qquad +\frac{M_0+M_1d_x^{\frac{1}{2}}\log^{\frac{1}{2}}(\frac{NK}{\delta})}{(1-\gamma)^2}\left(\sqrt{\frac{\log\mathcal{N}(\mathcal{F},\|\cdot\|_{\Omega_R},\frac{1}{N})}{N}}+\sum_{k=1}^K\lambda_k\sqrt{\frac{\log\mathcal{N}(\mathcal{H}_k,\|\cdot\|_{\Omega_R},\frac{1}{n^2})+\log(\frac{K}{\delta})}{n}}\right)
        \end{aligned}
    \end{equation}
\end{thm}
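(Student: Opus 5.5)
The plan is to reduce the value gap to a total-variation bound on the policies under the \emph{learned specialist's} state distribution, and then reuse the proof of \Cref{app:thm:generalization_local} essentially verbatim, with $\P_k^Y$ replaced by the on-policy state marginal.

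\textbf{Step 1: value-gap decomposition.} For each $k$, split
\[
V_{M_k}(\pi_k^*)-V_{M_k}(\pi_{\widehat{f}_\mathcal{S}})=\big[V_{M_k}(\pi_k^*)-V_{M_k}(\pi_{\widehat{h}_k})\big]+\big[V_{M_k}(\pi_{\widehat{h}_k})-V_{M_k}(\pi_{\widehat{f}_\mathcal{S}})\big].
\]
Bound each bracket with the performance difference lemma. Since rewards lie in $[-1,1]$, we have $|Q|\le (1-\gamma)^{-1}$. The PDL then gives, for any two policies $\pi,\pi'$,
\[
|V_M(\pi)-V_M(\pi')|\le \frac{2}{(1-\gamma)^2}\,\E_{s\sim d_M^{\pi}}\mathrm{TV}(\pi(\cdot|s),\pi'(\cdot|s)),
\]
where $d_M^\pi$ is the state visitation marginal of $\P_M^\pi$. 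The key choice is where to put the expectation:
\begin{itemize}
\item In the first bracket, take it under the expert measure $d_{M_k}^{\pi_k^*}=\P_k^Y$.
\item In the second bracket, take it under $d_{M_k}^{\pi_{\widehat{h}_k}}$, the law of the rollout states $\widetilde{s}_i^k$.
\end{itemize}
With this choice neither term requires a change of measure between expert and learned state distributions. This is how the distribution shift is avoided, at the cost of the $(1-\gamma)^{-2}$ factor.

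\textbf{Step 2: the specialist term.} For the first bracket, apply \Cref{lem:TV_bound} to $\P_k$ with $T,T_0$ configured as before. This gives $\E_{s\sim\P_k^Y}\mathrm{TV}(\pi_{\widehat{h}_k},\pi_k^*)\lesssim \frac1n+\sqrt{L_{\P_k}(\widehat{h}_k)}$. Lemma \ref{app:lem:generalization_small}, applied with $\delta/2$ and a union bound over $k$, then supplies the $\sqrt{\log\mathcal{N}(\mathcal{H}_k,\frac{1}{n^2})/n}$ rate with the prefactor $M_0+M_1d_x^{1/2}\log^{1/2}$. Summing with weights $\lambda_k$ yields the $\sum_k\lambda_k(\cdots)$ term.

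\textbf{Step 3: the distillation term.} The pseudo-data $(\widetilde{s}_i^k,\widetilde{a}_i^k)$ are i.i.d.\ from the joint $\P_{M_k}^{\pi_{\widehat{h}_k}}$, whose conditional is $\P_{\widehat{h}_k}(\cdot|s)$. After truncation to $B_R$, this is exactly the structure used in \Cref{app:thm:generalization_local}, with $\P_k^Y$ replaced by $d_{M_k}^{\pi_{\widehat{h}_k}}$. That proof never used that the conditions came from $\P_k^Y$; it used only i.i.d.\ sampling and the fact that the conditional is $\widetilde{\P}_{\widehat{h}_k}$. So I would rerun it line by line:
\begin{itemize}
\item convexity of $\mathcal{F}$ gives the strong-convexity inequality \eqref{eq:strong_cvx} around $\widetilde{f}_\mathcal{S}$;
\item the localized Rademacher bound \eqref{eq:bound_local_ra} and peeling follow as before;
\item together they give the fast rate $\sum_k\lambda_k\E\|\widehat{f}_\mathcal{S}-\widehat{h}_k\|^2\lesssim\bar{\varepsilon}^2_{\text{apx}}(\mathcal{S})+M_R\log\mathcal{N}(\mathcal{F},\frac1N)/N$.
\end{itemize}
Here $\bar{\varepsilon}_{\text{apx}}$ should now be read with on-policy state marginals. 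Because it is already a supremum over $h_k\in\mathcal{H}_k$, I would define it with $d^{\pi_{h_k}}_{M_k}$ in place of $\P_k^Y$ and note that it is still a supremum over specialists. Next, \Cref{lem:TV_bound} is applied to the "target" $\P_{\widehat{h}_k}(\cdot|s)$ with estimator $\widehat{f}_\mathcal{S}$; the score of $\P_{\widehat{h}_k}$ coincides with $\widehat{h}_k$ up to the sampler's initialization error, which is $e^{-T}$-small. Combined with Lemma \ref{lem:tv_truncation_error} to undo the truncation, and Jensen over $\lambda$, this bounds $\sum_k\lambda_k\E_{s\sim d^{\pi_{\widehat h_k}}}\mathrm{TV}(\pi_{\widehat f_\mathcal{S}},\pi_{\widehat h_k})$.

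\textbf{Step 4: assembly and main obstacle.} Multiplying the bounds of Steps 2 and 3 by $(1-\gamma)^{-2}$ and adding them gives the statement. The main obstacle I expect is making \Cref{lem:TV_bound} applicable in Step 3. It needs a KL bound on $\P_{\widehat{h}_k}(\cdot|s)$ relative to $\mathcal{N}(0,I)$, plus sub-Gaussian tails for the generated actions to control truncation. Neither is given by \Cref{asp:sub_gaussian} for learned policies. I would first try to derive them from the linear growth in \Cref{asp:lip}: a Grönwall argument on the backward SDE gives sub-Gaussian moments for $T=\mathcal{O}(\log n)$. For the KL part, I would use the fact that the score of $\P_{\widehat{h}_k}$ matches $\widehat{h}_k$ up to the $e^{-T}$-small initialization discrepancy. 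If that fails, I would instead compare $\pi_{\widehat{f}_\mathcal{S}}$ and $\pi_{\widehat{h}_k}$ directly by Girsanov along the backward SDE started from $\mathcal{N}(0,I)$, which needs no KL assumption.
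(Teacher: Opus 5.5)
Your proof is correct, and at the top level it takes a genuinely different and cleaner route than the paper's.

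\textbf{How the routes differ.} The paper applies the performance difference lemma once, to $\pi_k^*$ versus $\pi_{\widehat{f}_{\mathcal{S}}}$ under the expert measure $\P_k$. It then moves $\E_{s\sim\P_k}\mathrm{TV}(\pi_k^*,\pi_{\widehat{f}_{\mathcal{S}}})$ to the rollout measure at the price of $\mathrm{TV}(\P_k,\P_{M_k}^{\pi_{\widehat{h}_k}})$, which it bounds by $\E_{s\sim\P_k}\mathrm{TV}(\pi_k^*,\pi_{\widehat{h}_k})$ and Lemma~\ref{lem:TV_bound}. The remaining rollout-measure term is handed to Theorem~\ref{app:thm:generalization_local}, with $\P_{\widehat{h}_k}$ playing the ground truth. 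You instead split at $\pi_{\widehat{h}_k}$ and evaluate each performance-difference identity under the visitation measure of the leading policy. The specialist error then sits under $\P_k^Y$ and the distillation error under the law of the pseudo-states. These are exactly the measures on which Lemma~\ref{app:lem:generalization_small} and the localization argument give control, so no transfer between state distributions is ever needed.

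\textbf{What this buys.} You get the correct horizon dependence. The paper's inequality $\mathrm{TV}(\P_k,\P_{M_k}^{\pi_{\widehat{h}_k}})\le\E_{s\sim\P_k}\mathrm{TV}(\pi_k^*,\pi_{\widehat{h}_k})$ ignores compounding. For example, suppose the two policies differ with probability $\epsilon$ only at the initial state, and the deviating action leads to an absorbing state. Then the state marginals differ by $\gamma\epsilon$, while the right-hand side is $(1-\gamma)\epsilon$. The valid simulation bound carries an extra factor $\frac{\gamma}{1-\gamma}$. Passing through $\pi_{\widehat{h}_k}$ inside the rollout-measure term needs a second such transfer. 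So the paper's route honestly gives $(1-\gamma)^{-3}$ on the specialist term, whereas yours gives $(1-\gamma)^{-2}$.

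Both arguments put $(1-\gamma)^{-2}$ in front of $\bar{\varepsilon}_{\text{apx}}(\mathcal{S})$, as in the paper's last display and the main-text version. The bare $\bar{\varepsilon}_{\text{apx}}(\mathcal{S})$ in the appendix statement should be read that way. Your remark that $\bar{\varepsilon}_{\text{apx}}$ must be taken with on-policy state marginals is also needed, and is only implicit in the paper.

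\textbf{Two auxiliary claims in Steps 3--4 that do not hold as written.} The paper does not address either point.

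First, $\widehat{h}_k$ is not the score of the forward diffusion of $\P_{\widehat{h}_k}$ up to an $e^{-T}$ initialization error. A reverse SDE driven by a drift that is not an exact, time-consistent score field produces an output whose noised scores differ from that drift, and this has nothing to do with $T$. The same mismatch affects your Girsanov fallback. Girsanov controls $\|\widehat{f}_{\mathcal{S}}-\widehat{h}_k\|^2$ along the marginals of the $\widehat{h}_k$-driven reverse process. The second-stage ERM instead controls the loss along the forward-noised marginals of the pseudo-actions. The paper makes this identification silently (``the ground truth distribution is $\P_{\widehat{h}_k}$''). To make it rigorous, measure $\bar{\varepsilon}_{\text{apx}}$ against the true score $s_{\P_{h_k}}$ of $\P_{h_k}$ rather than against $h_k$. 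Since $\E[\ell(x,y,f)-\ell(x,y,\widehat{h}_k)]$ differs from $\E\|f-s_{\P_{\widehat{h}_k}}\|^2$ only by an $f$-independent constant, the localization step and Lemma~\ref{lem:TV_bound} then go through as in the paper.

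Second, Gr\"onwall with the linear-growth drift $x+2\widehat{h}_k$ only controls the backward process at scale $e^{\mathcal{O}((1+M_1)T)}$. For $T=\mathcal{O}(\log n)$ this is polynomial in $n$, which would inflate $R$ and $M_R$. A simpler fix is to roll out the truncated specialist $\widetilde{\pi}_{\widehat{h}_k}(\cdot|s):=\widetilde{\P}_{\widehat{h}_k}(\cdot|s)$ and split the value gap at it. The distillation term is then exactly what the proof of Theorem~\ref{app:thm:generalization_local} bounds under the law of the rollout states, with no truncation to undo. The truncation cost enters only the specialist term under $\P_k^Y$. There Lemma~\ref{lem:tv_truncation_error} applies for each $s$ with $\mathbb{Q}=\P_k(\cdot|s)$, by Assumption~\ref{asp:sub_gaussian}.
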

\begin{proof}
    Define the value function of $M$ under policy $\pi$ and initial state $s_0$ as
    \begin{equation}
        \begin{aligned}
            &V_M(\pi,s_0):=\E\Big[\sum_{t=0}^\infty \gamma^tr_M(s_t,a_t)\Big], a_t\sim\pi(\cdot|s_t),
            s_{t+1}\sim\mathcal{T}_M(\cdot|s_t,a_t),\\  
            &V_M(\pi):=\E_{s_0\sim\rho_M}[V_M(\pi,s_0)].
        \end{aligned}
    \end{equation}
    Let $A_M^\pi(s,a)=Q_M^\pi(s,a)-V_M(\pi,s)$ be the advantage function of policy $\pi$ under environment $M$.
    Note that the reward function $r_M\in[-1,1]$, we have $|A_M^\pi(s,a)|\leq \frac{2}{1-\gamma}$ for any $M,\pi$.
    According to performance difference lemma,
    \begin{equation}
        \begin{aligned}
             V_{M_k}(\pi_k^*)-V_{M_k}(\pi_{\widehat{f}_\mathcal{S}}) 
             &= \frac{1}{1-\gamma}\E_{(s,a)\sim \P_k} [A_{M_k}^{\pi_{\widehat{f}_\mathcal{S}}}(s,a)] \\
             &= \frac{1}{1-\gamma}\E_{s\sim \P_k} \left[\E_{a\sim \pi_k^*(\cdot|s)}[A_{M_k}^{\pi_{\widehat{f}_\mathcal{S}}}(s,a)]-\E_{a\sim \pi_{\widehat{f}_\mathcal{S}}(\cdot|s)}[A_{M_k}^{\pi_{\widehat{f}_\mathcal{S}}}(s,a)]\right] \\
             &\leq \frac{2}{(1-\gamma)^2}\E_{s\sim \P_k}\mathrm{TV}(\pi_k^*(\cdot|s),\pi_{\widehat{f}_\mathcal{S}}(\cdot|s))\\
             &\leq \frac{2}{(1-\gamma)^2}\left(\E_{s\sim \P_{M_k}^{\pi_{\widehat{h}_k}}}\mathrm{TV}(\pi_k^*(\cdot|s),\pi_{\widehat{f}_\mathcal{S}}(\cdot|s))+\mathrm{TV}(\P_k,\P_{M_k}^{\pi_{\widehat{h}_k}})\right)
        \end{aligned}
    \end{equation}
    By definition of total variation distance,
    \begin{equation}
        \begin{aligned}
            \mathrm{TV}(\P_k,\P_{M_k}^{\pi_{\widehat{h}_k}})
            &=\sup_{\Omega\subseteq S}|\P_k(\Omega)-\P_{M_k}^{\pi_{\widehat{h}_k}}(\Omega)|\\
            &=\sup_{\Omega\in S}\Big|\E_{(s,a)\sim\P_k}\mathbbm{1}_{s\in\Omega}-\E_{(s,a)\sim\P_{M_k}^{\widehat{h}_k}}\mathbbm{1}_{s\in\Omega}\Big|\\
            &\leq \E_{s\sim\P_k}\mathrm{TV}(\pi_k^*(\cdot|s),\pi_{\widehat{h}_k}(\cdot|s))\\
            &\lesssim \sqrt{L_{\P_k}(\widehat{h}_k)}+\frac{1}{N}.
        \end{aligned}
    \end{equation}
    Here in the last inequality we invoke Lemma \ref{lem:TV_bound} and plug in the definition of $T,T_0$.
    Finally we apply \Cref{app:thm:generalization_local} and Lemma \ref{app:lem:generalization_small} (note that here the ground truth distribution is $\P_{\widehat{h}_k}$) to get
    \begin{equation}
        \begin{aligned}
            &\sum_{k=1}^K\lambda_k(V_{M_k}(\pi_k^*)-V_{M_k}(\pi_{\widehat{f}_\mathcal{S}}))
            \lesssim (1-\gamma)^{-2}\bar{\varepsilon}_{\text{apx}}(\mathcal{S})\\
            &\qquad\qquad\qquad +\frac{M_0+M_1d_x^{\frac{1}{2}}\log^{\frac{1}{2}}(\frac{NK}{\delta})}{(1-\gamma)^2}\left(\sqrt{\frac{\log\mathcal{N}(\mathcal{F},\|\cdot\|_{\Omega_R},\frac{1}{N})}{N}}+\sum_{k=1}^K\lambda_k\sqrt{\frac{\log\mathcal{N}(\mathcal{H}_k,\|\cdot\|_{\Omega_R},\frac{1}{n^2})+\log(\frac{K}{\delta})}{n}}\right).
        \end{aligned}
    \end{equation}
\end{proof}

\end{document}

s

%% file: math_commands.tex
\usepackage{amsmath,amsfonts,bm}

\def\eqref#1{equation~\ref{#1}}

\def\1{\bm{1}}

\DeclareMathAlphabet{\mathsfit}{\encodingdefault}{\sfdefault}{m}{sl}
\SetMathAlphabet{\mathsfit}{bold}{\encodingdefault}{\sfdefault}{bx}{n}

\newcommand{\E}{\mathbb{E}}

\newcommand{\R}{\mathbb{R}}

\DeclareMathOperator*{\argmin}{arg\,min}

\newif\ifdraft
\drafttrue

\ifdraft
    
    \newcommand{\shaw}[1]{\textcolor{violet}{#1}}

\else     
    \newcommand{\shaw}[1]{}

\fi

%% file: def.tex
\newcommand{\poly}{\textbf{poly}}

\newcommand{\dif}{\mathrm{d}}

\newtheorem{thm}{Theorem}[section]
\newtheorem{rmk}{Remark}
\newtheorem{asp}{Assumption}[section]
\newtheorem{lemma}[thm]{Lemma}
\newtheorem{coro}[thm]{Corollary}

\theoremstyle{definition}
\newtheorem{definition}{Definition}[section]
\crefname{equation}{eq.}{eqs.}
\Crefname{equation}{Eq.}{Eqs.}
\crefname{asp}{asp.}{asps.}
\Crefname{asp}{Asp.}{Asps.}
\crefname{thm}{thm.}{thms.}
\Crefname{thm}{Thm.}{Thms.}
\crefname{prop}{prop.}{props.}
\Crefname{prop}{Prop.}{Props.}
\crefname{definition}{def.}{defs.}
\Crefname{definition}{Def.}{Defs.}
\crefname{algorithm}{alg.}{algs.}
\Crefname{algorithm}{Alg.}{Algs.}

%% file: contents/experiments.tex
\input{tables/stack_cube_DR_res}

\section{Experiments}\label{sec:exp}
Our experiments empirically validate the two-stage semi-supervised approach to multi-objective learning (MOL) introduced in the previous sections. Specifically, we ask:
\begin{enumerate}
    \item Does augmenting the labeled split with specialist-generated pseudo-samples outperform the labeled-only multi-task MOL baseline?
    \item Does the same recipe apply across qualitatively different domains and tasks?
\end{enumerate}

We instantiate our approach in two domains: robotic manipulation in
simulation (\Cref{sec:exp:robot}) and image restoration on natural images
(\Cref{sec:exp:image}). In both settings, per-task specialists
are trained only on the labeled split and then used to generate
pseudo-samples on unlabeled conditions. A larger generalist is trained on
the union of labeled and pseudo-labeled data, while the labeled-only MOL
baseline uses the same generalist architecture but only the labeled split. Full training
details are deferred to \Cref{app:exp-details}.


\subsection{Robotic Manipulation}\label{sec:exp:robot}
We evaluate on \texttt{StackCube} and \texttt{PickCube} \citep{gu2023maniskill2} in RoboVerse \citep{geng2025roboverse} on top of Isaac Sim \citep{isaacsim}. Each family is instantiated under three \emph{domain-randomization (DR)} elements that we treat as the multiple objectives:
\begin{itemize}
    \item \textbf{Scene/material randomization} of the table, ground, and wall, drawn from curated subsets of ARNOLD \citep{gong2023arnold} and vMaterials \citep{nvidia_vmaterials_2024};
    \item \textbf{Lighting randomization} of intensity, color temperature, and source geometry;
    \item \textbf{Camera-pose randomization}, where a fraction of cameras are repositioned to side-facing angles.
\end{itemize}
We compose them into four cumulative levels: \texttt{L0} (canonical scene), \texttt{L1} (adds scene/material), \texttt{L2} (adds lighting on top of \texttt{L1}), and \texttt{L3} (adds camera-pose). \texttt{L1} is itself instantiated as three single-element variants---\texttt{L1-Table}, \texttt{L1-Floor}, \texttt{L1-Wall}---which together with \texttt{L0} yield four training variants per family; \texttt{L2} and \texttt{L3} are held out as OOD (see \Cref{fig:dr-levels}). 

All policies are CNN-based diffusion policies \citep{chi2025diffusion} with a DDPM \citep{ho2020denoising} U-Net1D action head and a ResNet-18 visual encoder. Per-task specialists ($\approx 85$M parameters) are paired with a wider generalist ($\approx 293$M parameters), realizing the theoretical capacity gap $\mathcal{C}_\mathcal{F}\gg\mathcal{C}_\mathcal{H}$. Specialists are trained on $\sim 900$ ($\texttt{StackCube}$) or $100$ ($\texttt{PickCube}$) labeled demonstrations per variant, then perform closed-loop pseudo-rollouts on unlabeled initial states. We apply success-checker rejection sampling to retain up to $800$ successful trajectories per variant. The generalist is trained jointly on labeled $+$ filtered pseudo data, while the multi-task baseline is trained on the labeled data only. 


\Cref{tab:combined-results} reports closed-loop success rates over 100
rollouts per level. The semi-supervised MOL method improves over the
labeled-only baseline in every setting across both task families. The
gains are largest under the distribution shift. On
\texttt{StackCube}, success improves by $+16$ percentage points at
Level 2 and $+14$ points at Level 3; on \texttt{PickCube}, success
improves by $+44$ points at Level 0 and $+16$ points at Level 3. These
results suggest that the pseudo-rollouts provide useful
additional supervision for training the larger generalist, particularly
when the labeled demonstrations alone do not sufficiently cover the
states encountered at test time.


\input{tables/image_restoration}

\input{contents/celeba_visual}
\subsection{Image Restoration}\label{sec:exp:image}
We instantiate the same recipe on CelebA-HQ \citep{karras2017progressive} face inpainting at $64{\times}64$ with three mask families \citep{lugmayr2022repaint} as the multiple objectives: alternating lines, face-region, and wide-stroke (\Cref{fig:inpainting-masks}). The backbone is a Residual Denoising Diffusion Model (RDDM) \citep{liu2024residual}, which decouples the reverse process into separate residual and noise prediction heads and is sampled with $10$ steps. Each specialist has approximately 290M
parameters, while the generalist has approximately 1.16B parameters.
This gives a $4\times$ parameter increase and again realizes the
intended capacity gap $C_\mathcal F \gg C_\mathcal H$.

The 28k-image training set is partitioned into 10k labeled images and
18k unlabeled images. For each mask family, we train one specialist on
the labeled split and then use it to generate one pseudo-target for
each unlabeled image, yielding 18k pseudo-pairs per objective. Unlike
the robotics experiment, we do not apply rejection filtering here: all
pseudo-pairs are retained. The ground-truth targets of the unlabeled
split are never accessed during training.


\Cref{tab:image-restoration} reports Peak Signal-to-Noise Ratio (PSNR), Structural Similarity Index Measure (SSIM), and Learned Perceptual Image Patch Similarity (LPIPS) \citep{zhang2018unreasonable} on the full $2$k test split with deterministic mask sampling. Our approach improves over the labeled-only baseline on \emph{every} mask family and \emph{every} metric: PSNR by $+0.62$ to $+1.41$\,dB, SSIM by $+0.040$ to $+0.054$, and LPIPS by $-0.006$ to $-0.014$. These results demonstrate that our framework effectively leverages unlabeled data to improve reconstruction quality in inpainting.


\subsection{Discussion} \label{sec:exp:discussion}

Across robotic control and image restoration, the proposed
semi-supervised MOL procedure consistently improves over the labeled-only
MOL baseline. These two settings differ substantially in modality,
evaluation protocol, and labeled-data scale, yet they exhibit the same
qualitative trend: training the large generalist with
specialist-generated pseudo-data is more effective than training the same
generalist on the labeled split alone.

This trend is consistent with the main statistical message of our
theory. When the generalist class is much larger than the specialist
classes ($C_{\mathcal F} \gg C_{\mathcal H}$), directly fitting the generalist from limited
paired data can be sample-inefficient. The two-stage procedure mitigates
this bottleneck by learning specialists at a smaller labeled-sample cost
and then using abundant unlabeled conditions to generate additional
training data for the generalist. Beyond the in-distribution
improvements, the robotics results also show especially large gains on
the held-out domain-randomization levels, suggesting that
specialist-generated pseudo-rollouts can improve closed-loop performance
under distribution shift. Overall, the experiments provide qualitative support for the proposed
specialist-to-generalist mechanism.



%% file: tables/stack_cube_DR_res.tex
\begin{table*}[!tb]
\centering
\small
\caption{
Robotic manipulation results on StackCube and PickCube. 
We report closed-loop success rates over 100 test trajectories. 
Levels 0--1 are in-distribution evaluations, while Levels 2--3 are out-of-distribution (OOD) evaluations with additional domain randomization.
}
\label{tab:combined-results}
\begin{tabular}{llcc}
\toprule[1.5pt]
\textbf{Task} & \textbf{Level} 
& \textbf{Semi-supervised MOL (ours)} 
& \textbf{Labeled-only MOL baseline} \\
\midrule[1pt]
\textit{StackCube} 
& Level 0       & \textbf{43\%} & 36\% \\
& Level 1       & \textbf{44\%} & 36\% \\
& Level 2 (OOD) & \textbf{40\%} & 24\% \\
& Level 3 (OOD) & \textbf{22\%} & 8\%  \\
\midrule
\textit{PickCube} 
& Level 0       & \textbf{69\%} & 25\% \\
& Level 1       & \textbf{49\%} & 26\% \\
& Level 2 (OOD) & \textbf{41\%} & 35\% \\
& Level 3 (OOD) & \textbf{42\%} & 26\% \\
\bottomrule
\end{tabular}
\end{table*}

\begin{figure}[t]
    \centering
    \begin{subfigure}[b]{0.24\textwidth}
        \centering
        \includegraphics[width=\linewidth]{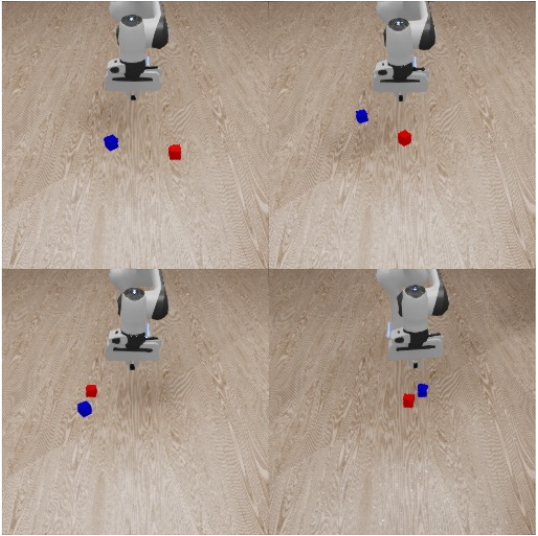}
        \caption{Level 0}
        \label{fig:stackcube-l0}
    \end{subfigure}\hfill
    \begin{subfigure}[b]{0.24\textwidth}
        \centering
        \includegraphics[width=\linewidth]{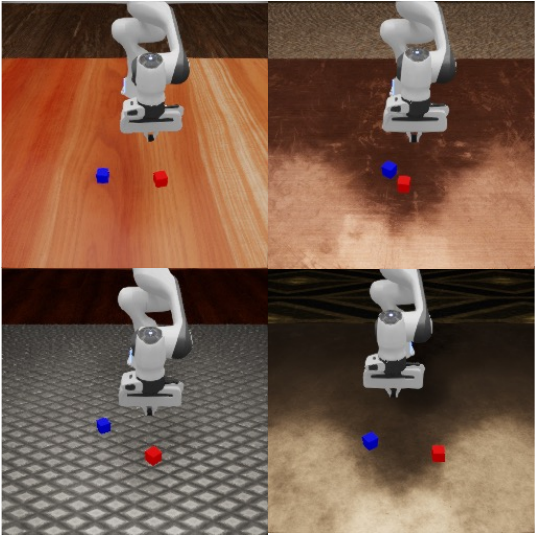}
        \caption{Level 1}
        \label{fig:stackcube-l1}
    \end{subfigure}\hfill
    \begin{subfigure}[b]{0.24\textwidth}
        \centering
        \includegraphics[width=\linewidth]{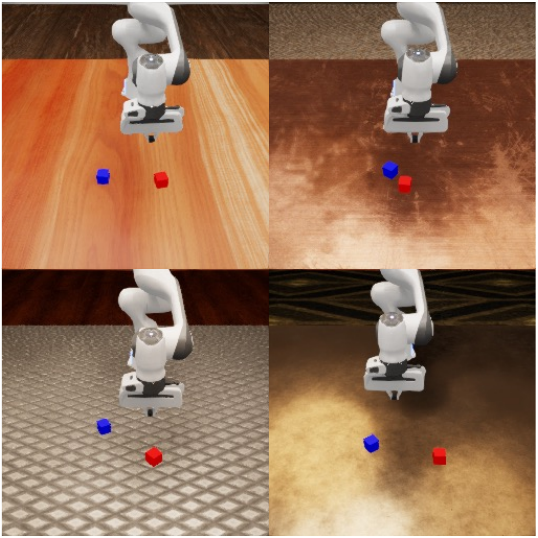}
        \caption{Level 2}
        \label{fig:stackcube-l2}
    \end{subfigure}\hfill
    \begin{subfigure}[b]{0.24\textwidth}
        \centering
    \includegraphics[width=\linewidth]{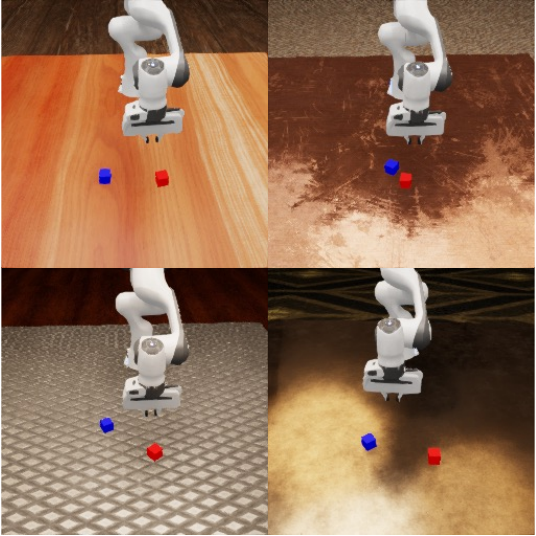}
        \caption{Level 3}
        \label{fig:stackcube-l3}
    \end{subfigure}

    \vspace{-0.5em}
\caption{
Visualization of the domain-randomization levels used in the robotics
manipulation experiments. Level~0 uses the canonical scene, Level~1
adds scene/material randomization, Level~2 further adds lighting
randomization, and Level~3 additionally introduces camera-pose
randomization. Levels~2--3 are used as held-out OOD evaluations.
}
    \label{fig:dr-levels}
\end{figure}

%% file: tables/image_restoration.tex
\begin{table*}[!tb]
\centering
\small
\caption{Performance comparison on CelebA-HQ inpainting at $64{\times}64$. We report PSNR, SSIM, and LPIPS on the $2$k-image test split across three mask families.}
\label{tab:image-restoration}
\begin{tabular}{l|ccc|ccc|c}
\toprule[1.5pt]
\textbf{Mask} & \multicolumn{3}{c|}{\textbf{Semi-supervised MOL (ours)}} & \multicolumn{3}{c|}{\textbf{Labeled-only MOL baseline}} & \\
              & PSNR\,$\uparrow$ & SSIM\,$\uparrow$ & LPIPS\,$\downarrow$ & PSNR\,$\uparrow$ & SSIM\,$\uparrow$ & LPIPS\,$\downarrow$ & $\Delta$PSNR \\
\midrule[1pt]
Alt.~Lines & \textbf{24.351} & \textbf{0.817} & \textbf{0.058} & 22.939 & 0.763 & 0.072 & $+1.41$ \\
Face       & \textbf{22.340} & \textbf{0.763} & \textbf{0.075} & 21.557 & 0.723 & 0.082 & $+0.78$ \\
Wide       & \textbf{21.610} & \textbf{0.754} & \textbf{0.078} & 20.988 & 0.714 & 0.084 & $+0.62$ \\
\bottomrule
\end{tabular}
\end{table*}

%% file: contents/celeba_visual.tex
\begin{figure}[t]
\centering
\setlength{\tabcolsep}{2pt}
\renewcommand{\arraystretch}{0.9}
\begin{tabular}{cccc}
\includegraphics[width=0.23\linewidth]{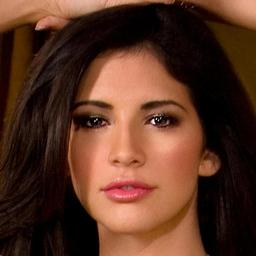} &
\includegraphics[width=0.23\linewidth]{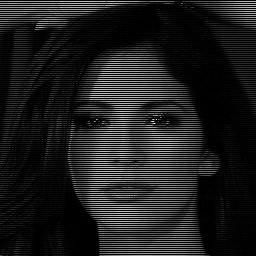} &
\includegraphics[width=0.23\linewidth]{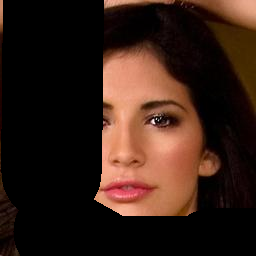} &
\includegraphics[width=0.23\linewidth]{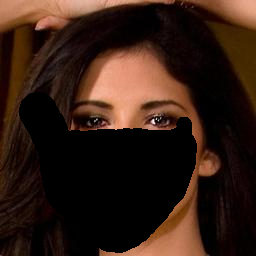}
\\[-1mm]
\small Clean target &
\small Alt. lines &
\small Wide mask & 
\small Face mask
\end{tabular}
\vspace{-1mm}
\caption{
Visualization of the CelebA-HQ inpainting objectives. 
Each objective corresponds to a different mask family, producing a distinct corrupted image for the same clean target.
}
\label{fig:inpainting-masks}
\vspace{-3mm}
\end{figure}

%% file: contents/experiment_details.tex
\section{Experiment Details}\label{app:exp-details}
This appendix collects the implementation details deferred from \Cref{sec:exp}.

\subsection{Robotic Manipulation}\label{app:exp:robot}

\paragraph{Hardware and observation/action space.}
We use a Franka Panda arm in front of a single front-facing $256{\times}256$ RGB camera. Observations and actions are both 9-dimensional joint-position vectors.

\paragraph{Domain randomization.}
We use three types of domain randomization. Scene and material
randomization changes the table, ground, and wall materials using
curated subsets of ARNOLD and NVIDIA vMaterials. Lighting randomization
uses either a distant light with randomized polar angle or a
randomly-sized cylinder-light array at a fixed height, with randomized
intensity and color temperature. Camera-pose randomization moves a subset
of cameras to side-facing viewpoints. For each domain-randomization
level $L \geq 1$, all enabled randomization factors are sampled jointly
at the beginning of each episode.


\paragraph{Models.}
All policies are CNN-based diffusion policies \citep{chi2025diffusion} with a DDPM \citep{ho2020denoising} conditional U-Net1D action head and a ResNet-18 visual encoder. Receding-horizon rollout uses $n_\text{obs}{=}3$ observation steps and $n_\text{act}{=}4$ action steps per inference, for a closed-loop horizon of $8$. Specialists use a lightweight U-Net (downsampling channels $[256,512,1024]$, $\approx 85$M parameters); the generalist doubles the U-Net width ($[512,1024,2048]$, $\approx 293$M parameters).

\paragraph{Data and pseudo-rollout protocol.}
Expert demonstrations are collected by motion-planning rollouts in Isaac
Sim. For each training variant, we start from a pool of 1000 expert
trajectories. To simulate the limited-label regime, we use only a small
labeled subset for specialist training: approximately 900 trajectories
per variant for \texttt{StackCube} and 100 trajectories per variant for
\texttt{PickCube}. The last 100 trajectories are held out
for evaluation and are never used for training. Each specialist then
performs closed-loop pseudo-rollouts from 5000 randomly sampled initial
states under a fixed randomization seed. We apply rejection sampling
using the simulator's binary success checker and retain up to 800
successful pseudo-trajectories per variant. The generalist is trained
jointly on the union of expert and filtered pseudo data across all four
training variants. The
multi-task baseline is identical except that it is trained only on the
labeled shards, without specialists or pseudo data.


\paragraph{Optimization.}
All models are trained with AdamW, learning rate $10^{-4}$, batch size $32$, seed $42$. The two methods are matched at $\approx 225$k gradient steps so the comparison is at equal compute.

\paragraph{Evaluation.}
For each task family, each method, and each DR level $\in\{0,1,2,3\}$ we roll out $100$ closed-loop trajectories from the held-out initial-state seeds. The metric is success rate over the rollout horizon, scored by the simulator's per-task success checker. The same final checkpoint is used for our method and for the baseline in every cell.

\subsection{Image Restoration}\label{app:exp:image}

\paragraph{Backbone.}
We use the Residual Denoising Diffusion Model (RDDM) with the
\texttt{pred\_res\_noise} objective and a two-branch \texttt{UnetRes}
architecture: one residual U-Net and one noise U-Net, with
\texttt{dim\_mults=(1,2,4,8)}. RDDM augments the standard diffusion
trajectory with a deterministic residual stream,
\[
    x_t = x_0 + \bar{\alpha}_t (x_{\mathrm{cond}} - x_0)
          + \bar{\beta}_t \epsilon,
    \qquad \epsilon \sim \mathcal{N}(0,I),
\]
where $x_{\mathrm{cond}}$ is the masked-image condition. The U-Net takes
the noisy state $x_t$ as input and predicts both a residual
$\hat r$ and noise $\hat \epsilon$, from which the clean image is
reconstructed as
\[
    \hat x_0
    =
    x_t - \bar{\alpha}_t \hat r - \bar{\beta}_t \hat \epsilon .
\]
Reverse sampling is initialized at $x_T = x_{\mathrm{cond}} + \epsilon$,
so the condition enters through the endpoint of the reverse trajectory
rather than by channel concatenation at every denoising step. Specialists
use base channel width \texttt{dim=64}, with approximately 290M parameters across the two
U-Net branches. The generalist uses \texttt{dim=128}, with approximately
1.16B parameters.


\paragraph{Data split.}
The 28k-image CelebA-HQ training set is partitioned with a fixed seed
into a labeled split $\mathcal{L}$ of 10k images and an unlabeled pool
$\mathcal{U}$ of 18k images. We use a separate 2k-image test split for
evaluation.

\paragraph{Specialist training.}
For each mask family, we train one specialist on the labeled split
$\mathcal{L}$ using random masks drawn from that family. Each specialist
is trained for 20k optimizer steps with batch size 64, gradient
accumulation 2, effective batch size 128, learning rate
$2 \times 10^{-4}$, and Adam.

\paragraph{Pseudo-sample generation.}
Each specialist generates one pseudo-target for every image in
$\mathcal{U}$, producing 18k pseudo-pairs per mask family. We retain all
pseudo-pairs without rejection filtering. Therefore, the ground-truth
targets from the unlabeled split are never accessed during training.

\paragraph{Generalist training and baseline.}
The generalist is trained from scratch on the union of the three
augmented per-task pools under equal-weight linear scalarization. We use
batch size 64, gradient accumulation 2, effective batch size 128,
learning rate $2 \times 10^{-4}$, Adam, and the same random seed as the
specialists. The matched-budget baseline uses the same training pipeline, but
samples uniformly across the three mask families from the labeled split
only. Thus, the only difference between the two methods is the training
data: our method uses 10k labeled pairs plus 18k pseudo-pairs per task,
whereas the baseline uses only 10k labeled pairs per task.

\paragraph{Checkpoint and evaluation protocol.}
We report both methods at 20k optimizer steps. We evaluate on the full 2k-image CelebA-HQ test split for all
three mask families. Sampling is deterministic, using 10 reverse-process
steps and a fixed sequential mask order. We report PSNR, SSIM, and LPIPS
with an AlexNet trunk, evaluating both methods at the same checkpoint.


